\def\iou{\text{IoU}}
\def\mAPH{$\text{mAPH}$}
\def\Conf{$\text{SI}_\text{c}$}
\def\Loc{$\text{SI}_\text{l}$}
\def\Ext{$\text{SI}_\text{e}$}
\def\Hea{$\text{SI}_\text{h}$}
\def\Second{Second\hspace*{-2pt}~\cite{yan2018second}}
\def\PPL{Pointpillar\hspace*{-2pt}~\cite{lang2019pointpillars}}
\def\CPPPL{CenterPoint$^*$\hspace*{-4pt}~\cite{yin2021center}}
\def\CP{CenterPoint\hspace*{-2pt}~\cite{yin2021center}}
\def\VoxelNext{VoxelNext\hspace*{-2pt}~\cite{chen2023voxelnext}}
\def\DSVT{DSVT\hspace*{-2pt}~\cite{wang2023dsvt}}
\def\VoxelRCNN{Voxel\,R-CNN\hspace*{-2pt}~\cite{deng2021voxel}}
\def\PartaNet{PartA2Net\hspace*{-2pt}~\cite{shi2019part}}
\def\PVRCNN{PV\,R-CNN\hspace*{-2pt}~\cite{shi2020pv}}
\def\PVRCNNPP{PV\,R-CNN$^+$\hspace*{-4pt}~\cite{shi2021pv}}
\def\TransFusion{TransFusion\dag\hspace*{-2pt}~\cite{bai2022transfusion}}
\def\Fir#1{\cellcolor{teal!50}#1}
\def\Sec#1{\cellcolor{teal!30}#1}
\def\improvea#1{{ (#1)}}
\def\improveb#1{{ \color[rgb]{0.27, 0.71, 0.45} (+#1)}}
\def\MetricS{SI}
\def\MetricL{Stability Index}
\begin{document}

\title{Towards Stable 3D Object Detection} 



\author{
Jiabao Wang\inst{1}\thanks{Equal contribution.}
\and Qiang Meng\inst{2}$^{*}$
\and  Guochao Liu\inst{2}
 \and  Liujiang Yan\inst{2}
 \and Ke Wang\inst{2}
 \and Ming-Ming Cheng\inst{1, 3}
 \and Qibin Hou\inst{1, 3}\thanks{Corresponding author.}
}
\authorrunning{F.~Author et al.}

\institute{
    VCIP, College of Computer Science, Nankai University \and
    KargoBot Inc., China \and 
    NKIARI, Shenzhen Futian
\\
\url{https://github.com/jbwang1997/StabilityIndex}}

\maketitle

\begin{abstract}

In autonomous driving, the temporal stability of 3D object detection greatly impacts the driving safety.
However, the detection stability cannot be accessed by existing metrics such as mAP and MOTA, and consequently is less explored by the community.
To bridge this gap, this work proposes \MetricL{} (\MetricS{}), a new metric that can comprehensively evaluate the stability of 3D detectors in terms of confidence, box localization, extent, and heading.
By benchmarking state-of-the-art object detectors on the Waymo Open Dataset, \MetricS{} reveals interesting properties of object stability that have not been previously discovered by other metrics.
To help models improve their stability, we further introduce a general and effective training strategy, called Prediction Consistency Learning (PCL).
PCL essentially encourages the prediction consistency of the same objects under different timestamps and augmentations, leading to enhanced detection stability. 
Furthermore, we examine the effectiveness of PCL with the widely-used CenterPoint, and achieve a remarkable \MetricS{} of 86.00 for vehicle class, surpassing the baseline by 5.48.
We hope our work could serve as a reliable baseline and draw the community's attention to this crucial issue in 3D object detection.

\keywords{3D Object Detection \and Temporal Stability}

\end{abstract}
\section{Introduction}\label{sec:intro}

\begin{figure}
    \centering
    \includegraphics[width=1\textwidth]{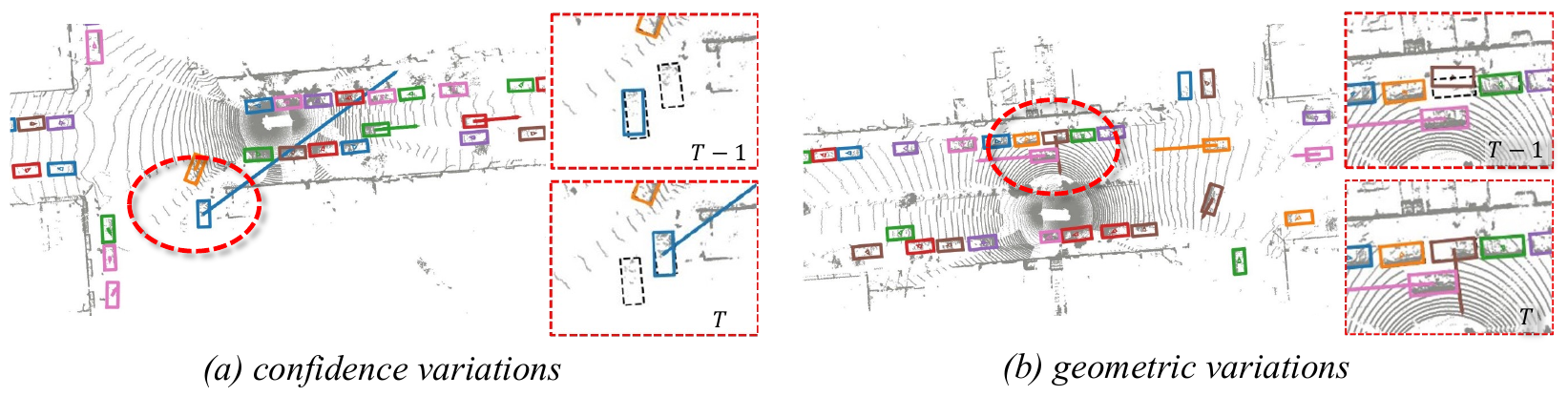}
    \captionof{figure}{
        Visualizations of potential safety threats caused by detection instability.
        On the left, confidence fluctuations lead to flickering boxes, which results in inaccurate object association and induces an abnormal velocity estimation.
        On the right, an intent of merging into traffic is erroneously forecast because of the shaking boxes, though the vehicle is stationary in fact.
        Here, dashed boxes represent the ground truths.
        Detection results are predicted by~\cite{yan2018second}, and object tracking is conducted with SimpleTrack~\cite{pang2021simpletrack}.
    }\label{fig:intro}
 \end{figure}

3D object detection aims to perceive objects of interest within the surrounding environment, utilizing data from diverse sources such as point clouds~\cite{zhou2018voxelnet, yan2018second, yin2021center, lang2019pointpillars, deng2021voxel, shi2019part}, camera images~\cite{li2022bevformer, wang2022detr3d}, multi-sensors~\cite{prakash2021multi, liu2023bevfusion,chen2017multi}, \etc.
Serving as a foundational component in autonomous driving, this task has attracted great attention from both academia and industry.
Numerous performant detectors~\cite{fan2021rangedet, bewley2020range, li2021lidar, zhang2022not, yang20203dssd, wang2023dsvt, wang2022detr3d, zhou2022centerformer} have been proposed recently, significantly advancing the development of 3D object detection.

Counterintuitively, it is rather common for highly performant detectors to exhibit instability. Sensor noise, model sensitivity, slight scene changes, and non-deterministic operators, all contribute to detection instability.
Despite great advancements, current state-of-the-art detectors predominantly emphasize improving \textit{single-shot} detection accuracy,
while often neglecting such \textit{temporal} stability.

Detection stability encompasses more than mere robustness; it extends to the broader context of ensuring human safety in autonomous driving. 
As exemplified in \cref{fig:intro}, unstable detections, on both confidence scores and bounding boxes, can result in abnormal velocity estimated by tracking.
These erroneous estimations may trigger false judgement on the behaviors of surrounding agents, potentially misleading the ego-vehicle to make improper or even hazardous decisions.
In addition, systematically complementing poor detection stability requires extra modules (\eg, Kalman filters~\cite{bishop2001introduction, wojke2017simple, bewley2016simple} with carefully and usually manually tuned parameters).
This not only increases system complexity and latency, but also necessitates tedious engineering efforts.
As a conclusion, enhancing detection stability is a crucial step towards safe and reliable autonomous driving.

To the best of our efforts, we find no prior work dedicating on detection stability for 3D object detectors.
One primary reason is the absence of an appropriate metric to quantify such stability.
Current metrics in measuring detection accuracy, such as  mAP~\cite{pascal-voc-2007},  usually overlook temporal information, which is fundamental for stability assessment. 
On the other hand, metrics designed for temporal object tracking (\eg,  MOTA and MOTP~\cite{bernardin2008evaluating}) are tailored to evaluate how well objects are tracked over time.
Trackers are designed to be robust with respect to detection noises.
A well-implemented tracking algorithm will certainly hide instabilities of upstream detectors.
As illustrated in \cref{fig:intro}(b), although the detector produced inconsistent yaw and positions across two frames, trackers can still associate the two boxes and fuse the well-behaving box extent information.

We argue a new metric is needed for detection stability. 
For this purpose, we initiate a comprehensive analysis of the task, identifying that an effective metric should exhibit four core properties:
\textit{1) Comprehensiveness}: The metric must take all detected attributes into account.
\textit{2) Homogeneity}: All attributes should be uniformly integrated into the metric.
\textit{3) Symmetry}: The metric should be consistent regardless of the input order.
\textit{4) Marginal Unimodality}: The metric value will never increase as the stability of any element deteriorates. 
Based on our analysis, we accordingly propose a novel metric called \MetricL{} (\MetricS{}), which evaluates stability by quantifying the temporal consistency in terms of the confidence score, box location, extent, and heading.
Through our meticulously designed schemes, the proposed \MetricS{} fully complies with all the aforementioned requirements, as demonstrated by our rigorous theoretical proofs.

On the large-scale Waymo Open Dataset (WOD)~\cite{sun2020scalability}, we thoroughly benchmark various popular 3D object detectors and observe that there is no evident correlation between existing metrics (\eg, mAP and MOTA) and our proposed stability metric \MetricS{}.
Furthermore, our experiments reveal that some effective tricks in object detection, like using more augmentations and multi-frame strategies, fail to yield many improvements in terms of stability. 

To this end, we additionally introduce a framework called Prediction Consistency Learning (PCL), which in essence penalizes prediction errors' discrepancies from the same objects under different timestamps and augmentations.
It's noteworthy that our PCL is a general framework applicable to all detectors, and it introduces no additional cost during inference.
Without bells and whistles, PCL 
boosts the \MetricS{} of \CP{} from 80.52 to an impressive 86.00 \MetricS{} on the vehicle class,
surpassing all state-of-the-art detectors.

We summarize our contributions as follows:
\begin{enumerate}
    \itemsep0em
    \item For the first time, we provide a comprehensive analysis of detection stability. 
    Subsequently, we introduce the \MetricL{} (\MetricS{}) metric, which uniformly evaluates and positively indicates the stability of all detection elements.
    Rigorous theoretical proofs are further presented to validate the efficacy of \MetricS{}.
    \item A general framework termed Prediction Consistency Learning (PCL) is proposed to boost detection stability.
    Extensive experiments on the Waymo Open Dataset unearth several intriguing insights of object stability as well as demonstrate the effectiveness of the PCL.
\end{enumerate}

\section{Related Work}\label{sec:related_work}

\subsection{3D Object Detection}
3D object detection, a fundamental building block in autonomous driving, focuses on accurately locating objects within a three-dimensional space.
Prior works in this domain can be broadly categorized based on input modalities.

Most existing LiDAR-based works~\cite{zhou2018voxelnet, yan2018second, yin2021center, lang2019pointpillars, deng2021voxel, shi2019part,fan2022embracing,dosovitskiy2020image, li2023pillarnext, zhou2022centerformer} transform non-uniform point clouds into regular 2D pillars or 3D voxels, and employ convolutions for efficient processing in later stages.
Beyond voxel-based methods, the task can also be accomplished using alternative point cloud representations, including the range-view~\cite{fan2021rangedet, bewley2020range, meyer2019lasernet, chai2021point, sun2021rsn}, point-view~\cite{chen2019fast, shi2019pointrcnn, li2021lidar, yang2018ipod, zhang2021varifocalnet, qi2018frustum, zhang2022not, yang20203dssd, shi2020point, qi2019deep, shi2020pa2}, and their combinations~\cite{sun2021rsn, shi2020pv}.
Several studies~\cite{sun2020scalability, liu2023flatformer, wang2023dsvt} introduce recently popular Transformer architectures and achieve remarkable detection accuracies.

Transformer architectures also demonstrate great success in transforming camera images into bird's-eye-view features.
Such algorithmic breakthrough paved the way for vision~\cite{li2022bevformer, wang2022detr3d} and fusion~\cite{prakash2021multi, liu2023bevfusion,chen2017multi} based 3D detection for self-driving vehicles.
It's noteworthy that our proposed metric and method are agnostic to input modalities, thus applicable to all 3D object detection methods.

\subsection{Related Metrics}
Properly measuring performances is crucial for any machine learning task, let alone 3D object detection.
The KITTI~\cite{Geiger2012CVPR} dataset plays a pioneering role in evaluating autonomous driving tasks, employing the well-established average precision (AP) as metric.
Waymo Open Dataset~\cite{sun2020scalability} further extends the metric into APH by accounting for heading errors.
In contrast, nuScenes~\cite{caesar2020nuscenes} questions the suitability of IOU-based metrics for vision-only methods, which usually come with large localization errors.
Therefore, a new metric called NDS is proposed to assess error-prone predictions by utilizing a thresholded 2D center distance.

Multi-object tracking (MOT), the downstream task of object detection, stands as another critical component for autonomous driving.
Bernardin and Stiefelhagen~\cite{bernardin2008evaluating} 
introduces metrics of MOTA and MOTP, where MOTA combines errors including False Negatives, False Positives, and Identity Switches, while MOTP focuses on how good sequences overlap with ground truths.
Weng \etal~\cite{weng20203d}
points out that both metrics do not take scores into account, and extends them into AMOTA and AMOTP by averaging scores across different recall levels.
In general, detection metrics disregard temporal relationships of detected boxes, whereas tracking metrics mainly focus on whether objects are correctly associated across frames.
Previous methods fall short in capturing detection stability across frames, which serves as the key motivation behind this work.



\section{Methodology}\label{sec:method}

In this section, we first comprehensively analyze the stability in 3D object detection.
Based on our analysis, we introduce a novel metric called \MetricL{}(\MetricS) and prove its key properties.
In the end, we introduce our Prediction Consistency Learning (PCL) to enhance detection stability.

\subsection{Notations}\label{sec:method_note}

A valid prediction $P$ from 3D object detectors comprises a confidence score $c$ and a 3D bounding box defined as $B = (x, y, z, l, w, h, \theta)$. 
Here, $(x, y, z)$ are the coordinates of the box center, while $(l, w, h)$ denote the box extent, and $\theta$ represents the yaw angle. \textit{Elements} and \textit{attributes} are used interchangeably to refer to the box properties.


Given two boxes $B_1$ and $B_2$, we define a transformation function $T_{B_1 \rightarrow B_2}(\cdot)$ which represents the mapping from $B_1$ to $B_2$.
Consequently, we can apply this customized transformation to an arbitrary box $B$, resulting in $\hat{B} = T_{B_1 \rightarrow B_2}(B)$, where
\begin{equation*}
    \footnotesize
    \begin{split}
        \begin{pmatrix}
            \hat{x} \\
            \hat{y} \\
            \hat{z}
        \end{pmatrix} & = 
        \begin{pmatrix}
                \cos(\theta_2 - \theta_1) & \sin(\theta_2 - \theta_1) & 0 \\
                -\sin(\theta_2 - \theta_1) & \cos(\theta_2 - \theta_1) & 0 \\
                0                & 0                & 1
        \end{pmatrix}
        \begin{pmatrix}
             x - x_1 \\
             y - y_1 \\
             z - z_1
        \end{pmatrix}
        +  \begin{pmatrix}
            x_2 \\
            y_2 \\
            z_2
        \end{pmatrix}
        , \\    
        \begin{pmatrix}
            \hat{l} \\ \hat{w} \\ \hat{h}
        \end{pmatrix} & =
        \begin{pmatrix}
            l_2 / l_1 \times l \\
            w_2 / w_1 \times w \\
            h_2 / h_1 \times h \\
        \end{pmatrix}, \qquad
       \hat\theta = \theta + (\theta_2 - \theta_1). \\
    \end{split}
\end{equation*}
In essence, this operation transforms discrepancies between $B_1$ and $B$ into those between $B_2$ and $\hat{B}$.

\begin{figure}[!t]
    \centering
    \includegraphics[width=0.9\textwidth]{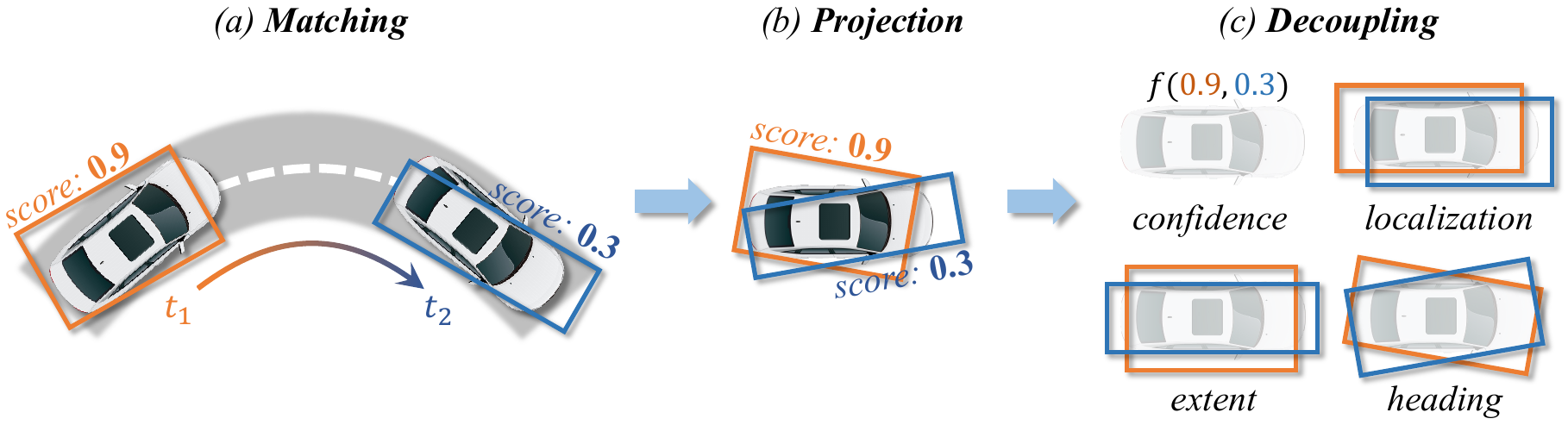}
    \caption{
        The procedure of computing \MetricL{}.
        The \textcolor{orange}{orange} and \textcolor{blue}{blue} 
        boxes represent the best matches between the predictions and the ground-truths searched by the Hungarian algorithm.
        These boxes are subsequently associated across frames using their object ID labels.
        After projecting predictions into a pre-built pivot box, \MetricS{} decouples them into element-wise computations, which are then aggregated for the final assessment of detection stability.
    }
    \label{fig:calculation}
\end{figure}

\subsection{Analysis of Detection Stability}\label{sec:method_analysis}

In the context of autonomous driving, variations in any of the predicted attributes from detectors may result in hazardous situations.
For instance, fluctuations in box locations and heading may lead to inaccurate velocity estimations, potentially leading to unsafe interaction decisions.
Unstable confidence scores may cause flickering predictions and hinder the autonomous driving system from accurately tracking objects.
Moreover, erratic predictions of a nearby vehicle's size may prompt the ego-vehicle to take improper evasive maneuvering.
In summary, the stability of all detection elements must be comprehensively taken into account to ensure the safety of autonomous driving.

A naive approach for assessing stability is to sum variations of all these elements, essentially extending 
Zhang and Wang~\cite{zhang2016stability}
in 2D video detection.
However, these variations should not be directly added as these detection attributes represent different physical properties of the object.
Moreover, element variations are agnostic of the object properties and therefore fail to capture the hazard levels caused by unstable predictions.
For example, large jitters on the yaw angle can lead to rapid changes in object behaviors for large-volume objects (\eg, vehicles).
In contrast, pedestrians suffer more from the instability of center offsets and box dimensions rather than headings.
Therefore, how to standardize these elements into a single and consistent unit remains a challenging problem.

One possible way to unify physical units of the box-related elements is to adopt the Intersection-over-Union (\iou{}) as used in the mAP metric.
To achieve this, we begin by assessing detection stability at the smallest unit, involving a single object at two timestamps $t_1, t_2$ as illustrated in \cref{fig:calculation}.
Denote the ground-truth boxes as $B_1^g, B_2^g$ and the predictions are  $P_i = \{c_i, B_i\}, i \in \{1, 2\}$.
The \iou{} between the two predicted 3D boxes cannot be directly computed due to object movement.
In contrast, our pre-defined operation enables the measurement by projecting boxes onto one of the ground-truths.
For example, we can project $B_1$ into the second ground-truth as $\hat B_1 = T_{B_1^g \rightarrow B_2^g}(B_1)$ and compute $\iou(\hat B_1, B_2)$.
This \iou{} can reflect the detection stability to some extent.
Nevertheless, this measurement has two significant flaws (proved by Properties 1 and 2 in the supplementary):
(1) \iou{} varies with the order of frames, \ie, $\iou(\hat B_1, B_2)\neq \iou(B_1, \hat B_2)$.
(2) \iou{} is not marginal unimodal. 
In other words, enhancing the stability of an element can, at times, lead to a poorer \iou{} value.
Both flaws prohibit \iou{} from serving as an effective assessment of detection stability.

Through the detailed analysis of stability and exploration of potential solutions, 
we identify four key properties that an effective metric should meet:
\begin{itemize}
    \itemsep0em
    \item \textit{\textbf{Comprehensiveness}}: 
    The metric should comprehensively reflect influences from all relevant detection elements.
    \item \textit{\textbf{Homogeneity}}:
    Influences caused by all elements should be well-processed into unified physical units.
    \item \textit{\textbf{Symmetry}}:
    The metric values should be consistent when applied to both forward and reverse inputs.
    \item \textit{\textbf{Marginal Unimodality}}:
    For each element with others fixed, the metric should be unimodal \wrt its stability.
\end{itemize}

\subsection{\MetricL}\label{sec:method_metric}
While the \iou{} is a promising starting point, meeting the four properties demands careful designs to effectively integrate the confidence score and address the asymmetry and non-unimodality design flows.
To this end, we introduce schemes of projection with pivot boxes, element decoupling, and stability aggregation, as illustrated in \cref{fig:calculation}. 
Ultimately, we assess the stability of object pairs in consecutive frames and denote the metric as \MetricL{} (\MetricS{}).

\noindent\textbf{Projection with pivot box.}
Since projections onto either of ground-truths can introduce the asymmetry issue, we therefore propose to cast predictions onto an intermediary pivot box $B^p=(0, 0, 0, l^p, w^p, h^p, 0)$.
Here, we leverage geometric averages $l^p=\sqrt{l^g_1l^g_2}$, $w^p=\sqrt{w^g_1w^g_2}$, $h^p=\sqrt{h^g_1h^g_2}$ to ensure that the pivot box's dimensions closely match those of the ground-truths $B^g_1, B^g_2$.
This is crucial for accurate stability measurements, as objects of different sizes are affected by fluctuations to varying degrees.
Finally, we have $\hat B_1 = T_{B^g_1 \rightarrow B^p}(B_1), \hat B_2 = T_{B^g_2 \rightarrow B^p}(B_2)$ as indicated in \cref{fig:calculation}(b).

\noindent\textbf{Element decoupling.}
For the marginal unimodality, the metric must exhibit the following two characteristics when all elements are fixed except for one arbitrary element:
(1) The metric reaches the peak value if and only if the element is stable. 
(2) The metric value is monotonically non-decreasing as the stability of an element deteriorates in any continuous direction.
We recognize that \iou{} fails to meet these characteristics due to the mutual interference between elements, and therefore propose to decouple them into four distinct parts as shown in \cref{fig:calculation}(c).
For instance, to measure the localization stability, we make elements except for box centers in $\hat B_1, \hat B_2$ to be identical.
Specifically, we replace them with those from the pivot box, resulting in $\hat B^{loc}_i = (\hat x_i, \hat y_i, \hat z_i, l^p, w^p, h^p, 0), i\in \{1, 2\}$.
Similarly, we can have $\hat B^{ext}_i, \hat B^{hdg}_i, i\in \{1, 2\}$ for the box extent and heading.
Then, we assess the stability in box localization and extent by the two equations:
\begin{equation}    
    \small
    \MetricS{}_{l} = \iou(\hat B^{loc}_1, \hat B^{loc}_2),\quad
    \MetricS{}_{e}  = \iou(\hat B^{ext}_1, \hat B^{ext}_2).
\end{equation}

Directly employing $\iou{}(\hat B^{hdg}_1, \hat B^{hdg}_2)$, however, violates the unimodality if the angle difference between $\hat\theta_1$ and $\hat\theta_2$ exceeds $\pi/4$ (proved in Lemma 3 in our supplementary).
Therefore, we regard this case as a failure and explicitly set the metric to be 0. 
The stability in box heading finally is
\begin{equation}    
    \small
    \MetricS{}_{h} = \left\{
    \begin{split}
        &0, \qquad\quad \text{if } |\hat \theta_1-\hat\theta_2| \geq \pi/4,\\
        &\iou(\hat B^{hdg}_1, \hat B^{hdg}_2),\  \text{otherwise.}\\
    \end{split}
    \right.
\end{equation}

The stability in confidence can be captured by the difference between the scores $c_1, c_2$, \ie, using $1 - |c_1 - c_2|$.
A remaining issue is that this function is vulnerable to intrinsic confidence scales of object detectors.
For example, if all scores are divided by a scaling factor, the detection performance and stability should remain unaffected.
However, the value of $1-|c_1-c_2|$ would increase, leading to an inaccurate measurement of stability.
To address this issue, we calculate 99\% and 1\% percentile of all confidences as $c^{0.99}$ and $c^{0.01}$.
The confidence stability is then calibrated by 
\begin{equation}
    \small
    \MetricS{}_{c} = \max\left(0, 1 - |c_1 - c_2| / (c^{0.99}- c^{0.01})\right).
\end{equation}

\noindent\textbf{Stability aggregation.}
In the last step, we aggregate stability from all components using the following formulation:
\begin{equation}
    \small
    SI = SI_c \times (SI_l + SI_e + SI_h)/3.
\end{equation}
Here, $\MetricS{}_c\in[0, 1]$ is treated as the weight of the box stability.
$\MetricS{}_l, \MetricS{}_e, \MetricS{}_h$ can be averaged thanks to the same unit of \iou{}.
In the end, \MetricS{} successfully satisfies the four properties of a valid stability evaluator according to \cref{prop:1,prop:2}.
Detailed analyses and theoretical proofs are available in our supplementary.

\begin{lemma}
    \MetricS{} is a symmetric metric which uniformly assesses all elements' influences on the detection stability. 
    \label{prop:1}
\end{lemma}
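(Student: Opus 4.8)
The plan is to verify the two assertions of Lemma~\ref{prop:1} — that \MetricS{} is \emph{symmetric} and that it \emph{uniformly} assesses all elements — by unwinding the definitions of the projection, element decoupling, and aggregation steps, and checking each claim against the four properties listed in \cref{sec:method_analysis}. For \emph{symmetry}, I would argue that swapping the two timestamps $t_1 \leftrightarrow t_2$ leaves every ingredient invariant. The key observation is that the pivot box $B^p = (0,0,0,\sqrt{l^g_1 l^g_2},\sqrt{w^g_1 w^g_2},\sqrt{h^g_1 h^g_2},0)$ is itself symmetric in the indices $1,2$, because the geometric mean $\sqrt{ab} = \sqrt{ba}$. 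Hence the set of projected boxes $\{\hat B_1, \hat B_2\}$ is unchanged as an unordered pair, and so are the decoupled boxes $\{\hat B^{loc}_i\}$, $\{\hat B^{ext}_i\}$, $\{\hat B^{hdg}_i\}$. Then I would use the symmetry of \iou{} (i.e. $\iou(A,B) = \iou(B,A)$) to conclude $\MetricS{}_l, \MetricS{}_e$ are symmetric; for $\MetricS{}_h$ I additionally note that the guard condition $|\hat\theta_1 - \hat\theta_2| \geq \pi/4$ is symmetric in the two angles, so the piecewise definition respects the swap. Finally $\MetricS{}_c = \max(0, 1 - |c_1-c_2|/(c^{0.99}-c^{0.01}))$ is symmetric since $|c_1-c_2| = |c_2-c_1|$ and the percentiles are computed over the whole dataset independently of the pair ordering. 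Combining through the aggregation formula $SI = SI_c \times (SI_l + SI_e + SI_h)/3$, which is a fixed function of the four symmetric quantities, yields symmetry of \MetricS{}.

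For the claim that \MetricS{} \emph{uniformly assesses all elements' influences}, I would connect this to the \emph{Comprehensiveness} and \emph{Homogeneity} properties. Comprehensiveness: I would trace through the decoupling construction to show that each of the four attribute groups — confidence $c$, center $(x,y,z)$, extent $(l,w,h)$, heading $\theta$ — enters \MetricS{} nontrivially: $\MetricS{}_c$ depends only on the confidences, $\MetricS{}_l$ only on the projected centers (all other slots fixed to pivot values), $\MetricS{}_e$ only on the projected extents, $\MetricS{}_h$ only on the projected headings, and the aggregation formula gives each a nonzero coefficient (the weight $\MetricS{}_c$ and the $1/3$ factors), so no attribute is ignored. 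Homogeneity: I would point out that $\MetricS{}_l, \MetricS{}_e, \MetricS{}_h$ are all IoU values of 3D boxes — dimensionless quantities on a common $[0,1]$ scale obtained by the same geometric operation on a shared pivot — so averaging them is meaningful, while $\MetricS{}_c$ is likewise normalized to $[0,1]$ via the percentile calibration and is used as a multiplicative weight, consistent with the convention that confidence modulates the reliability of the box. This is exactly what ``uniformly'' is meant to capture: all box elements are put on the IoU scale before being combined.

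The main obstacle I anticipate is making the ``uniform/comprehensive'' half precise, since unlike ``symmetric'' it is not a crisp mathematical predicate. I would address this by reading off the definitions of Comprehensiveness and Homogeneity from \cref{sec:method_analysis} and treating the lemma as asserting precisely that \MetricS{} satisfies those two; then the proof becomes the bookkeeping above. A secondary subtlety is the confidence calibration: strictly, $c^{0.99}$ and $c^{0.01}$ are dataset-level statistics, so \MetricS{} for a given pair is well-defined only relative to a fixed evaluation set; I would state this dependence explicitly and note it is order-independent (a set, not a sequence), which is all that symmetry requires. I would also flag, without belaboring it, that the deeper properties — \emph{Marginal Unimodality} of $\MetricS{}_l, \MetricS{}_e, \MetricS{}_h$ and the necessity of the $\pi/4$ heading guard — are the content of the supplementary's Properties~1--2 and Lemma~3, and are invoked here only to justify the piecewise form of $\MetricS{}_h$; the present lemma does not re-prove them.
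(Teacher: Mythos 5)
Your proposal is correct and takes essentially the same approach as the paper's own proof, which likewise reduces symmetry to (i) the absolute value in $\text{SI}_c$ and (ii) the fact that projection onto a common pivot box makes each IoU term order-invariant, and handles the ``uniform assessment'' half by observing that all elements are consolidated onto the shared IoU scale. You are, if anything, more careful than the paper at two points: you explicitly note that the pivot box is symmetric \emph{because} geometric means commute (the paper leaves this implicit), and you flag that $c^{0.99}, c^{0.01}$ are dataset-level statistics so that $\text{SI}_c$ is well-defined only relative to a fixed evaluation set but remains order-independent — a subtlety the paper does not raise.
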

\begin{lemma}
    \MetricS{} is marginal unimodality \wrt all elements. 
    The maximum value of 1 is reached if and only if the detection is perfectly stable across frames.
    \label{prop:2}
\end{lemma}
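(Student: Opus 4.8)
The plan is to reduce the claim about $\MetricS{}$ to four independent single-component claims, exploiting the separable structure $\MetricS{}=\MetricS{}_c\cdot(\MetricS{}_l+\MetricS{}_e+\MetricS{}_h)/3$. After the projection-with-pivot and element-decoupling steps, each detected element enters exactly one of the four components: the confidence scores affect only $\MetricS{}_c$, while the (projected) box centers, extents and heading affect only $\MetricS{}_l$, $\MetricS{}_e$ and $\MetricS{}_h$ respectively. Hence, with every element but one held fixed, $\MetricS{}$ is either a fixed nonnegative multiple of $\MetricS{}_c$ (the confidence case) or of the form $(\MetricS{}_c/3)\cdot(\text{the one varying IoU term})+\text{const}$ with nonnegative slope and intercept (a box case), together with the degenerate subcase $\MetricS{}_c=0$ in which $\MetricS{}\equiv0$. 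Since adding a constant and multiplying by a nonnegative constant preserve both unimodality and the location of the maximum, it suffices to prove that each of $\MetricS{}_c,\MetricS{}_l,\MetricS{}_e,\MetricS{}_h$ is, marginally, unimodal with its maximal value $1$ attained exactly at the stable configuration of its element. I would also record once that each projection $T_{B^g_i\rightarrow B^p}$ acts on the relevant parameters as an affine bijection --- a rigid motion on the centers, a positive diagonal scaling on the extents, a shift on the heading --- so it carries rays to rays and the stable configuration to the stable configuration, letting us argue entirely in the pivot frame.

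Two of the four components are immediate. For $\MetricS{}_c=\max(0,1-|c_1-c_2|/D)$ with $D=c^{0.99}-c^{0.01}>0$ a fixed positive constant, $1-|c_1-c_2|/D$ is a tent in $c_1-c_2$ that equals $1$ iff $c_1=c_2$ and strictly decreases in $|c_1-c_2|$, a shape that truncation at $0$ preserves. For $\MetricS{}_l=\iou(\hat B^{loc}_1,\hat B^{loc}_2)$ the two boxes are axis-aligned, share the extent $(d_1,d_2,d_3):=(l^p,w^p,h^p)$, and differ only by a translation $t$. The key step is the closed form $\MetricS{}_l=I(t)/(2V-I(t))$ with $V=d_1d_2d_3$ and intersection volume $I(t)=\prod_k\max(0,d_k-|t_k|)$: the scalar map $z\mapsto z/(2V-z)$ is strictly increasing on $[0,V]$, and along any ray $t=\lambda u$ each factor of $I$ is nonnegative and non-increasing in $|\lambda|$, hence so are $I$ and $\MetricS{}_l$; moreover $\MetricS{}_l=1$ iff $I=V$ iff $t=0$, i.e.\ iff the projected centers coincide.

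The extent component carries the real work. For $\MetricS{}_e=\iou(\hat B^{ext}_1,\hat B^{ext}_2)$ the two boxes are concentric and axis-aligned; writing $r_1,r_2,r_3$ for the coordinatewise ratios of the projected extents, I would first derive $\MetricS{}_e=(R+1/m-1)^{-1}$ with $R=\prod_k\max(r_k,1)\ge1$ and $m=\prod_k\min(r_k,1)\le1$, so $\MetricS{}_e$ is strictly decreasing in $R$ and strictly increasing in $m$. Parametrising a ray from the stable point as $r_k(\lambda)=1+\lambda\alpha_k$ with $\alpha\neq0$: for $\lambda\ge0$ the surviving factors of $R$ are exactly the increasing $1+\lambda\alpha_k$ (those with $\alpha_k>0$) and the surviving factors of $1/m$ are exactly the increasing $1/(1+\lambda\alpha_k)$ (those with $\alpha_k<0$), so $R+1/m$ is non-decreasing, and strictly so since $\alpha\neq0$; the $\lambda\le0$ side is identical after the substitution $\lambda\mapsto-\lambda$ and $\alpha\mapsto-\alpha$. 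Hence $\MetricS{}_e$ is non-increasing as $|\lambda|$ grows and equals $1$ iff $R=m=1$ iff all $r_k=1$. For $\MetricS{}_h$, rotation about the vertical axis leaves $z$ and $h$ untouched, so $\iou(\hat B^{hdg}_1,\hat B^{hdg}_2)$ is the overlap ratio of two concentric $l^p\times w^p$ rectangles at relative angle $\alpha=\hat\theta_1-\hat\theta_2$; by Lemma~3 of the supplement this ratio equals $1$ at $\alpha=0$ and is strictly decreasing in $|\alpha|$ on $[0,\pi/4)$, and the explicit clip to $0$ for $|\alpha|\ge\pi/4$ keeps $\MetricS{}_h$ non-increasing in $|\alpha|$ over the whole range, with value $1$ iff $\hat\theta_1=\hat\theta_2$ (headings reduced modulo $\pi$).

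Combining the four component statements with the reduction of the first paragraph yields the marginal unimodality of $\MetricS{}$. For the second assertion, $\MetricS{}\le\MetricS{}_c\le1$ and $\MetricS{}\le(\MetricS{}_l+\MetricS{}_e+\MetricS{}_h)/3\le1$, so $\MetricS{}=1$ holds iff $\MetricS{}_c=\MetricS{}_l=\MetricS{}_e=\MetricS{}_h=1$, which by the characterizations above happens exactly when the confidence, localization, extent and heading are all stable, i.e.\ when the detection is perfectly stable across the two frames; conversely all four components equal $1$ in that case. I expect the extent component (and, to a lesser degree, the localization one) to be the main obstacle: a naive coordinatewise argument does not by itself give unimodality along \emph{arbitrary} rays in the multi-dimensional center/extent parameters, and the crux is to first rewrite the IoU in a single-scalar-monotone form --- $I/(2V-I)$ for translations and $(R+1/m-1)^{-1}$ for scalings --- so that ray-monotonicity reduces to elementary per-coordinate behaviour. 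The heading component is treated separately precisely because its rectangle-overlap analogue is monotone only below the $\pi/4$ threshold, which is exactly why that regime is clipped to $0$.
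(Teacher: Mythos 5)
Your proposal is correct and follows the same core strategy as the paper: decouple $\MetricS$ into $\MetricS_c$, $\MetricS_l$, $\MetricS_e$, $\MetricS_h$ after projecting into the pivot frame, prove each sub-indicator is unimodal about its stable configuration, and combine via nonnegativity together with the product/sum aggregation to get marginal unimodality of $\MetricS$ and the ``$=1$ iff perfectly stable'' characterization. Both you and the paper lean on the same heading lemma for $\MetricS_h$, which the paper itself only supports empirically (curve plots, not an analytic proof) --- you cite it as given, which matches the paper. The genuine refinement in your proof is in the vector-valued box components. The paper's supplementary lemmas on center and extent establish only coordinate-wise unimodality (fix $y,z$, show the IoU is unimodal in $x$; similarly for extent ratios), while the paper's stated property (``as the stability of an element deteriorates in any continuous direction'') really asks for ray-monotonicity. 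You prove that stronger form directly: for the center, each factor of $I(t)=\prod_k\max(0,d_k-|t_k|)$ is non-increasing in $|\lambda|$ along any ray $t=\lambda u$, and the outer map $z\mapsto z/(2V-z)$ is monotone; for the extent, your rewrite $\MetricS_e=(R+1/m-1)^{-1}$ with $R=\prod_k\max(r_k,1)$ and $m=\prod_k\min(r_k,1)$ is a cleaner algebraic form than the paper's per-case expansion of $1/F$, and it makes the ray argument elementary. So this is essentially the paper's route with a small gap closed and a more transparent treatment of the extent case.
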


Our previous discussions focus on the smallest set, consisting of a single object at two consecutive timestamps. 
To assess \MetricS{} for large-scale benchmarks, we begin by pairing each ground-truth with a prediction using the Hungarian algorithm.
With the labeled object IDs, we segment the evaluation into calculating \MetricS{} for numerous smallest sets.
The final result is simply the average of all values.
More details like the handling of corner cases are presented in the supplementary.

\subsection{Prediction Consistency Learning}\label{sec:method_loss}

\begin{figure}[!t]
    \centering
    \includegraphics[width=1\textwidth]{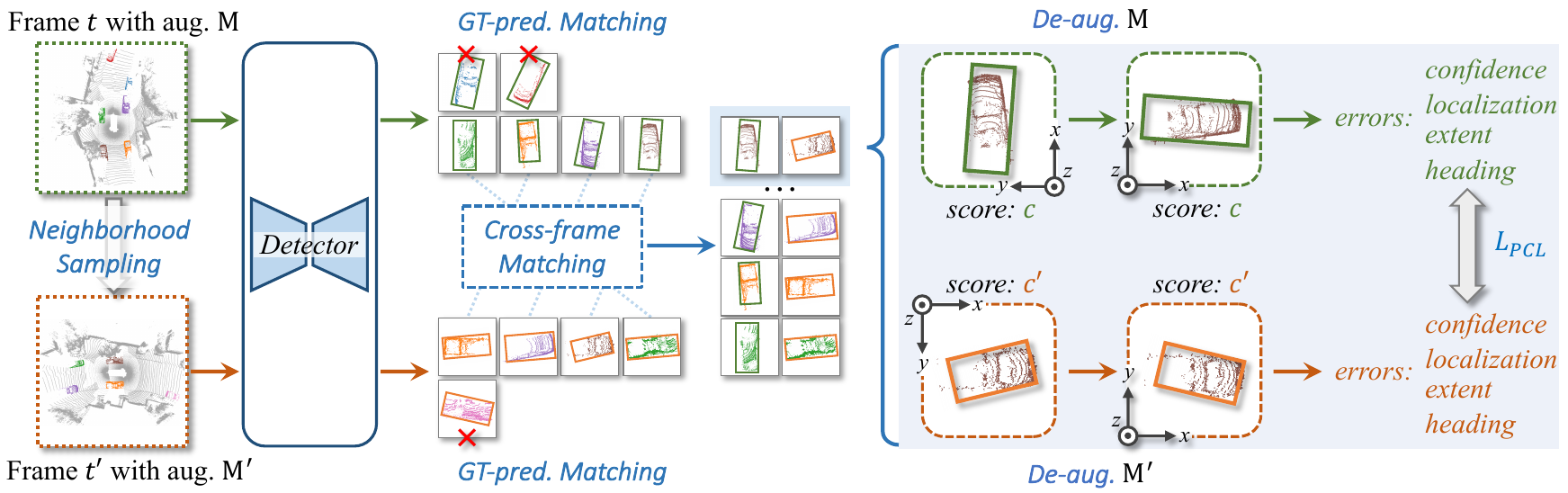}
    \caption{
        The pipeline of the proposed Prediction Consistency Learning (PCL).
        In each iteration, PCL samples a pair of frames at neighboring timestamps $t$ and $t'$, and applies augmentations $\mathbf{M}$ and $\mathbf{M}'$ to the paired samples.
        GT-prediction matching and cross-frame matching then collaboratively associate the detector's predictions from the same objects between the two frames.
        After the de-augmentation procedure, PCL calculates the prediction errors in terms of confidence, localization, extent, and heading, which are defined in the object self-coordinate system.
        Finally, PCL penalizes the error disparities among all prediction pairs to enforce the temporal consistency.
        In the figure, pred. and aug. represent prediction and augmentation, respectively.
    }
    \label{fig:structure}

\end{figure}

Beyond the design of the metric, we further attempt to boost the detection stability of 3D object detectors.
For this purpose, we introduce a general and effective training strategy named Prediction Consistency Learning (PCL), as illustrated in~\cref{fig:structure}.
Our PCL is built on the core idea of encouraging prediction consistency across frames under various augmentations and timestamps.
It consists of four key stages: neighborhood sampling, prediction pairing, de-augmentation, and prediction consistency loss.

\noindent\textbf{Neighborhood sampling.}
For each frame $F$ with timestamp $t$, we begin by uniformly sampling an integer $\Delta t$ from the range $[-n, n]$, where $n$ is a pre-defined parameter.
Subsequently, we get the frame $F'$ at timestamp $t+\Delta t$ and bundle $F, F'$ as a pair-wise input for the network.
The frames are further augmented separately by random flipping, rotation, and scaling.
We record the augmentations into matrices $\mathbf{M}$ and $\mathbf{M}'$, where $\mathbf{M}$ can be described as follows:
\begin{equation}
    \small
    \mathbf{M}\!=\!
    \begin{pmatrix}
        i_x & 0 & 0 \\
        0 & i_y & 0 \\
        0 & 0 & 1
    \end{pmatrix} \!
    \begin{pmatrix}
        \cos(\alpha) & \sin(\alpha) & 0 \\
        -\sin(\alpha) &  \cos(\alpha)  & 0 \\
        0                    & 0                     & 1
    \end{pmatrix} \cdot s.
\end{equation}
Here, $i_x$ and $i_y$ indicate whether the frames are x and y direction flipped, with -1 meaning the corresponding flipping occurs and 1 otherwise.
$\alpha$ is the angle applied by random rotation, and $s$ denotes the factor for random scaling.

\noindent\textbf{Prediction pairing.}
After the detector generating predictions from paired samples, our next step is to gather the corresponding predictions for comparisons.
We first perform the GT-prediction matching to assign each ground-truth box with the best-matched prediction, which can be accomplished by the Hungarian algorithm or any other rational method.
Subsequently, cross-frame matching associates predictions between two frames by corresponding object IDs and creates prediction pairs for later comparisons.

\noindent\textbf{De-augmentation.} 
Data augmentation used during training can largely alter the patterns of detection errors, impeding fair comparisons of predictions in each pair.
For example, random scaling can scale up the errors in box locations and extents, while random flipping may change the error direction.
Therefore, we apply a de-augmentation step on each prediction to eliminate the influences of augmentations.
For a prediction $P = \{c, x, y, z, l, w, h, \theta\}$, we recover it into $\bar P = \{\bar c, \bar x, \bar y, \bar z, \bar l, \bar w, \bar h, \bar \theta\}$ with the corresponding  $\mathbf{M}$:
\begin{equation}
    \small
    \begin{cases}
        \bar c = c, \\
        (\bar x, \bar y, \bar z)^T = \mathbf{M}^{-1} (x, y, z)^T, \\
        (\bar l, \bar w, \bar h)^T = (l, w, h)^T /s, \\
        \bar \theta = i_x \cdot i_y \cdot (\theta - \alpha).
    \end{cases}
\end{equation}

\noindent\textbf{Prediction consistency loss.}
Before introducing the consistency loss, we first compute prediction errors for a de-augmented prediction $\bar P$ with respect to the ground-truth box $B^g$.
We define the error for confidence as $e_c = 1 - \bar c$.
Prediction errors in box localization, extent, and heading are computed in the object's ego-coordinate system.
Specifically, the error for box center is calculated by
\begin{equation}
    \small
    \mathbf{e}_l\!=\! 
    \begin{pmatrix}
            \cos\theta^g  & \sin\theta^g & 0 \\
            -\sin\theta^g & \cos\theta^g  & 0 \\
            0 & 0 & 1
    \end{pmatrix}\!
    \begin{pmatrix}
        \bar x - x^g \\
        \bar y - y^g \\
        \bar z - z^g \\
    \end{pmatrix}.    
\end{equation}    
The prediction error for the box extent is formulated as
\begin{equation}
    \small
    \mathbf{e}_e = \left(\bar l/ l^g, \bar w/ w^g, \bar h/h^g\right)^T.
\end{equation}
In the end, the error $\mathbf{e}_h$ for box heading is encoded into trigonometric vectors:
\begin{equation}
    \small
    \mathbf{e}_h = \left(\sin(\bar \theta - \theta^g), \cos(\bar \theta - \theta^g)\right)^T.
\end{equation}

Our final step is to encourage each prediction pair to reveal similar patterns in terms of prediction errors.
Thereby, we collect the pair-wise errors $\{e_{c, i}, e_{c, i}'\}$, $\{\mathbf{e}_{l, i}$, $\mathbf{e}_{l, i}'\}$, $\{\mathbf{e}_{e, i}$, $\mathbf{e}_{e, i}'\}$, and $\{\mathbf{e}_{h, i}, \mathbf{e}_{h, i}'\}$ for $i\in\{1, 2, \cdots, N\}$,
where $N$ is the number of successfully associated objects between frames $F$ and $F'$.
In the end, our prediction consistency loss is:
\begin{equation}
    \small
    \begin{split}
        L_{PCL} = & \frac{1}{N} \sum_{i=1}^N \left( w_1\cdot \text{MSE}(e_{c, i}, e_{c, i}') + w_2\cdot L_1(\mathbf{e}_{l, i}, \mathbf{e}_{l, i}')
            \right. \\
        & \left.+ w_3\cdot L_1(\mathbf{e}_{e, i}, \mathbf{e}_{e, i}') +  w_4 \cdot L_1(\mathbf{e}_{h, i}, \mathbf{e}_{h, i}')\right).
    \end{split}    
\end{equation}
Here $w_1$, $w_2$, $w_3$, and $w_4$ are weights to balance different parts in our loss, which being 1 if not specified.
MSE and $L_1$ are losses of mean square error and $L_1$ distance, respectively.
Both original detection losses and our prediction consistency loss are leveraged to train the object detectors.

\section{Experiments}\label{sec:exp}

\subsection{Benchmark on the Waymo Open Dataset}\label{sec:benckmark}

\begin{table}[t]
    \centering
    \caption{
        Benchmarks on Waymo Open Dataset.
        Models are sorted based on \mAPH{} on the class vehicle.
        We use two different intensities of colors to highlight the highest and second-highest results in each column.
        ``\dag'' denotes the model is not LiDAR-based only.
        \CPPPL \ represents the pillar version of CenterPoint.
    }\label{tab:benchmark}
    \renewcommand{\arraystretch}{1.2}
    \scalebox{0.76}{
    \begin{tabular}{l >{\columncolor{lightgray!40}}c >{\columncolor{lightgray!40}}c|cccc >{\columncolor{lightgray!40}}c >{\columncolor{lightgray!40}}c| cccc >{\columncolor{lightgray!40}}c >{\columncolor{lightgray!40}}c| cccc}
        \toprule[0.8pt] 
        \multirow{2}*{Methods} & \multicolumn{6}{c}{Vehicle(\%)}                                                   & \multicolumn{6}{c}{Pedestrain(\%)}                                                              & \multicolumn{6}{c}{Cyclist(\%)}                                   \\ \cmidrule(r){2-7} \cmidrule(r){8-13} \cmidrule(r){14-19}
                               & \mAPH\hspace*{-2.4pt}       & SI          & \Conf       & \Loc        & \Ext        & \Hea        & \mAPH\hspace*{-2.4pt}        & SI          & \Conf       & \Loc        & \Ext        & \Hea        & \mAPH\hspace*{-2.4pt}       & SI          & \Conf       & \Loc        & \Ext        & \Hea       \\ \toprule[0.9pt]
        \Second                & 72.60       & 81.37       & 90.2       & 84.2       & 92.0       & 92.2       & 59.81       & 63.07       & 83.9       & 69.6       & 87.8       & 67.6       & 61.95       & 67.21       & 81.1       & 76.1       & 88.3       & 83.8       \\
        \CPPPL               & 72.82       & 80.61       & 89.0       & 85.4       & 91.0       & 92.8       & 65.28       & 64.57       & 83.2       & 74.4       & 87.4       & 68.9       & 65.87       & 68.06       & 80.8       & 77.7       & 87.0       & 85.9       \\ 
        \PPL                   & 72.84       & 80.84       & 89.6       & 84.4       & 92.3       & 91.6       & 54.64       & 62.03       & 84.7       & 72.1       & \Sec{88.8} & 57.9       & 59.51       & 66.14       & 82.2       & 74.9       & 88.0       & 77.4       \\ 
        \CP                    & 73.73       & 80.52       & 89.0       & 85.3       & 90.7       & 92.9       & 69.50       & 68.40       & 85.7       & 73.3       & 88.6       & 75.0       & 71.04       & 68.40       & 80.3       & 78.5       & 87.4       & 89.8       \\ 
        \PartaNet              & 75.02       & 82.86       & 91.4       & 85.4       & 91.7       & 91.7       & 66.16       & 65.08       & 84.6       & 73.6       & 86.7       & 67.0       & 67.90       & 72.73       & 85.9       & 79.3       & 87.0       & 84.3       \\ 
        \PVRCNN                & 75.92       & 83.73       & 91.9       & 86.4       & 92.3       & 91.7       & 66.28       & 66.17       & 86.0       & 73.5       & 87.4       & 66.6       & 68.38       & 73.53       & 86.8       & 78.9       & \Sec{88.4} & 83.2       \\ 
        \VoxelRCNN \hspace*{-2pt}        & 77.19       & 84.26       & 92.0       & 86.7       & 92.1       & 93.3       & 74.21       & 69.50       & 86.9       & 75.3       & 88.1       & 73.6       & 71.68       & 73.23       & 84.4       & 80.1       & 87.7       & 89.3       \\ 
        \VoxelNext             & 77.84       & \Sec{84.82}       & \Fir{92.9} & 86.3       & 91.6       & 94.2       & 76.24       & \Fir{74.74} & \Fir{92.7} & \Sec{75.7}       & 88.0       & 75.8       & \Fir{75.59} & \Fir{76.48} & \Fir{90.0} & 79.2       & 84.9       & 87.8       \\ 
        \PVRCNNPP              & 77.88       & 84.49       & 92.1       & \Fir{87.2} & \Sec{92.4}       & 93.2       & 73.99       & 69.27       & 86.8       & 75.3       & 88.1       & 73.2       & 71.84       & 73.05       & 84.2       & \Sec{80.3}       & 87.7       & 89.2       \\
        \DSVT                  & \Sec{78.82} & \Fir{84.90} & \Sec{92.5} & \Sec{86.9}       & 91.5       & \Sec{94.8} & \Fir{76.81} & \Sec{74.58} & \Sec{91.9} & \Fir{76.5} & 88.7       & \Sec{75.9} & \Sec{75.44} & \Sec{76.20} & \Sec{88.2} & \Fir{80.5} & 86.1       & \Sec{89.9} \\
        \TransFusion           & \Fir{79.00} & 82.32       & 89.3       & 86.8       & \Fir{92.7} & \Fir{95.7} & \Sec{76.52} & 69.11       & 84.5       & 75.4       & \Fir{89.9} & \Fir{78.8} & 70.11       & 70.35       & 80.6       & 79.5       & \Fir{90.6} & \Fir{91.1} \\
        \toprule[0.8pt] 
    \end{tabular}
    }
\end{table}

\noindent\textbf{Implementation details.}
We replicate commonly used LiDAR-based and fusion-based 3D detectors on top of OpenPCDet~\cite{openpcdet2020} and MMDetection3D~\cite{mmdet3d2020}.
All detectors are trained on Waymo Open Dataset (WOD)~\cite{sun2020scalability} with default configurations.
Our training uses the full version of the training set, consisting of 798 sequences with 158,361 samples.
We evaluate these models with the LEVEL 1 mAP weighted by Heading accuracy (\mAPH) and the proposed \MetricS{} on the validation set, which contains 202 sequences with 40,077 samples.
Besides the \MetricS{}, we further present its sub-indicators of stability on confidence (\Conf), localization (\Loc), extent (\Ext), and heading (\Hea).

\noindent\textbf{Relation between \MetricS{} and mAPH.}
\cref{tab:benchmark} presents model results on categories of vehicle, pedestrian, and cyclist.
Models are sorted by the mAPH on the class vehicle, and we highlight the two best performing models in each column.
From the results, we find that there is no evident correlation between detection accuracy and model stability.
For instance, TransFusion has the highest mAPH on the class vehicle while its SI is much lower than the LiDAR-based counterparts with similar detection metrics.
That could be because the fusion model improves detection accuracy by additional information from camera images.
The visual information, however, is indirect in inferring precise 3D locations, thereby increasing the detection uncertainty.
On the other hand, CenterPoint achieves 73.73 mAPH for vehicle detection, higher than Second and PointPillar.
But it has the lowest \MetricS{} of 80.52 among all detectors.
These results negate definitive positive relations between the two metrics.

\begin{figure}[!t]
    \centering
    \includegraphics[width=1\textwidth]{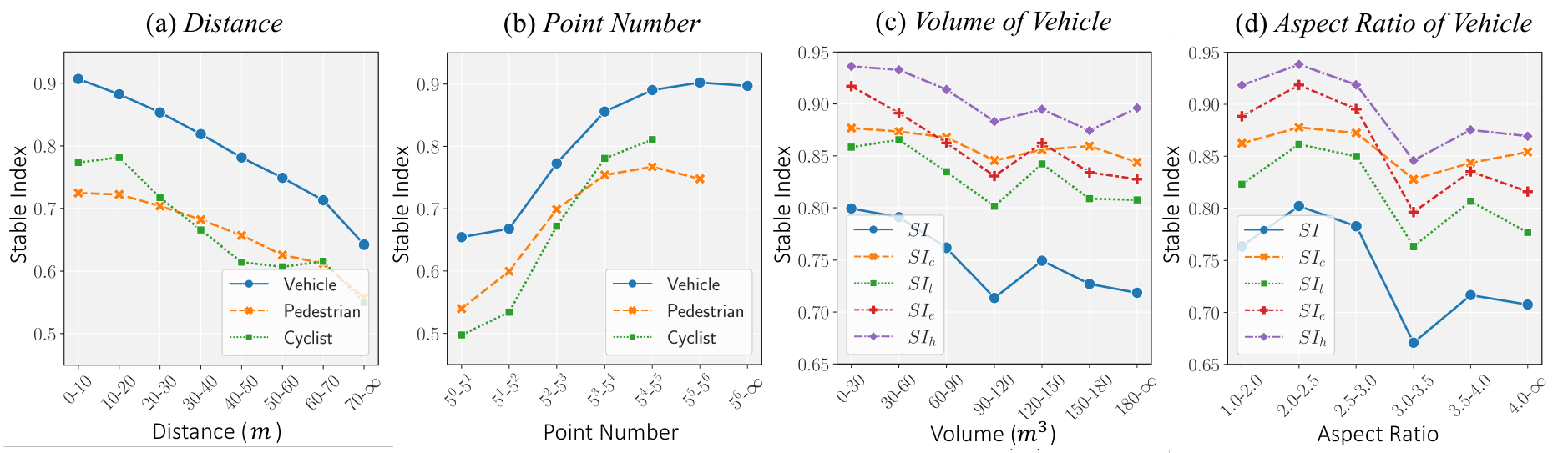}
    \caption{
        Relationships between object properties and detection stability.
    }
    \label{fig:influence}
\end{figure}

\noindent\textbf{Influence of object properties.}
\cref{fig:influence} shows how various object properties affect detection stability.
We group objects based on specified properties and detect them with CenterPoint.
\cref{fig:influence}(a) presents a negative relationship between detection stability and object distance, where longer distances correspond to harder objects to learn in general.
For all classes, \MetricS{} increases with the number of object points and becomes saturated when the point number reaches $5^3$, as demonstrated in \cref{fig:influence}(b).
\cref{fig:influence} (c) and (d) further explore the effects of object volumes and length-to-width ratios for vehicles.
We find that small vehicles tend to have more stable detection.
Vehicles with length-to-width ratios between 2 and 3 exhibit relatively high \MetricS{} values.
This may be attributed to the prevalence of such vehicles in real-world scenarios.
Vehicles with larger length-to-width ratios, such as trucks/trams/buses, are relatively scarcer in the dataset, and require larger receptive field requirements, making them more unstable in detection.

\begin{table}[t!]
    \small
    \centering
    \caption{Effects of multi-frame strategy on the detection stability. 
    }\label{tab:multi-frame}
    \setlength{\tabcolsep}{3pt}
    \scalebox{0.9}{
    \begin{tabular}{cccccccc}
        \toprule[1pt]
        \multirow{2}*{Methods}      & 
        \multirow{2}*{Number of frames} 
        & \multicolumn{2}{c}{Vehicle(\%)} & \multicolumn{2}{c}{Pedestrian(\%)} & \multicolumn{2}{c}{Cyclist(\%)}  \\ \cmidrule(r){3-4} \cmidrule(r){5-6} \cmidrule(r){7-8}
                                    &                    & \mAPH & SI                      & \mAPH & SI                         & \mAPH & SI \\  \midrule
        \multirow{3}*{CenterPoint}  & 1                  & 73.73 & 80.52                   & 69.50 & 68.40                      & 71.04 & 68.40  \\
                                    & 2                  & 75.04 & 80.86                   & 75.17 & 70.40                      & 71.23 & 69.39 \\
                                    & 4                  & 75.85 & 81.74                   & 75.38 & 71.69                      & 71.68 & 69.93 \\ \midrule
        \multirow{3}*{PV R-CNN}     & 1                  & 78.33 & 85.17                   & 75.75 & 70.15                      & 72.47 & 73.31 \\
                                    & 2                  & 79.62 & 86.39                   & 80.37 & 73.79                      & 73.66 & 76.78 \\
                                    & 4                  & 80.51 & 87.50                   & 81.12 & 75.32                      & 74.77 & 76.34  \\ \toprule[1pt]
    \end{tabular}
    }
\end{table}

\noindent\textbf{Effects of multi-frame strategy.}
Merging several consecutive point clouds as one input is a commonly used strategy to address the sparsity in LiDAR data.
\cref{tab:multi-frame} reveals that this scheme not only improves model accuracy but also benefits detection stability.
Taking the vehicle for example, using four frames results in notable improvements in the detection accuracy of CenterPoint and PV R-CNN, reaching 75.85 and 80.51 mAPH, which surpass the baseline by 2.12 and 2.18 mAPH.
Meanwhile, the values of \MetricS{} for CenterPoint and PV R-CNN are increased by 1.22 and 2.33, respectively.
This trend is consistent for all classes, illustrating the general effectiveness of the strategy in boosting detection performances of both accuracy and stability.

\noindent\textbf{Summary.}
Our experiments verify that the proposed \MetricS{} is a complementary metric to detection accuracy.
The metric value varies a lot for different model types and demonstrates several interesting patterns \wrt object properties.
We also examine two common-used schemes including data augmentation (in the supplementary) and the multi-frame strategy.
Increasing the degree of data augmentation has a minor impact on detection stability.
Though using multi-frames is proven to be beneficial, it places heavy computational overhead during encoding data into voxel features.
In contrast, our proposed PCL introduces no additional computations during inference, while significantly improving detection stability as illustrated by the later experiments.

\subsection{Experiments on PCL}\label{sec:boosting}

\noindent\textbf{Implementation details.}
We employ the widely-used CenterPoint as our base model, which is trained with the default setting in OpenPCDet.
Specifically, we train the model for 36 epochs with the Adam optimizer. 
The one-cycle policy with an initial learning rate 0.003 is used. 
The learning rate gradually increases to 0.03 in the first 40\% epochs and then gradually decreases in the rest of training.

Instead of end-to-end training, we choose to fine-tune the base model with PCL equipped for a few epochs.
Training configuration mirrors that of the end-to-end one, with the exception that the epoch number is reduced to 5 and the learning rate is divided by 10.
It's noteworthy that the scheme not only highly reduces training cost, but also shows how effectively PCL can take effect.

\begin{table}[tb]
    \small
    \centering
    \caption{
        The effects of the proposed PCL.
        ``-'' is the base model and ``w/o PCL'' represents the model fine-tuned without prediction consistency loss.
    }\label{tab:effectiveness}
    \setlength{\tabcolsep}{8pt}
    \scalebox{0.9}{
    \begin{tabular}{lcccccc}
        \toprule[1pt]
        \multirow{2}*{Methods} & \multicolumn{2}{c}{Vehicle(\%)} & \multicolumn{2}{c}{Pedestrian(\%)} & \multicolumn{2}{c}{Cyclist(\%)}  \\ \cmidrule(r){2-3} \cmidrule(r){4-5} \cmidrule(r){6-7}
                               & \mAPH & SI                      & \mAPH & SI                         & \mAPH & SI \\  \midrule
        -                      & 73.73 & 80.52                   & 69.50 & 68.40                      & 71.04 & 68.40 \\
        w/o PCL                & 73.70 & 80.93                   & 69.55 & 68.35                      & 71.27 & 68.20 \\
        \hline
        PCL ($n=0$)            & 75.57 & 85.42                   & 70.18 & 71.87                      & 70.86 & 68.80 \\
        PCL ($n=4$)            & 75.26 & 85.83                   & 69.56 & 72.76                      & 70.65 & 69.22 \\
        PCL ($n=8$)            & 75.04 & 85.94                   & 68.82 & 72.87                      & 70.31 & 69.32 \\
        PCL ($n=12$)           & 74.64 & 85.93                   & 68.50 & 72.95                      & 70.85 & 69.33 \\ 
        PCL ($n=16$)           & 74.54 & 86.00                   & 67.82 & 73.14                      & 70.25 & 69.16 \\ \toprule[1pt]
    \end{tabular}
    }
\end{table}

\noindent\textbf{Effectiveness of PCL.}
We compare the performances of models fine-tuned with and without PCL, as shown in \cref{tab:effectiveness}.
It can be observed that directly fine-tuning the model has a marginal impact on both model accuracy and stability.
In contrast, when using PCL without cross-frame information involved (\ie, $n=0$), we already achieve SI values of 84.54, 70.95, and 68.80 for vehicle, pedestrian, and cyclist, respectively.
These results reveal significant enhancements, with gains of +4.49, +3.52, and +0.60 compared to the baseline.
For the mAP, we find an interesting phenomenon: the mAP of three classes changes by +1.87, +0.63, and -0.39.
That leads to two valuable conclusions:
(1) Our PCL not only enhances stability but also improves the overall detection accuracy, particularly for the vehicle class.
(2) Regardless of how the mAP changes, 
the SI is consistently improved, reinforcing that these two metrics assess different model attributes.

\noindent\textbf{Effects of the interval between frame pairs.}
A key hyper-parameter of PCL is the maximum interval $n$ between a pair of frames, as described in the neighborhood sampling.
The larger $n$ becomes, the longer the spans between two frames contrasted by PCL will be.
The results in \cref{tab:effectiveness} show opposing trends in the detection accuracy and stability with $n$ changes.
Take the class vehicle as an example.
When $n=0$, we have the highest mAP of 75.57 and the lowest SI of 85.42 among all PCL models.
The mAP eventually drops to 74.54 as $n$ grows.
On the contrary, the model stability gradually rises to 86.00 SI with $n$ being 16.
This may be because object morphology can change considerably when the frame interval $n$ grows.
Forcefully aligning them can bring damage to model accuracy.
However, such alignment promotes consistent predictions for the same objects, which subsequently leads to stable detection.

\begin{table}[!t]
    \small
    \centering
    \caption{
        Results (\%) on vehicle class with different components in PCL.
        ``C", ``L", ``E", and ``H" denote applying the loss parts relative to confidence, localization, extent, and heading, respectively.
    }\label{tab:ablation}
    \setlength{\tabcolsep}{8pt}
    \scalebox{0.9}{
        \begin{tabular}{ccccc|c|cccc}
            \toprule[1pt]
            \multicolumn{4}{c}{Components}                      & \multirow{2}*{\mAPH} & \multirow{2}*{SI} & \multirow{2}*{\Conf} & \multirow{2}*{\Loc} & \multirow{2}*{\Ext} & \multirow{2}*{\Hea} \\ \cline{1-4}
            C          & L          & E          & H          &                      &                   &                      &                     &                     &                     \\ \midrule
                       &            &            &            & 73.70                & 80.93             & 88.90                & 85.90               & 91.50               & 93.64               \\
            \hline
            \checkmark &            &            &            & 75.64                & 84.04             & 92.28                & 85.80               & 91.44               & 93.65               \\
                       & \checkmark &            &            & 74.15                & 81.80             & 89.34                & 87.16               & 91.79               & 93.73               \\
                       &            & \checkmark &            & 73.86                & 81.73             & 89.14                & 86.13               & 93.37               & 93.67               \\
                       &            &            & \checkmark & 73.44                & 80.89             & 88.88                & 85.64               & 91.47               & 93.85               \\
            \checkmark & \checkmark & \checkmark & \checkmark & 75.57                & 85.42             & 92.81                & 86.78               & 93.31               & 93.90              \\ \toprule[1pt]
        \end{tabular}
    }
\end{table}

\noindent\textbf{Effects of loss components in PCL.}
Our introduced consistency loss comprises components of confidence, localization, extent, and heading.
We examine the model performances with various combinations of these loss components and report the results in \cref{tab:ablation}.
These results show that each part of the loss in PCL can boost the model stability from the corresponding aspect, which confirms the effectiveness of each component in PCL.
The detection accuracy and stability are the highest with all loss parts involved.

The loss component related to confidence score yields the highest improvements in the final \MetricS{}. 
This may be because that the classification loss for training detectors primarily focuses on whether an object is correctly classified, leaving sufficient room for enhancing consistency.
In contrast,  box parameters already have a latent potential for consistent predictions as they all use ground-truth labels as the targets.
Enforcing consistency on these parameters is not as influential as it is on the confidence score. 
Furthermore, we observe that the loss associated with the heading component leads to the least improvement, indicating that maintaining consistency in heading is a challenging task.

\begin{figure}[tb]
    \centering
    \includegraphics[width=1\textwidth]{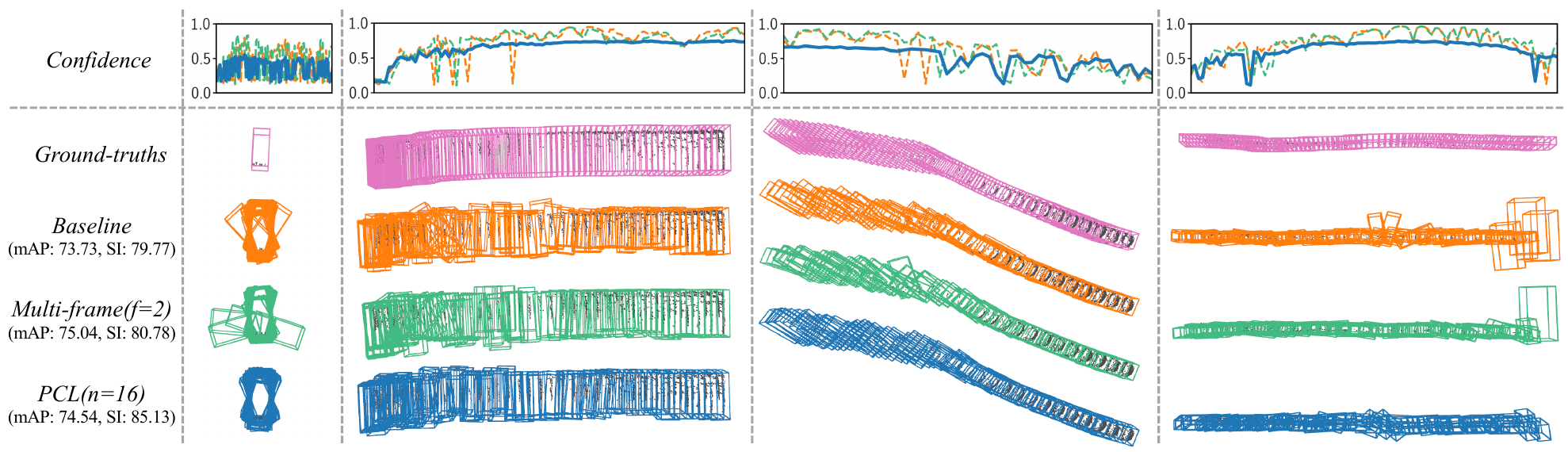}
    \caption{    
        Visualizations of ground-truths (in \textcolor[RGB]{227, 120, 194}{pink}) and predictions of CenterPoint models trained by the baseline (in \textcolor[RGB]{255, 128, 13}{orange}), multi-frame strategy (in \textcolor[RGB]{66, 186, 130}{green}), and PCL strategy (in \textcolor[RGB]{31, 120, 181}{blue}).
        Predicted confidences (top row) and  3D boxes (bottom row) are all 
        presented.
        %
    }
    \label{fig:visualization}
\end{figure}

\noindent\textbf{Visualization.}
In \cref{fig:visualization}, we present visualizations of a few ground-truth data and detection results from three distinct models: the baseline CenterPoint, CenterPoint with 2 frames as input, and our PCL model with $n=16$.
In the first row, we plot the trends in confidence scores with time changes and find the PCL model exhibits superior capability in suppressing confidence score fluctuations compared to the other two models.
For the predicted boxes, PCL also has more stable results than other models.
It's noteworthy that our PCL model, despite having a lower mAP compared to the multi-frame version of CenterPoint, significantly outperforms it in terms of \MetricS{}.
This further verifies that detection accuracy and stability capture independent aspects of model performance.
These phenomena all demonstrate the effectiveness of PCL in enhancing detection stability.

\section{Conclusions and Limitations}\label{sec:conclusion}
In this work, we comprehensively study a critical but overlooked issue in object detection, \ie, detection stability.
For evaluation of such stability, we carefully design a well-proved metric named \MetricL{} (\MetricS{}).
The prediction consistency learning framework is further proposed to enhance model stability.
Our extensive experiments have verified the rationality of \MetricS{} and the effectiveness of the proposed framework.
We hope our work can serve as a reliable baseline and draw the community's attention to this crucial issue in 3D object detection.

To motivate future work, we outline a few limitations based on our current comprehension:
(1) The proposed \MetricS{} focuses solely on the default detection elements for the purpose of generalization.
However, some detectors yield additional predictions such as velocity and attribute.
Integrating the stability of these extra elements into the metric, while maintaining the properties of \MetricS{}, is a practically valuable direction.
(2) In the pursuit of a general baseline approach,  we restrict the design of PCL to be compatible with existing object detectors, avoiding the introduction of extra computations during inference to ensure broad applicability.
Future works may surpass these constraints to explore possibilities for enhanced performance.
\appendix

In the supplementary, we first provide comprehensive analyses and theoretical proofs for \MetricS{} in \cref{sec:t}.
\cref{sec:d} shows extra details of \MetricS{}.
In the end, we present extensive experiments (\eg, comparisons of different metrics, analyses on PCL, results in NuScenes benchmark, \etc) in \cref{sec:e}.

\section{Theoretical Analysis}\label{sec:t}
In this section, we provide detailed proofs for the proposed metric \MetricL{}.

\subsection{Proofs for Naive Approaches}
Denote the ground-truth bounding boxes as $B_1^g, B_2^g$, and the predictions as $P_i = \{c_i, B_i\}, i=1, 2$.
The naive approach projects one predicted box onto the location of the second ground truth for the calculation of IoU.
In more detail, we can project $B_1$ onto $B_2^g$ as $\hat B_1 = T_{B_1^g \rightarrow B_2^g}(B_1)$ and then calculate $\iou(\hat B_1, B_2)$, or alternatively, compute it in reverse as $\iou(B_1, \hat B_2)$.
We next prove that the naive approach fails to satisfy the properties of \textit{symmetry} and \textit{marginal unimodality} by the following Property \ref{prop:non-symmetric} and Property \ref{prop:not-unimodal}, respectively.

\begin{property}
    \label{prop:non-symmetric}
    The equality $\iou(\hat B_1, B_2) = \iou(B_{1}, \hat B_2)$ does not always hold.
\end{property}
\begin{proof}
    When considering the reverse projection, we can derive that 
    \begin{equation*}
      \small
      \begin{split}
      & \iou(\hat B_1, B_2)  = \iou(T_{B_1^g \rightarrow B_2^g}(B_1), B_2) \\
      & =\iou(T_{B_1^g \rightarrow B_2^g}(B_1), T_{B_1^g \rightarrow B_2^g}( T_{B_2^g \rightarrow B_1^g}(B_2))) \\
      & = \iou(T_{B_1^g \rightarrow B_2^g}(B_1), T_{B_1^g \rightarrow B_2^g}(\hat B_2))
      \end{split}
    \end{equation*}
    If $\iou(\hat B_1, B_2) = \iou(B_{1}, \hat B_2)$ always hold, it will imply
    \begin{equation*}
      \small
        \iou(B_1, \hat B_2) = \iou(\hat B_1, B_2) = \iou(T_{B_1^g \rightarrow B_2^g}(B_1), T_{B_1^g \rightarrow B_2^g}(\hat B_2))
    \end{equation*}
    This would suggest that any arbitrary projection $T_{B_1^g \rightarrow B_2^g}(\cdot)$ does not alter the \iou{} of two boxes.    

    Nevertheless, this assumption can be refuted with a straightforward example. 
    Imagine two adjacent squares initially possessing an IOU of 0. 
    However, upon rotating the squares, an intersection is formed, leading to an IOU value greater than 0. 
    This evident contradiction highlights that the proposition $\iou(\hat B_1, B_2) = \iou(B_1, \hat B_2)$ does not hold universally.
\end{proof}

\begin{property}
    \label{prop:not-unimodal}
    The $\iou(\hat B_1, B_2)$ is not marginal unimodal concerning the box elements.
\end{property}
\begin{proof}
    To illustrate this, consider the following example: 
    Let a Box be defined $\{0, 0, 0, 2, 1, 1, 0\}$, and another Box as $\{\delta_x, 0.05, 0, 3, 1, 1, \delta_{\theta}\}$.
    Here, we restrict $\delta_x\in \{0, 0.25, 0.5\}$ and $\delta_{\theta}\in [-10^\circ, 10^\circ]$. 
    The \iou{} curve between the two boxes is shown in \cref{fig:curve1}.

    \begin{figure}[h]
        \centering
        \includegraphics[trim={0pt 0pt 0pt 0pt},clip, width=0.6\textwidth]{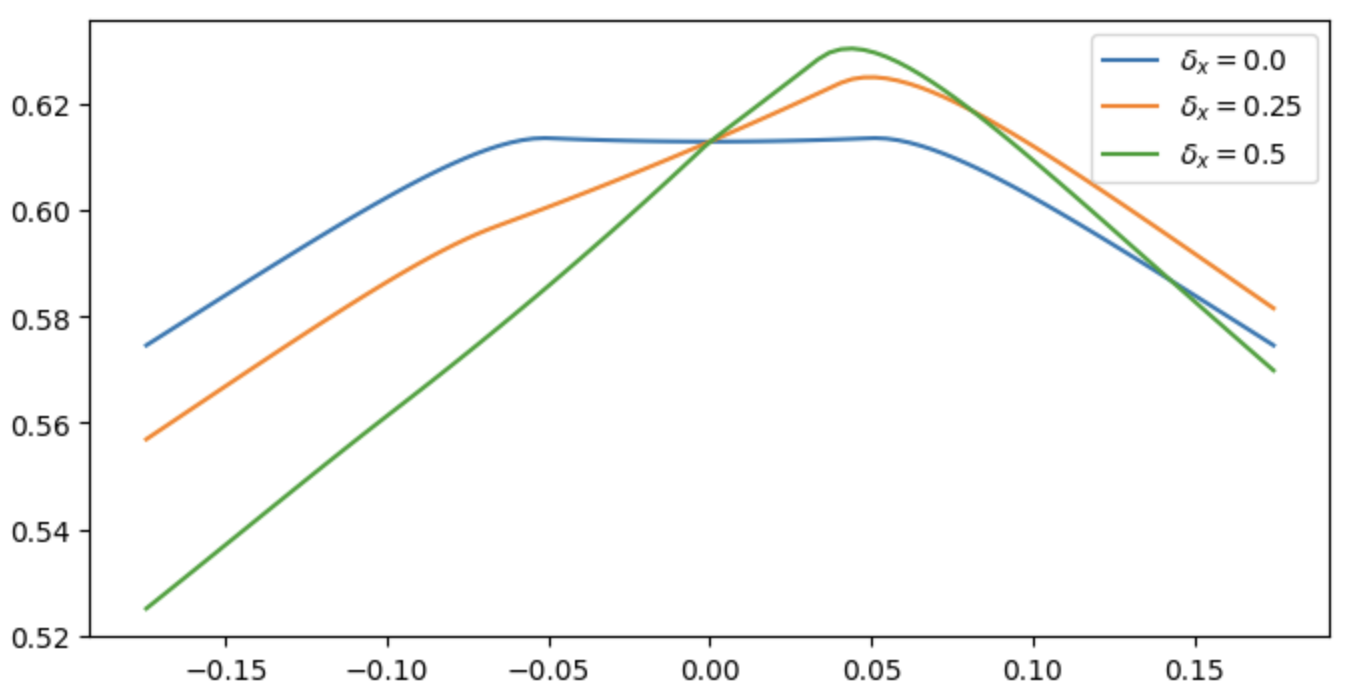}
        \caption{
            The IoU curves \wrt the $\delta_{\theta}$ variation.
            }
        \label{fig:curve1}
      \end{figure}    
    Upon examining the plot, it is evident that each curve is not centered at $\delta_{\theta}=0$.
    This suggests that a less stable prediction on the box angle results in a higher \iou{}. 
    The same conclusion can be drawn for $\delta_x$. Therefore, the \iou{} function is not marginally unimodal.
\end{proof}

\subsection{Proofs for \MetricL{}}

It can be readily inferred from the formulation of \MetricS{} that it encompasses all detection components and consolidates them into a unified metric (the weighted \iou{}). 
Next, we proceed to prove the properties of \textit{Symmetry} and \textit{Marginal Unimodality}.

\begin{property}
    \label{prop:Symmetry}
    The proposed \textit{Stable Index} is symmetric.
  \end{property}  
  \begin{proof}
    The proposed score $\delta_c = 1 - |c_1 - c_2|$ is symmetric, given that we take the absolute value of the difference.
    Additionally, the metric essentially computes the \iou{} with an intermediate pivot box,  ensuring consistent values with changes in frame order.
    Therefore, the final metric is symmetric.
  \end{proof}
  
  Next, we demonstrate that the proposed metric adheres to the principle of \textit{Marginal Unimodality}. 
  Before delving into the proof, we introduce a few lemmas.
  
  \begin{lemma}
    \label{lemma:center}
    Denote the \iou{} value between two boxes $\{x_1, y_1, z_1, l, w, h, \theta\}$ and $\{x_2, y_2, z_2, l, w, h, \theta\}$ as $F(x, y, z)$, where $x=x_2-x_1, y=y_2-y_1, z=z_2-z_1$.
    Then $F(x, y, z)$ is marginal unimodal \wrt $x, y, z$.
  \end{lemma}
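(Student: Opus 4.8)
The plan is to prove that $F(x,y,z)$, the IoU of two identically-shaped and identically-oriented boxes separated by a translation $(x,y,z)$, is marginal unimodal in each coordinate. First I would reduce the problem to one coordinate at a time: by symmetry of the setup, it suffices to show that for fixed $y$ and $z$, the map $x \mapsto F(x,y,z)$ attains its maximum at $x=0$ and is non-increasing in $|x|$ (more precisely non-decreasing as we move away from the optimum $x=0$ in either direction). Since the two boxes share the same yaw $\theta$, I would first pass to the box-aligned frame by a rigid rotation, which does not change the IoU; in that frame both boxes are axis-aligned rectangular cuboids of the same dimensions $l \times w \times h$, and the translation vector becomes some $(x', y', z')$ that depends linearly (orthogonally) on $(x,y,z)$.

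Next I would write the intersection volume of two axis-aligned congruent cuboids offset by $(x', y', z')$ explicitly as a product that separates across axes:
\begin{equation*}
V_{\cap}(x',y',z') = (l - |x'|)_+ \,(w - |y'|)_+ \,(h - |z'|)_+,
\end{equation*}
where $(\cdot)_+ = \max(\cdot, 0)$, while the union volume is $V_{\cup} = 2lwh - V_{\cap}$. Hence the IoU is
\begin{equation*}
F = \frac{V_{\cap}}{2lwh - V_{\cap}} = \frac{1}{\,2lwh/V_{\cap} - 1\,},
\end{equation*}
which is a strictly increasing function of $V_{\cap}$ on $V_{\cap} > 0$ (and equals $0$ when $V_{\cap}=0$). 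So it suffices to prove $V_{\cap}$ is marginal unimodal. Now I would fix $y'$ and $z'$ and view $V_{\cap}$ as a function of $x'$ alone: it equals $C \cdot (l - |x'|)_+$ for a nonnegative constant $C = (w-|y'|)_+(h-|z'|)_+$. This is manifestly maximized at $x'=0$, non-increasing in $|x'|$, and identically $0$ once $|x'| \ge l$ — exactly marginal unimodality in $x'$. The same argument applies symmetrically to $y'$ and $z'$.

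The last step is to transfer unimodality from the box-aligned coordinates $(x',y',z')$ back to the original coordinates $(x,y,z)$. Here I must be slightly careful: the rotation mixing $x$ and $y$ means that a continuous move of $x$ alone (with $y,z$ fixed) is a continuous move in the $(x',y')$-plane, not along a coordinate axis of the primed frame. However, the claim needed is only that for fixed $y,z$, the function $x \mapsto F$ is unimodal; since $(x',y')$ traces an affine line in the primed plane as $x$ varies, and $(l-|x'|)_+(w-|y'|)_+$ restricted to an affine line — hmm, this is where I expect the genuine obstacle. Rather than fight that, the cleaner route, and the one I would actually take, is to observe that the yaw $\theta$ being equal lets us choose from the start to work in the object frame, and interpret the lemma's $x,y,z$ as \emph{already being} the object-frame offsets: indeed in the context of \MetricS{} the relevant boxes $\hat B^{loc}_i$ carry the pivot heading $0$, so the displacement is naturally expressed axis-aligned. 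Under that reading the reduction is trivial and the product-separation argument above is the whole proof. If instead the lemma is meant literally for an arbitrary common $\theta$, then the main obstacle is precisely showing that $t \mapsto (l-|a+bt|)_+(w-|c+dt|)_+$ is unimodal in $t$; I would handle it by noting each factor is concave-then-clamped (a "tent" function, hence log-concave on its support after the clamp is absorbed), the product of two log-concave functions is log-concave, and a log-concave function of one real variable is unimodal — completing the argument. I would flag the log-concavity-of-tent-functions step as the one deserving a careful line, and everything else as routine.
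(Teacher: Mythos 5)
Your core argument — pass to the box-aligned frame, factor the intersection volume as a separable product of clamped linear terms $(l-|x'|)_+(w-|y'|)_+(h-|z'|)_+$, and observe that $\iou = 1/(2lwh/V_\cap - 1)$ is strictly increasing in $V_\cap$ so unimodality transfers — is the same computation the paper uses, and it is correct for the case that actually matters: the decoupled boxes $\hat B^{loc}_i$ carry pivot heading $0$, so the offsets $(x,y,z)$ are already axis-aligned. You are also right to be suspicious of the general-$\theta$ reading. The paper's proof writes $V_{int} = (l-|x|)(w-|y|)(h-|z|)$ without comment, which tacitly identifies the offset with the box-frame offset; if instead $\theta \neq 0$ is kept and $y$ is held fixed at a nonzero value, the marginal in $x$ is a product of two off-centered tent functions along a rotated line, and (for $l \neq w$) its peak genuinely shifts away from $x=0$. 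That would violate clause (1) of the paper's own marginal-unimodality definition ("reaches the peak value if and only if the element is stable"). So pinning down the object-frame interpretation, as you do, is not a cosmetic choice — it is what makes the lemma true.

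One caution on your log-concavity fallback: it correctly shows $t \mapsto (l-|a+bt|)_+(w-|c+dt|)_+$ is quasi-concave, hence single-peaked, but single-peakedness is weaker than the paper's notion of marginal unimodality, which also requires the peak to sit at the zero-discrepancy point. Log-concavity rescues the textbook sense of "unimodal" for arbitrary $\theta$, but not the sense the metric needs; since you ultimately take the $\theta=0$ route, this is only a caveat for anyone hoping to push the lemma through for general yaw. A final nit worth noting: the paper's proof has a factor-of-two slip — with the stated $V_{int}$ the IoU vanishes once $|x| \geq l$ (not $|x| > l/2$); your $(\cdot)_+$ notation handles this cleanly.
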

  \begin{proof}
    We can observe that the IOU value is 0 if $|x| > l/2$, $|y| > w/2$ or $|z| > h/2$.
    Otherwise, when none of these conditions are met, we can calculate the intersection volume as $V_{int} = (l - |x|) \cdot (w - |y|) \cdot (h - |z|)$. 
    This leads us to the IOU value equation: 
    \begin{equation*}
      \small
      F(x, y, z) = \frac{V_{int}}{2lwh - V_{int}}  = 1\ / \ (\frac{2lwh}{(l-|x|)(w-|y|)(h-|z|)} - 1).
    \end{equation*}
  
    With $y$ and $z$ fixed, $F(x, y, z)$ is monotonically decreasing with $x>0$ and monotonically increasing with $x<0$. 
    Therefore, $F(x, y, z)$ is unimodal with respect to $x$ when $y$ and $z$ are fixed. 
    Similar conclusions can be derived for $y$ and $z$. 
    In summary, $F(x, y, z)$ exhibits marginal unimodality with respect to $x, y, z$.
  \end{proof}

  \begin{lemma}
    \label{lemma:size}
    Denote the \iou{} value between two boxes $\{0, 0, 0, l_1, w_1, h_1, 0\}$ and $\{0, 0, 0, l_2, w_2, h_2, 0\}$ as $F(l, w, h)$, where $l=l_2/l_1, w=w_2/w_1, h=h_2/h_1$.
    Then $F(l, w, h)$ is marginal unimodal \wrt $l, w, h$.
  \end{lemma}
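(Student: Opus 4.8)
The plan is to mimic the structure of the proof of Lemma \ref{lemma:center}, but now working with the box extent ratios $l = l_2/l_1$, $w = w_2/w_1$, $h = h_2/h_1$ instead of center offsets. Since both boxes are axis-aligned and centered at the origin, the intersection is again an axis-aligned box: along the first axis the overlap has half-length $\min(l_1, l_2)/2$, so the full overlap length is $\min(l_1, l_2)$, and similarly for the other two axes. Hence $V_{int} = \min(l_1,l_2)\cdot\min(w_1,w_2)\cdot\min(h_1,h_2)$, and the union volume is $l_1 w_1 h_1 + l_2 w_2 h_2 - V_{int}$, giving
\begin{equation*}
    F(l, w, h) = \frac{\min(1,l)\min(1,w)\min(1,h)}{1 + lwh - \min(1,l)\min(1,w)\min(1,h)},
\end{equation*}
after dividing numerator and denominator by $l_1 w_1 h_1$ and substituting the ratios.

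First I would fix $w$ and $h$ and study $F$ as a function of $l$ alone. For $l \le 1$ the numerator is $l\cdot\min(1,w)\min(1,h)$ and the term $lwh$ in the denominator grows in $l$, so $F$ is increasing in $l$ on $(0,1]$; at $l=1$ it attains the value $\min(1,w)\min(1,h)/(1+wh-\min(1,w)\min(1,h))$. For $l \ge 1$ the numerator is constant ($=\min(1,w)\min(1,h)$) while the denominator $1 + lwh - \min(1,w)\min(1,h)$ is strictly increasing in $l$, so $F$ is strictly decreasing on $[1,\infty)$. Therefore $F$ is unimodal in $l$ with its unique maximum at $l=1$, i.e. when $l_1 = l_2$. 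By the symmetry of the roles of $l, w, h$, the same argument gives unimodality in each of $w$ and $h$ with the other two fixed. This establishes marginal unimodality, and moreover shows the peak is reached exactly when all three ratios equal $1$ — the perfectly stable extent case — which is the conclusion needed for Lemma \ref{prop:2}.

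The main obstacle is purely bookkeeping at the kink $l=1$: one must check that the piecewise definition glues into a genuinely unimodal (not merely quasi-unimodal-with-a-flat-spot or discontinuous) function, i.e. that $F$ is continuous at $l=1$ and that ``increasing then decreasing'' holds with the single maximum attained there rather than on an interval. Continuity is immediate since both pieces agree at $l=1$, and strict monotonicity on each side (for $l\neq 1$, with $w,h$ fixed and positive) rules out flat spots away from the peak, so no real difficulty arises — the only care needed is to treat the cases $w\le 1$ vs.\ $w>1$ (and likewise for $h$) uniformly, which the $\min(1,\cdot)$ notation handles automatically. I would also remark that, exactly as in Lemma \ref{lemma:center}, "any continuous direction" of deterioration is covered because unimodality in each coordinate separately is what the Marginal Unimodality property demands.
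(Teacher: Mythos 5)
Your approach is essentially the same as the paper's: both write out $V_{\mathrm{int}} = \min(l_1,l_2)\min(w_1,w_2)\min(h_1,h_2)$, normalize so that everything depends only on the ratios $l,w,h$, and split into the cases $l\le 1$ and $l\ge 1$ to show increasing-then-decreasing behavior, with the peak at $l=1$ and symmetry handling $w$ and $h$. Your normalized form $F=\frac{\min(1,l)\min(1,w)\min(1,h)}{1+lwh-\min(1,l)\min(1,w)\min(1,h)}$ is a nice, slightly cleaner packaging of the paper's expression.

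There is, however, a small but genuine gap in the $l\le 1$ branch. You argue ``the numerator is $l\cdot\min(1,w)\min(1,h)$ and the term $lwh$ in the denominator grows in $l$, so $F$ is increasing.'' That inference is not valid on its own: on $(0,1]$ the numerator \emph{and} the denominator $1+l\bigl(wh-\min(1,w)\min(1,h)\bigr)$ are both nondecreasing in $l$ (since $\min(1,w)\min(1,h)\le wh$), and a ratio of two increasing positive quantities need not be increasing. The conclusion is correct, but it requires one more step. The cleanest fix is the one the paper uses: examine the reciprocal. With $A=\min(1,w)\min(1,h)$, for $l\le 1$ you have
\[
\frac{1}{F(l,w,h)} = \frac{1+lwh-lA}{lA} = \frac{1}{lA} + \frac{wh}{A} - 1,
\]
which is manifestly decreasing in $l$, so $F$ is increasing on $(0,1]$. (Equivalently, divide numerator and denominator of $F$ by $l$, or note $F(l)=\frac{lA}{1+lB}$ with $B=wh-A\ge 0$ has derivative $\frac{A}{(1+lB)^2}>0$.) The $l\ge 1$ branch, where the numerator is constant and the denominator strictly increases, is fine as you wrote it. With that one repair, your proof matches the paper's in substance and even adds a useful explicit check of continuity at the kink $l=1$, which the paper leaves implicit.
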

  \begin{proof}
    The intersection volume  of the two boxes is $V_{int} = \min(l_1, l_2)\cdot \min(w_1, w_2) \cdot \min(h_1, h_2)$. 
    The \iou{} value is then $F(l, w, h) = V_{int}/ (l_1w_1h_1+l_2w_2h_2 - V_{int})$.
  
    Let's first prove that $F(l, w, h)$ is marginal unimodal \wrt $l$ with $w, h$ fixed.
    We examine the reciprocal of the $F(l, w, h)$ as 
    \begin{equation*}
      \small
      \frac{1}{F(l, w, h)} = \frac{l_1w_1h_1+l_2w_2h_2}{\min(l_1, l_2)\cdot \min(w_1, w_2) \cdot \min(h_1, h_2)} - 1.
    \end{equation*}

    If $l = l_2/l_1 > 1$, then we can have
    \begin{equation*}
      \small
      \begin{split}
        \frac{1}{F(l, w, h)} &= \frac{l_1w_1h_1+l_2w_2h_2}{\min(l_1, l_2)\cdot \min(w_1, w_2) \cdot \min(h_1, h_2)} - 1,\\
        &= \frac{w_1h_1 + l w_2h_2}{\min(w_1, w_2) \cdot \min(h_1, h_2)} - 1.
      \end{split}
    \end{equation*}
    Then it's easy to derive that the first term monotonically increases with $l$, leading to $F(l, w, h)$ being monotonically decreasing with $l$.

    Conversely, if $l = l_2/l_1 < 1$, the reciprocal of the $F(l, w, h)$ becomes
    \begin{equation*}
      \small
      \begin{split}
        \frac{1}{F(l, w, h)} &= \frac{l_1w_1h_1+l_2w_2h_2}{\min(l_1, l_2)\cdot \min(w_1, w_2) \cdot \min(h_1, h_2)} - 1,\\
        &= \frac{lw_1h_1 + w_2h_2}{\min(w_1, w_2) \cdot \min(h_1, h_2)} - 1.
      \end{split}
    \end{equation*}
    Similarly, we can reach that $F(l, w, h)$ monotonically increases with $l$.

    In conclusion, $F(l, w, h)$ is proven to be marginally unimodal with respect to $l$ with $w, h$ fixed. 
    This proof can be extended to demonstrate that $F(l, w, h)$ is also marginally unimodal with respect to $w$ and $h$.
    \end{proof}
    

\begin{lemma}
    \label{lemma:angle}
    If two 3D boxes are identical except for their heading values, then their \iou{} is not unimodal with respect to the angle difference $\Delta\theta$. 
    However, within the range $|\Delta\theta| \leq \pi/4$, the \iou{} is an unimodal function.
\end{lemma}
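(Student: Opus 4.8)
The plan is to reduce the three-dimensional statement to a planar one and then treat the two assertions separately. Since the heading enters only as a rotation about the vertical axis and the two boxes share the same $z$ and $h$, the common $z$-extent factors out of both the intersection and the union; hence the 3D $\iou$ equals the 2D $\iou$ of the two congruent, co-centred rectangular cross-sections $R_1$ (axis-aligned, $l\times w$) and $R_2(\phi)$, the same rectangle rotated by $\phi=\Delta\theta$. Writing $A(\phi)$ for the area of $R_1\cap R_2(\phi)$, we have $\iou = A(\phi)/(2lw-A(\phi))$, and since $t\mapsto t/(2lw-t)$ is strictly increasing on $[0,lw]$, the $\iou$ inherits exactly the monotonicity and unimodality behaviour of $A$. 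So it suffices to analyse the scalar function $A(\phi)$.

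For the non-unimodality claim I would use only two symmetries of $A$: it is even, $A(-\phi)=A(\phi)$ (reflect the configuration), and $\pi$-periodic, $A(\phi+\pi)=A(\phi)$ (a rectangle rotated by $\pi$ coincides with itself). A non-constant $\pi$-periodic function of a real variable is never unimodal: its maximum value is attained at both $0$ and $\pi$ with a strictly smaller value somewhere between, so it has two distinct modes. Non-constancy is immediate since $A(0)=lw$ is the global maximum while $A$ is strictly smaller at some angle in $(0,\pi)$; for a concrete witness take $l=w$, e.g. $B_1=\{0,0,0,1,1,1,0\}$ and $B_2=\{0,0,0,1,1,1,\Delta\theta\}$, for which $A(0)=A(\pm\pi/2)=l^2$ are equal maxima separated by the strictly smaller value $A(\pm\pi/4)<l^2$ (a square rotated by $45^\circ$ about its centre does not coincide with itself).

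For the second assertion I would show that $A$ is strictly decreasing on $[0,\pi/4]$; combined with evenness this yields unimodality on $[-\pi/4,\pi/4]$ with the unique peak $A(0)$, i.e. $\iou=1$, at $\phi=0$. Assuming WLOG $l\ge w$, the most transparent route is a transport identity: under rotation about the centre a point at signed arclength $s$ along any side of $R_2$ (measured from the foot of the perpendicular from the centre) has outward-normal speed $-s$, so $\frac{dA}{d\phi}=-\sum_{S}\int_{S\cap R_1} s\,ds$, the sum running over the four sides $S$ of $R_2$; it then remains to verify that this total moment is $\ge 0$ for $\phi\in(0,\pi/4]$. Which sub-segment of each side of $R_2$ lies inside $R_1$, hence the sign of each moment, is controlled by the combinatorial type of $R_1\cap R_2(\phi)$, which changes at a threshold angle $\phi^\star$ determined by the aspect ratio $w/l$ (an octagon below it, a hexagon/parallelogram-type region above); in each regime the relevant $s$-interval is biased so that the moment is nonnegative, and the regimes match continuously at $\phi^\star$. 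Equivalently, one can write $A(\phi)$ in closed form in each regime and differentiate.

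The technical heart, and where I expect the effort to concentrate, is exactly this last step: one must enumerate exhaustively --- in terms of $w/l$ --- the combinatorial types of $R_1\cap R_2(\phi)$ over $[0,\pi/4]$, locate $\phi^\star$, check continuity of $A$ there, and confirm the derivative keeps sign throughout. The cutoff $\pi/4$ is sharp: for $l=w$ the area $A$ starts increasing again immediately past $\pi/4$, so no larger bound works uniformly over all aspect ratios. The first assertion, by contrast, costs essentially nothing once periodicity is observed.
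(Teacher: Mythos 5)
The paper does not actually give a mathematical proof of this lemma. It explicitly states, ``We encountered challenges in establishing a mathematical proof for this assertion, prompting us to turn to experimental results for validation,'' and then plots $\iou$ versus $\Delta\theta$ for several aspect ratios (Fig.~\ref{fig:curve}) as empirical evidence. Your proposal therefore takes a genuinely different route: you attempt an actual proof where the paper offers only a picture.

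Your treatment of the first claim (non-unimodality over the full angle range) is complete and correct, and it costs almost nothing: factoring out the $z$-extent reduces to the planar intersection area $A(\phi)$, the strictly increasing map $t\mapsto t/(2lw-t)$ transfers monotonicity/unimodality between $A$ and $\iou$, and then evenness plus $\pi$-periodicity together with the non-constancy $A(\pi/4)<A(0)=lw$ force at least two modes ($A$ attains its global maximum $lw$ at both $0$ and $\pi$). The square witness $l=w$ is a clean concrete example. This part genuinely upgrades the paper, which proves neither half.

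Your treatment of the second claim (unimodality on $|\Delta\theta|\le\pi/4$) is, however, still only an outline, and you say so yourself: you derive a correct transport identity $\tfrac{dA}{d\phi}=-\sum_S\int_{S\cap R_1}s\,ds$ (the outward normal speed at signed arclength $s$ under rotation about the centre is indeed $-s$), and you correctly identify that what remains is an exhaustive casework over the combinatorial types of $R_1\cap R_2(\phi)$ as a function of $w/l$, checking that the total moment stays nonnegative on $(0,\pi/4]$ in every regime and that the regimes glue continuously at the threshold $\phi^\star$. You do not carry out this casework, so the lemma is not actually established by the proposal any more than by the paper. The approach looks sound and would likely close the gap if executed, but until the sign of the moment sum is verified in each regime (and the sharpness claim at $\pi/4$ checked for $l=w$ versus $l>w$), the second half remains a plausible plan rather than a proof. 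In short: part one you prove, part two you sketch, and the paper proves neither.
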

\begin{proof}
    We encountered challenges in establishing a mathematical proof for this assertion, prompting us to turn to experimental results for validation. 
    Specifically, we generate a curve plotting the \iou{} against the angle difference for various length-to-width ratios. 
    The graph depicted in \cref{fig:curve} serves as empirical evidence supporting our claim.
    
    \begin{figure}[h]
        \centering
        \includegraphics[width=0.6\textwidth]{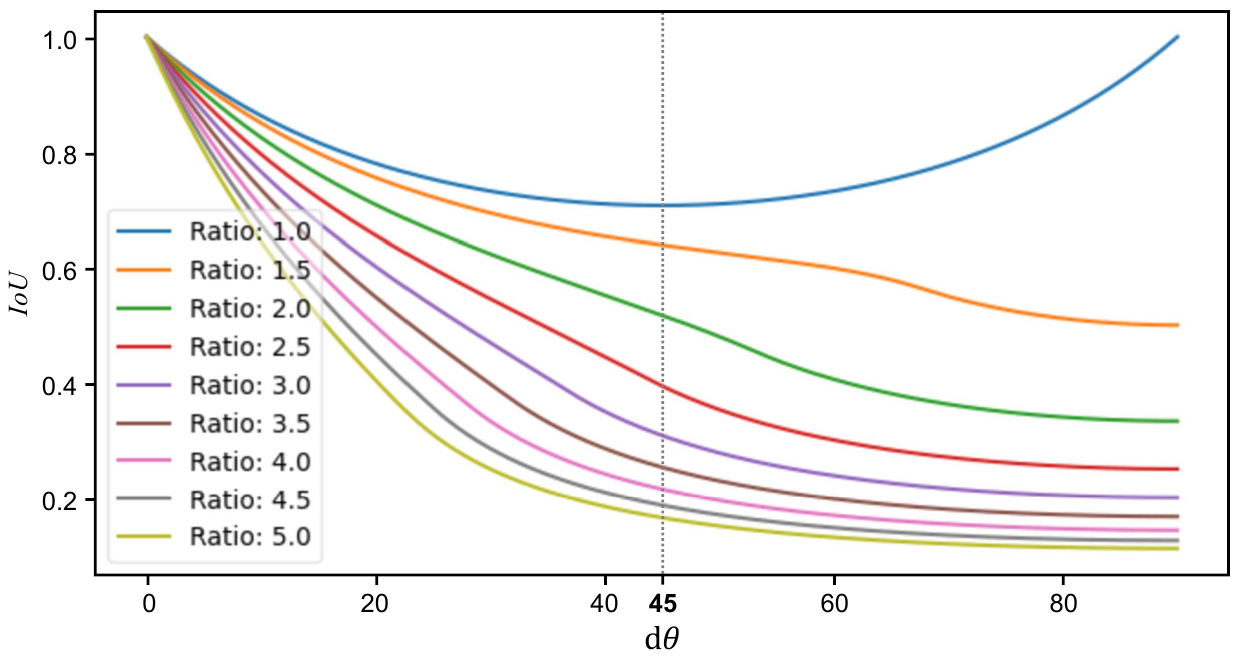}
        \setlength{\abovecaptionskip}{3pt}
        \caption{
          The \iou{} curves with respect to $\theta$ divergence where two boxes share the same centers and sizes.
          As illustrated, when the $\Delta\theta$ divergence is less than $\pi/4$, the \iou{} consistently exhibits a monotonic behavior.
        }
        \label{fig:curve}
        \vspace{-0.2\baselineskip}
    \end{figure}
\end{proof}

Our \MetricL{} is defined as 
\begin{equation}
  \small
  SI = SI_c \times (SI_l + SI_e + SI_h)/3.
\end{equation}
The previous \cref{lemma:size,lemma:center,lemma:angle} essentially validate the $SI_l, SI_e, SI_h$ is marginally unimodal.
Next, we prove that the final \MetricS{} is also marginally unimodal.
  
  \begin{property}
    \label{prop:unimodal}
    The proposed \MetricL{} (\MetricS{}) is marginal unimodal \wrt the disparities of all prediction elements including the prediction score, box center, box size and box heading.
  \end{property}
  \begin{proof}
    For the $SI_c = \max(0, 1 - |c_1 - c_2|/(c^{0.99}-c^{0.01}))$, we can easily conclude that $SI_c$ monotonically non-decreases as the score discrepancy $|c_1 - c_2|$ decreases.
    That means $SI_c$ is unimodal \wrt $c_1 - c_2$.
    $SI_l, SI_e, SI_h$ is also marginally unimodal according to \cref{lemma:size,lemma:center,lemma:angle}.

    As $SI_c, SI_l, SI_e, SI_h$ are all non-negative and each variable is only associated with one of prediction score, box center, box size and box heading, it's easy to derive that $SI = SI_c \times (SI_l + SI_e + SI_h)/3$ is marginal unimodal \wrt all elements.
  \end{proof}

  Our final proof is about the maximum value of the proposed metric:
  \begin{lemma}
    \label{lemma:peak}
    \MetricL{} reaches the peak value of 1 if and only if the predictions are perfectly stable.
  \end{lemma}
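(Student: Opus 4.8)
The plan is to prove both implications by exploiting the product structure $SI = SI_c \times (SI_l + SI_e + SI_h)/3$ together with the elementary fact that each of $SI_c, SI_l, SI_e, SI_h$ lies in $[0,1]$. First I would fix the meaning of \emph{perfectly stable}: the prediction $P_1$ relative to $B^g_1$ coincides with the prediction $P_2$ relative to $B^g_2$, i.e.\ $c_1 = c_2$ and, after projection onto the pivot box, $\hat B_1 = \hat B_2$; equivalently $\hat B^{loc}_1 = \hat B^{loc}_2$, $\hat B^{ext}_1 = \hat B^{ext}_2$, and $\hat B^{hdg}_1 = \hat B^{hdg}_2$.

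For the ``$\Leftarrow$'' direction, assume perfect stability. Then $|c_1 - c_2| = 0$ gives $SI_c = 1$; the equalities $\hat B^{loc}_1 = \hat B^{loc}_2$ and $\hat B^{ext}_1 = \hat B^{ext}_2$ give $SI_l = SI_e = 1$ since the \iou{} of a box with itself is $1$; and $\hat\theta_1 = \hat\theta_2$ yields $|\hat\theta_1 - \hat\theta_2| = 0 < \pi/4$, so $SI_h$ is computed in the ``otherwise'' branch and equals $\iou(\hat B^{hdg}_1, \hat B^{hdg}_1) = 1$. Hence $SI = 1 \cdot (1+1+1)/3 = 1$.

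For the ``$\Rightarrow$'' direction, suppose $SI = 1$. Since $SI_c \in [0,1]$ and $(SI_l + SI_e + SI_h)/3 \in [0,1]$, a product of two numbers in $[0,1]$ equalling $1$ forces both to be $1$, so $SI_c = 1$ and $SI_l + SI_e + SI_h = 3$; as each term is at most $1$, this gives $SI_l = SI_e = SI_h = 1$. From $SI_c = 1$ we get $|c_1 - c_2| = 0$, hence $c_1 = c_2$ (the degenerate case $c^{0.99} = c^{0.01}$, where all scores are already equal, belongs to the corner cases handled in \cref{sec:d}). From $SI_l = 1$ and $SI_e = 1$: two boxes with identical orientation and identical dimensions (respectively identical center and orientation) have \iou{} equal to $1$ only if they coincide, so the projected centers and extents agree between the two frames. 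For $SI_h = 1$: the value $1$ cannot arise in the first branch of the definition (which is $0$), so $|\hat\theta_1 - \hat\theta_2| < \pi/4$ and $\iou(\hat B^{hdg}_1, \hat B^{hdg}_2) = 1$; by \cref{lemma:angle}, the \iou{} of two boxes differing only in heading is unimodal on $|\Delta\theta| \le \pi/4$ with its maximum at $\Delta\theta = 0$, hence $\hat\theta_1 = \hat\theta_2$. Combining the three components, $\hat B_1 = \hat B_2$, and together with $c_1 = c_2$ this is exactly perfect stability.

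The main obstacle I anticipate is making airtight the step ``$\iou = 1 \Rightarrow$ identical box'' in each decoupled case, especially for the heading term: since \cref{lemma:angle} is itself only empirically justified, the cleanest route is to invoke it verbatim (unimodality with a unique maximiser on $|\Delta\theta| \le \pi/4$) rather than re-deriving strict monotonicity. A secondary subtlety is the piecewise form of $SI_h$, where one must explicitly exclude the $|\Delta\theta| \ge \pi/4$ branch, and confirm that ``perfectly stable'' should mean $\hat\theta_1 = \hat\theta_2$ rather than, say, $\hat\theta_1 = \hat\theta_2 + \pi$, which in the autonomous-driving context is a genuinely different, heading-flipped prediction.
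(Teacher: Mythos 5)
Your proof is correct and follows essentially the same route as the paper's: both directions reduce to the observation that $SI_c$ and $(SI_l+SI_e+SI_h)/3$ both lie in $[0,1]$, so the product equals $1$ iff every sub-index equals $1$, which in turn holds iff the confidence scores and the pivot-projected boxes coincide. You are somewhat more explicit than the paper in the ``$\Rightarrow$'' direction (spelling out why $\iou=1$ forces each decoupled box pair to coincide, and invoking \cref{lemma:angle} plus the $\pi/4$ cutoff to rule out heading-flipped maximisers), but this is a fleshing-out of the same argument rather than a different approach.
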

  \begin{proof}
      Since $SI_c, SI_l, SI_e, SI_h$ are in the range of [0, 1], achieving $SI=1$ implies that all values are 1. 
      This condition is met when the scores and all elements of the bounding boxes, as transformed by the defined operations, are identical.

      Conversely, if the predictions are perfectly stable, meaning that all \iou{}s are 1 and $SI_c =\max(0, 1 - |c_1 - c_2|/(c^{0.99}-c^{0.01}))= 1$, we can deduce that $SI = 1$.
  \end{proof}

\section{Extra Details in Stable Index}\label{sec:d}
SI essentially evaluates the stability of an object across two consecutive frames. 
In the procedure of matching objects, there are two corner cases which are handled:
(1) If an object is observed and labeled in just one frame, this case is disregarded as it doesn't form a valid object pair. 
(2) If the object exists in both frames but the Hungarian algorithm fails to find two predictions, the SI value is set to 0.

We define the consecutive frames as two frames with a time interval of $\Delta_t$.
In our implementation, we set $\Delta_t$ to 0.5s.
For a trajectory of length $N$, there will be $N-\Delta_t/d$ object pairs, where $d$ is the time interval for capturing data points.
We opt not to consider all object pairs, as we deem stability more meaningful within the context of short time intervals.
The computation of SI is efficient. 
On a machine with an A6000 GPU and an 8352Y CPU, calculating SI takes $\sim$2 mins, much faster than computing mAP ($>$30 mins) and MOTA ($\sim$12 mins).

\section{Extra Experiments}\label{sec:e}
This section presents our additional experiments.

\subsection{Effects of data augmentation.}\label{sec:e1}
\begin{table}[tb]
  \centering
  \caption{
      Effects of augmentation on detection stability. "Trans" and "Drop" mean applying random translation and random point dropping in training, respectively. Param.: parameter.
  }\label{tab:augmentation}
  \setlength{\tabcolsep}{4pt}
  \scalebox{0.9}{
      \begin{tabular}{cccccccc}
          \toprule[1pt]
          \multirow{2}*{Aug}   & \multirow{2}*{Param.} & \multicolumn{2}{c}{Vehicle(\%)} & \multicolumn{2}{c}{Pedestrian(\%)} & \multicolumn{2}{c}{Cyclist(\%)}  \\ \cmidrule(r){3-4} \cmidrule(r){5-6} \cmidrule(r){7-8}
                               &                      & \mAPH & SI                      & \mAPH & SI                         & \mAPH & SI \\ \midrule
          -                    & -                    & 73.73 & 79.77                   & 69.50 & 67.43                      & 71.04 & 68.48  \\ \midrule
          \multirow{3}*{Trans} & 0.1                  & 73.84 & 79.92                   & 70.07 & 67.90                      & 65.10 & 67.55  \\
                               & 1                    & 73.90 & 79.98                   & 69.68 & 67.99                      & 64.97 & 67.57  \\
                               & 10                   & 73.77 & 79.93                   & 70.47 & 67.87                      & 64.53 & 67.71  \\ \midrule
          \multirow{3}*{Drop}  & 20\%                 & 74.02 & 79.97                   & 70.38 & 68.02                      & 65.19 & 68.10  \\
                               & 40\%                 & 73.94 & 80.06                   & 70.22 & 67.93                      & 65.16 & 68.35  \\
                               & 60\%                 & 74.01 & 80.05                   & 69.80 & 67.83                      & 65.49 & 67.89  \\ \toprule[1pt]
      \end{tabular}  
  }
\end{table}

Data augmentation is a commonly used technique to enhance model robustness against variations in the dataset. 
We examine whether introducing more augmentation will enhance model stability.
In addition to basic augmentations, we incorporated random translation and point dropping during the training of CenterPoint~\cite{yin2021center}.
For each augmentation, we selected three different scales to ensure experiment universality. 
The results are presented in \cref{tab:augmentation}.

Despite the increased augmentation scale, the changes in mAPH and SI are marginal. 
Notably, the model achieves its highest stability in vehicle detection at 80.06 when randomly dropping 40\% of points during training, which is only 0.29 higher than the baseline. 
This indicates that the application of augmentation offers limited influences in improving model stability.

\subsection{Comparisons of Different Metrics}

MAP and the proposed \MetricS{} are two metrics for object detectors. 
In the main text, we have demonstrated that these metrics capture different properties of the detection results.
It's also an interesting question how \MetricS{} relates to tracking metrics such as MOTA/MOTP, as they all somehow capture temporal information.
In this part, we provide more analyses on this question.

\MetricL{} and tracking metrics differ in the following aspects:
(1) Tracking metrics primarily assess object trackers instead of directly evaluating detectors.
Consequently, their values can be highly influenced by the effectiveness of the tracking modules.
In contrast, \MetricS{} serves as a detection metric.
(2) Tracking metrics concentrate more on the long-term tracking performances, while \MetricS{} is designed to capture short-term properties as the stability is more meaningful within the context of short time intervals.
(3) Tracking metrics emphasize whether objects are well-tracked while disregarding the inconsistency across frames.
As a result, they can exhibit different patterns compared to the proposed \MetricS{}.
\cref{fig:mota_si} shows some toy examples that demonstrate the lack of correlation between MOTA and \MetricS{}.

\begin{figure}[t!]
    \centering
    \includegraphics[width=0.8\textwidth]{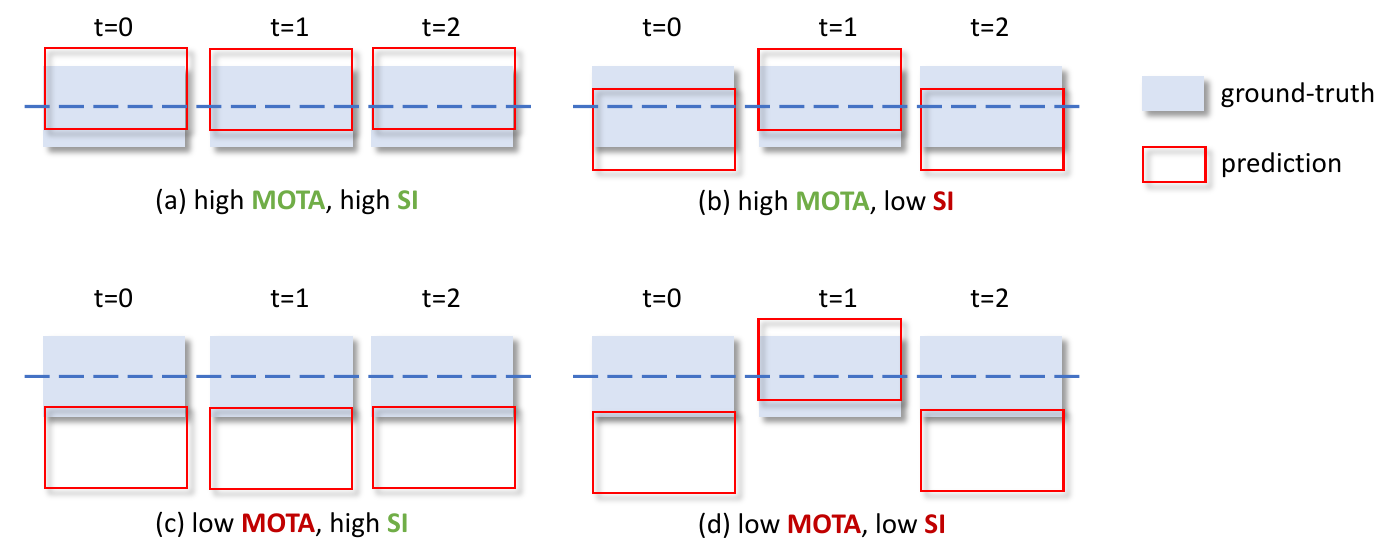}
    \caption{MOTA and \MetricS{} can have distinct patterns for different trajectories.
    }\label{fig:mota_si}
\end{figure}

We further provide experimental comparisons of these metrics, as presented in \cref{tab:metric_comp}.
Object tracking was performed using SimpleTrack~\cite{pang2021simpletrack} with default settings.
It can been observed that there is no clear correlation between \MetricS{} and MOTA.
For example, Second has a much higher \MetricS{} value than CenterPoint (81.37 \vs 80.52). 
However, the MOTA of Second is 1.03 lower than that of CenterPoint.
Notably, PV R-CNN++ achieves the best tracking results, while lagging behind VoxelNet and DSVT in terms of \MetricS{}.

\begin{table}[t!]
    \centering
    \caption{Metric values on class vehicle for different object detectors.
    The tracking results for MOTA/MOTP are generated by SimpleTrack with default setting.
    }\label{tab:metric_comp}
    \setlength{\tabcolsep}{12pt}
    \scalebox{0.9}{
        \begin{tabular}{lcccc}
            \toprule[1pt]
            Methods     & \mAPH $\uparrow$        & SI $\uparrow$         & MOTA $\uparrow$        & MOTP $\downarrow$    \\ \hline
            \Second & 72.60 & 81.37 & 53.77 & 17.25 \\
            \CPPPL & 72.82 & 80.61 & 53.10 & 16.73 \\
            \PPL & 72.84 & 80.84 & 53.59 & 17.23 \\
            \CP & 73.73 & 80.52 & 54.80 & 16.56 \\
            \PartaNet & 75.02 & 82.86 & 58.40 & 16.46 \\
            \PVRCNN & 75.92 & 83.73 & 59.17 & 16.56 \\
            \VoxelRCNN & 77.19       & 84.26       & \Sec{59.78} & 16.53 \\
            \VoxelNext   & 77.84       & \Sec{84.82} & 59.19       & 16.49 \\
            \PVRCNNPP  & \Sec{77.88} & 84.49       & \Fir{60.43} & \Fir{16.34}       \\
            \DSVT        & \Fir{78.82} & \Fir{84.90} & 59.64       & \Sec{16.48}       \\ 
            \toprule[1pt] 
        \end{tabular}
    }
\end{table}

\subsection{Analysis of Performance Enhancements with PCL}
In addition to enhancing detection stability, our proposed PCL framework demonstrates evident performance improvements, particularly for the vehicle class. 
To delve into the reasons behind mAP boosts, we analyze object recalls under precision 0.6, as depicted in \cref{tab:pr}. 
It can been seen that the recalls improves especially for the 
infrequent and hard objects with large length-to-width ratios.
This indicates that encouraging prediction consistency, rather than benefiting easy cases, contributes to greater gains in hard scenarios.

\begin{table}[t!]
    \centering
    \caption{The vehicle recalls for different length-to-width ratios under precision 0.6 .}\label{tab:pr}
    \setlength{\tabcolsep}{5pt}
    \scalebox{0.9}{
        \begin{tabular}{clll}
            \toprule[1pt]
            Length-to-width ratio (LWR)    & $0\sim3$  & $3\sim4$  & $4\sim\inf$ \\ 
            \hline
            CenterPoint (mAP: 73.73)  & 59.15 & 42.61 & 36.48\\
            + PCL (n=16) (mAP: 74.54) & 59.13 \improvea{-0.02} & 44.14 \improveb{1.53} & 39.74 \improveb{3.26}\\ 
            \toprule[1pt] 
        \end{tabular}
    }
\end{table}

\subsection{The Effects of PCL on DSVT}
\begin{table}[t]
  \centering
  \caption{
      The performances of DSVT models.
  }\label{tab:dsvt}
  \renewcommand{\arraystretch}{1.2}
  \scalebox{0.76}{
  \begin{tabular}{l >{\columncolor{lightgray!40}}c >{\columncolor{lightgray!40}}c|cccc >{\columncolor{lightgray!40}}c >{\columncolor{lightgray!40}}c| cccc >{\columncolor{lightgray!40}}c >{\columncolor{lightgray!40}}c| cccc}
      \toprule[0.8pt] 
      \multirow{2}*{Methods} & \multicolumn{6}{c}{Vehicle(\%)}                                                   & \multicolumn{6}{c}{Pedestrain(\%)}                                                              & \multicolumn{6}{c}{Cyclist(\%)}                                   \\ \cmidrule(r){2-7} \cmidrule(r){8-13} \cmidrule(r){14-19}
                             & \mAPH\hspace*{-2.4pt}       & SI          & \Conf       & \Loc        & \Ext        & \Hea        & \mAPH\hspace*{-2.4pt}        & SI          & \Conf       & \Loc        & \Ext        & \Hea        & \mAPH\hspace*{-2.4pt}       & SI          & \Conf       & \Loc        & \Ext        & \Hea       \\ \toprule[0.9pt]
      \DSVT                  & 78.82 & 84.90 & 92.5 & 86.9       & 91.5       & 94.8 & 76.81 & 74.58 & 91.9 & 76.5 & 88.7       & 75.9 & 75.44 & 76.20 & 88.2 & 80.5 & 86.1  & 89.9 \\
      \hline
      w/o PCL & 78.79 & 85.04 & 92.5 & 87.0 & 91.6 & 95.0 & 76.65 & 74.72 & 91.8 & 76.5 & 88.6 & 76.5 & 75.42 & 76.13 & 88.0 & 80.4 & 86.2 & 90.3\\
      w/ PCL & 78.84 & 85.81 & 93.1 & 87.1 & 92.3 & 95.1 & 76.69 & 75.94 & 92.4 & 76.5 & 90.2 & 77.3 & 75.34 & 77.06 & 88.7 & 80.4 & 87.1 & 90.4\\
      \toprule[0.8pt] 
  \end{tabular}
  }
\end{table}

In the main text, we apply the PCL framework to the popular CenterPoint model~\cite{yin2021center}. 
To validate the generality of our PCL framework across different detectors, we implement the PCL on the transformer-based DSVT model and present the results in \cref{tab:dsvt}.
If fine-tuning process without  prediction consistency loss, we observe a slight drop in mAP, while the \MetricS{} shows a modest increase. 
The overall performance of DSVT before and after fine-tuning does not exhibit significant differences. 
In contrast, our PCL aids DSVT in maintaining detection performance after fine-tuning and substantially increases the \MetricS{}.
The \MetricS{} is boosted by 0.91, 1.36, and 0.86 for the vehicle, pedestrian, and cyclist classes, respectively. 
These results demonstrate the efficacy and generality of the proposed PCL framework.

The sub-indicators of the \MetricS{} offer insights into the specific aspects contributing to the stability improvements.
From \cref{tab:dsvt},we observe enhancements in the stability of confidence scores and box extents, while the improvements in the stability of the other two elements are comparatively less pronounced.
This phenomenon diverges from the behavior observed in CenterPoint, where the stability of all elements experiences a significant boost.
One obvious explanation is that DSVT outperforms CenterPoint in terms of detection performance.
Another possible reason is that transformer-based feature extractor 
aligns better with the sparse nature of point clouds compared to CNN-based approaches.
Consequently, the transformer-based model is capable of generating more stable estimations for heading and localization.

\subsection{Analysis of Offline Auto-labelling Methods}
Recently, offline auto-labeling methods~\cite{qi2021offboard,fan2023once,ma2023detzero} have achieved exciting performances, surpassing even human labels. 
We utilize our \MetricS{} to analyze how can these auto-labeling methods improve detection stability. 
In \cref{tab:ctrl}, the results of a 16-frames detector before and after using CTRL~\cite{fan2023once} are presented, showcasing a substantial improvement in detection stability from 89.90 to 93.38.
Some other interesting findings include:
(1) Box localization and extent exhibit the most significant stability improvements. 
Confidence scores also display increased stability after the offline refinements. 
However, box heading shows the lowest improvements, indicating that heading stability is the most challenging aspect for enhancements.
(2) Heading stability is enhanced only for objects farther than 50m. 
This suggests that objects within 50m may already have sufficiently accurate heading estimations.
(3) The stability improvement is positively correlated with object distance, aligning with intuition as there is more room for optimization for farther objects.

\begin{table}[t!]
    \centering
    \caption{The detection stability before and after the detector uses the offline auto-labelling method.}\label{tab:ctrl}
    \setlength{\tabcolsep}{8pt}
    \scalebox{0.9}{
        \begin{tabular}{ll|c|cccc}
            \toprule[1pt]
            Method   & Breakdown   & SI    & \Conf{} & \Loc{} & \Ext{} & \Hea{} \\ 
            \hline
            \multirow{4}*{Before CTRL} & \Sec{Overall} & \Sec{89.90} & \Sec{94.67}  & \Sec{92.47}  & \Sec{95.76}  & \Sec{95.26}  \\
            &$[0m, 30m)$ & 95.48 & 98.35 & 95.49 & 97.05 & 98.35 \\
            &$[30m, 50m)$ & 90.60 & 95.19 & 92.83 & 95.91 & 95.62 \\
            &$[50m, \inf)$ & 83.23 & 90.22 & 88.68 & 94.24 & 91.59 \\
            \hline
            \multirow{4}*{After CTRL}  & \Sec{Overall} & \Sec{93.38} & \Sec{96.84}   & \Sec{95.36}  & \Sec{97.60}  & \Sec{95.45}  \\
            &$[0m, 30m)$ & 96.77 & 98.86 & 96.94 & 98.30 & 98.23 \\
            &$[30m, 50m)$ & 93.78 & 97.05 & 95.69 & 97.74 & 95.69 \\
            &$[50m, \inf)$ & 89.44 & 94.53 & 93.37 & 96.74 & 92.31\\
            \toprule[1pt] 
        \end{tabular}
    }
\end{table}

\subsection{Visualizations}

\begin{figure}[t]
    \centering
    \includegraphics[trim={0pt 0pt 0pt 0pt},clip, width=0.23\textwidth]{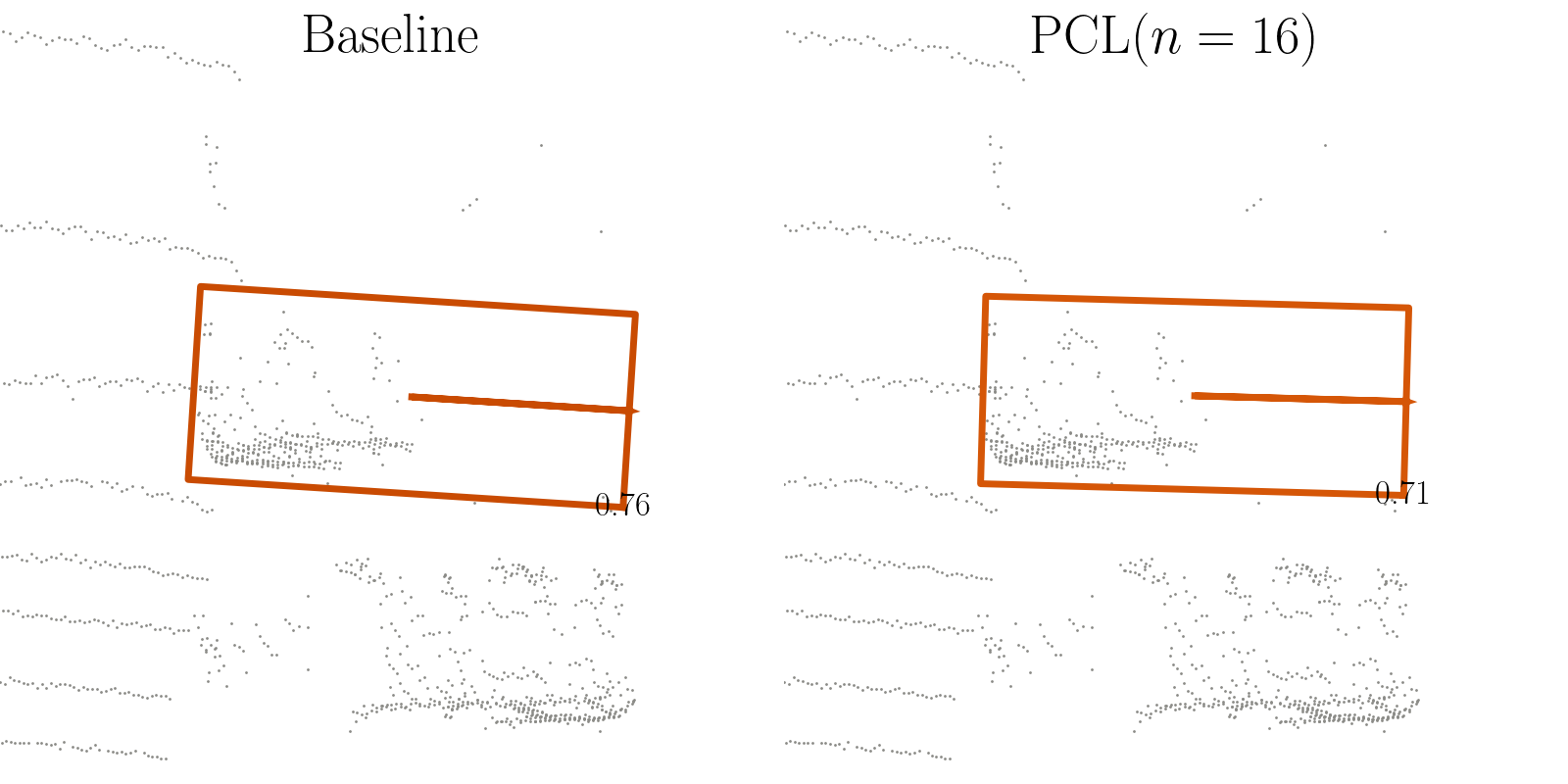}
    \includegraphics[trim={0pt 0pt 0pt 0pt},clip, width=0.23\textwidth]{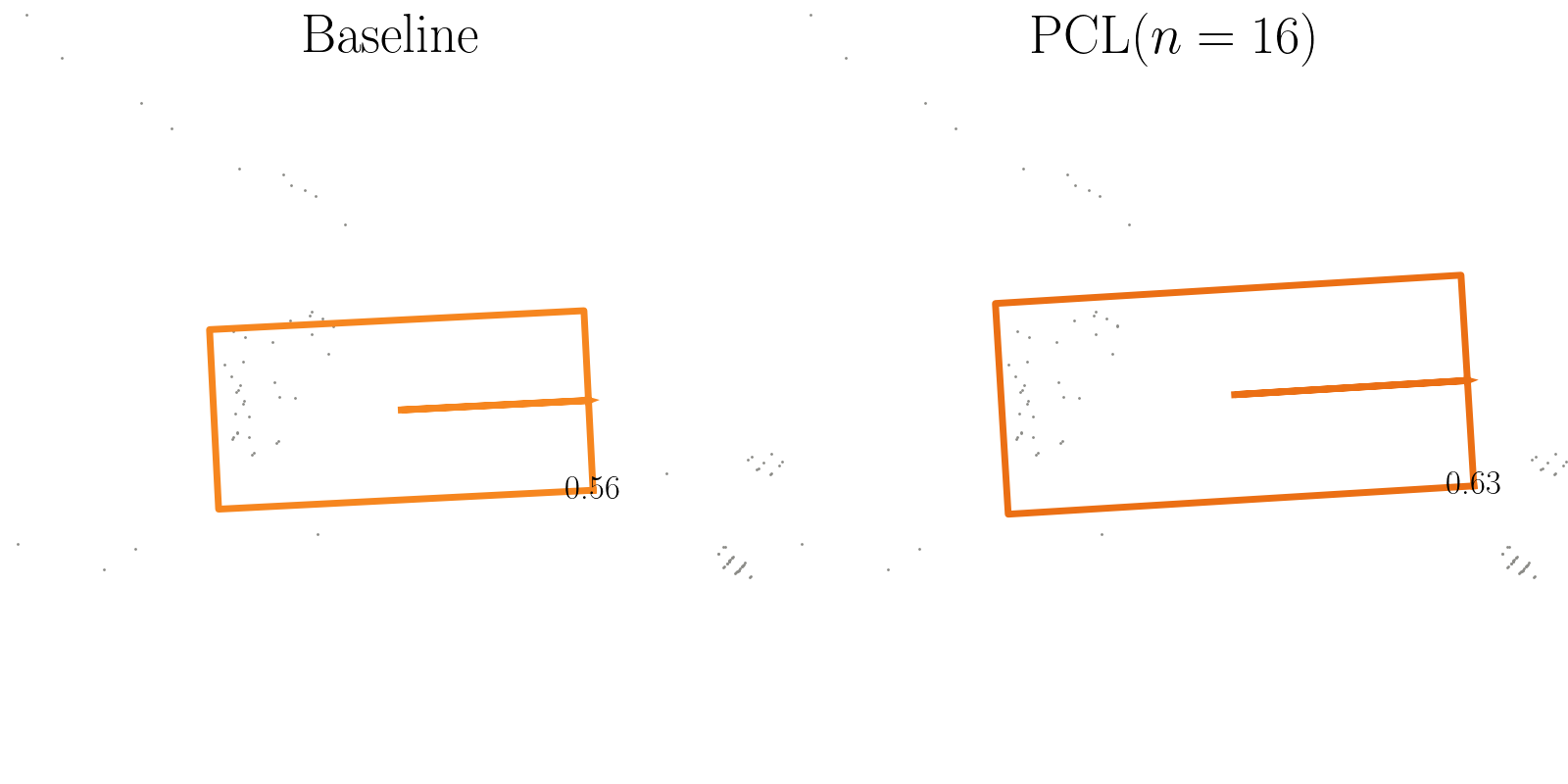}
    \includegraphics[trim={0pt 0pt 0pt 0pt},clip, width=0.23\textwidth]{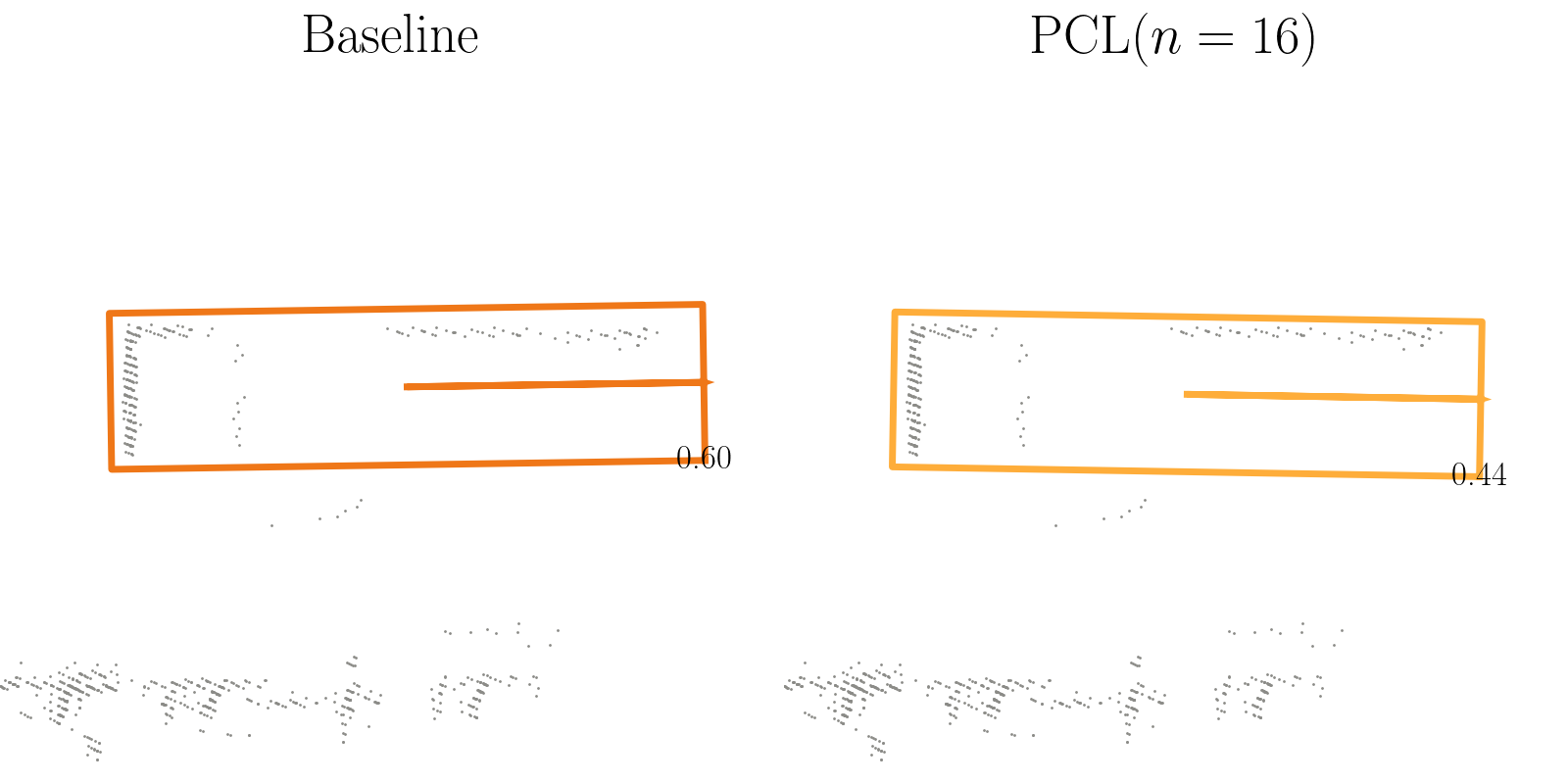}
    \includegraphics[trim={0pt 0pt 0pt 0pt},clip, width=0.23\textwidth]{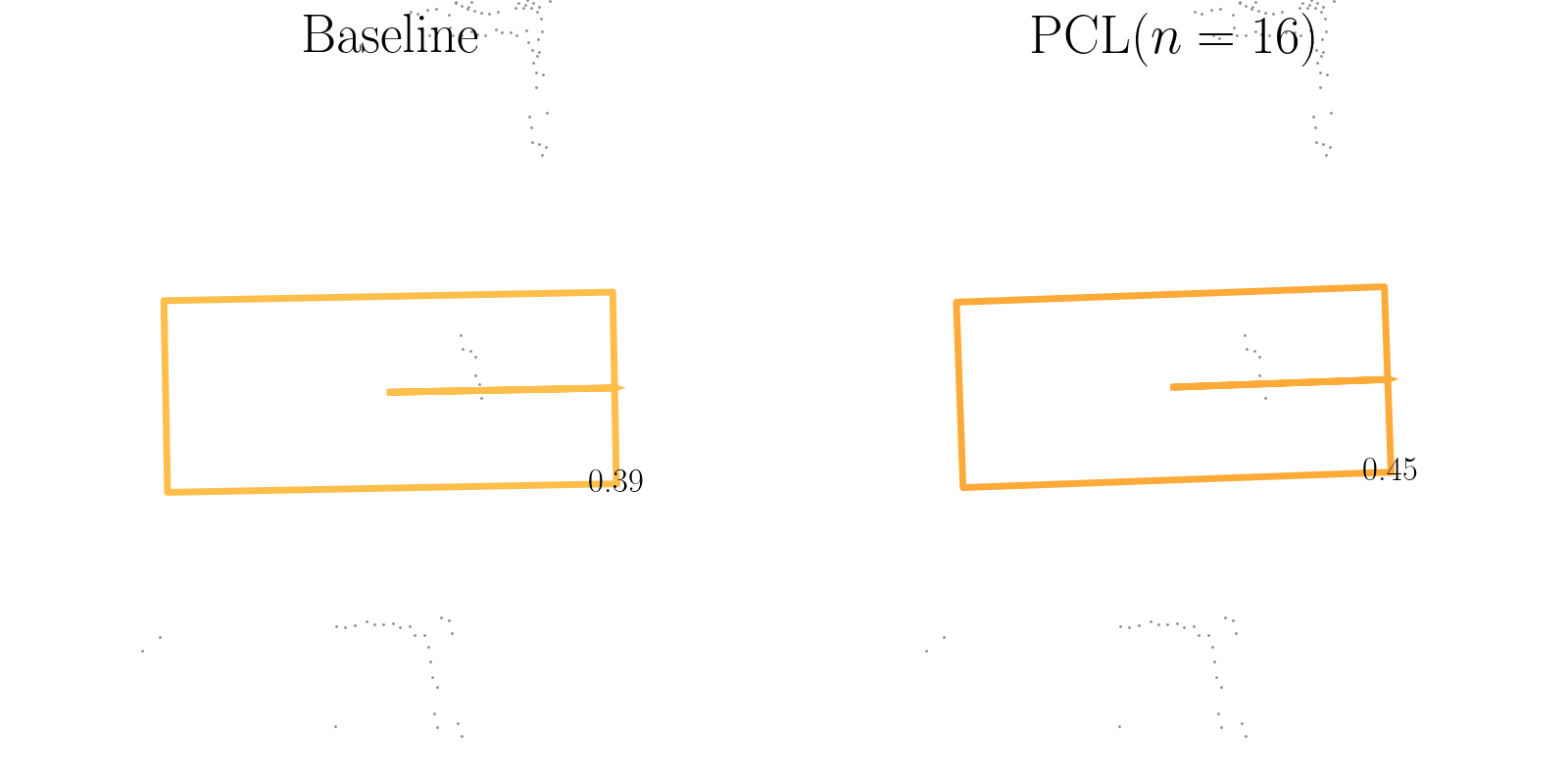}
    \put(-320, 55){\footnotesize (a) Confidence}
    \put(-235, 55){\footnotesize (b) Localization}
    \put(-145, 55){\footnotesize (c) Extent}
    \put(-60, 55){\footnotesize (d) Head}

    \includegraphics[trim={0pt 0pt 0pt 0pt},clip, width=0.23\textwidth]{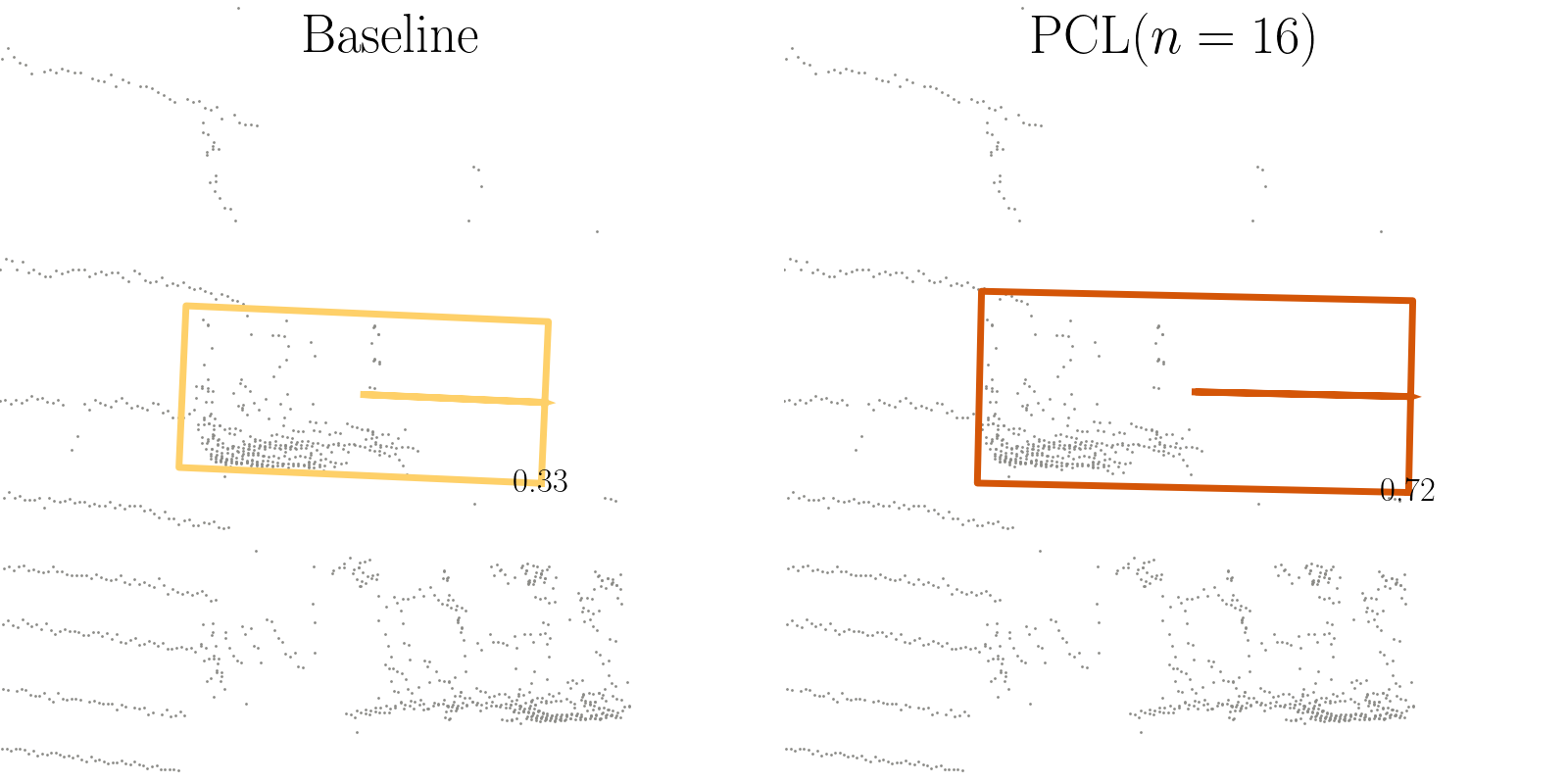}
    \includegraphics[trim={0pt 0pt 0pt 0pt},clip, width=0.23\textwidth]{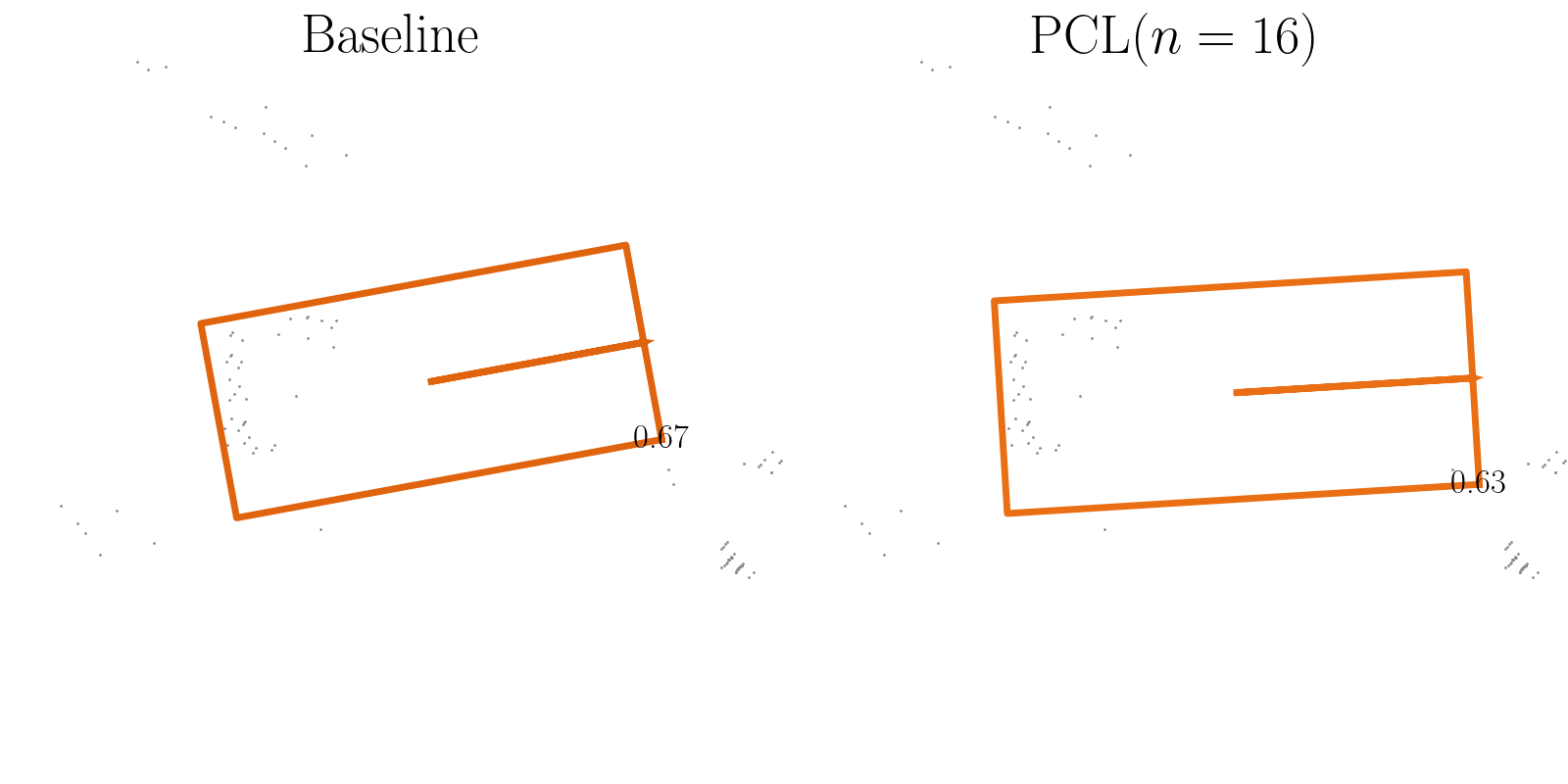}
    \includegraphics[trim={0pt 0pt 0pt 0pt},clip, width=0.23\textwidth]{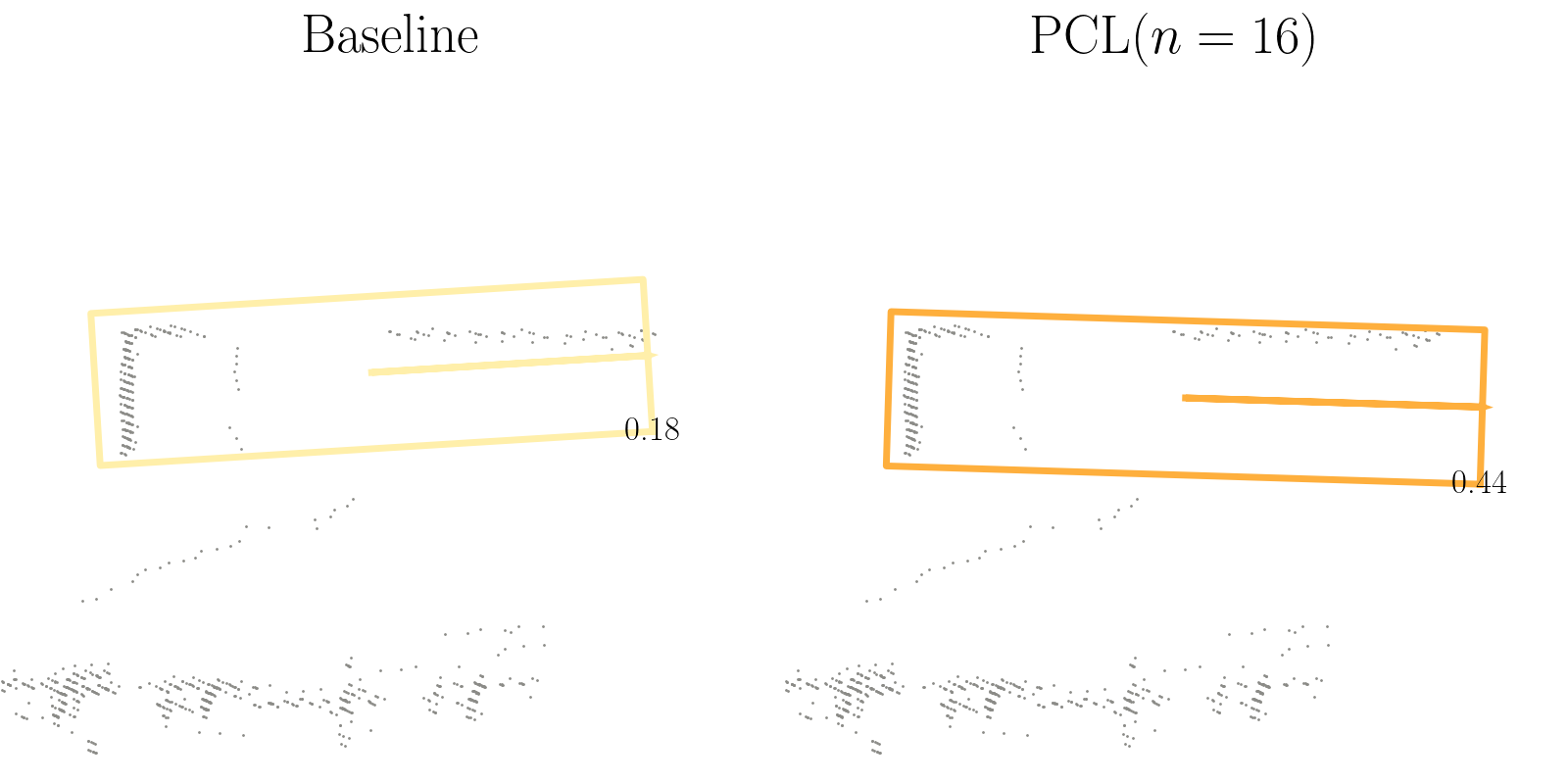}
    \includegraphics[trim={0pt 0pt 0pt 0pt},clip, width=0.23\textwidth]{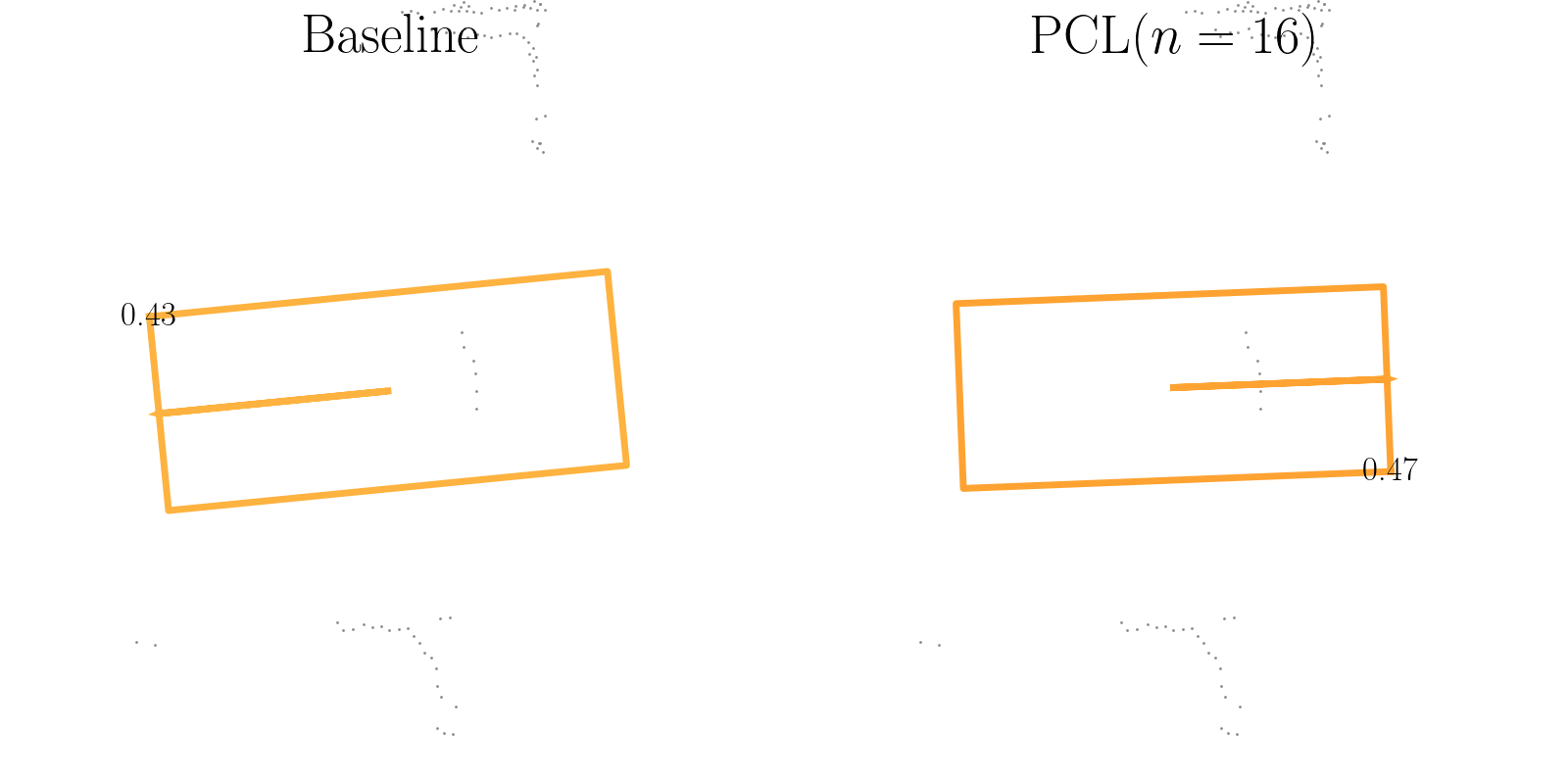}

    \includegraphics[trim={0pt 0pt 0pt 0pt},clip, width=0.23\textwidth]{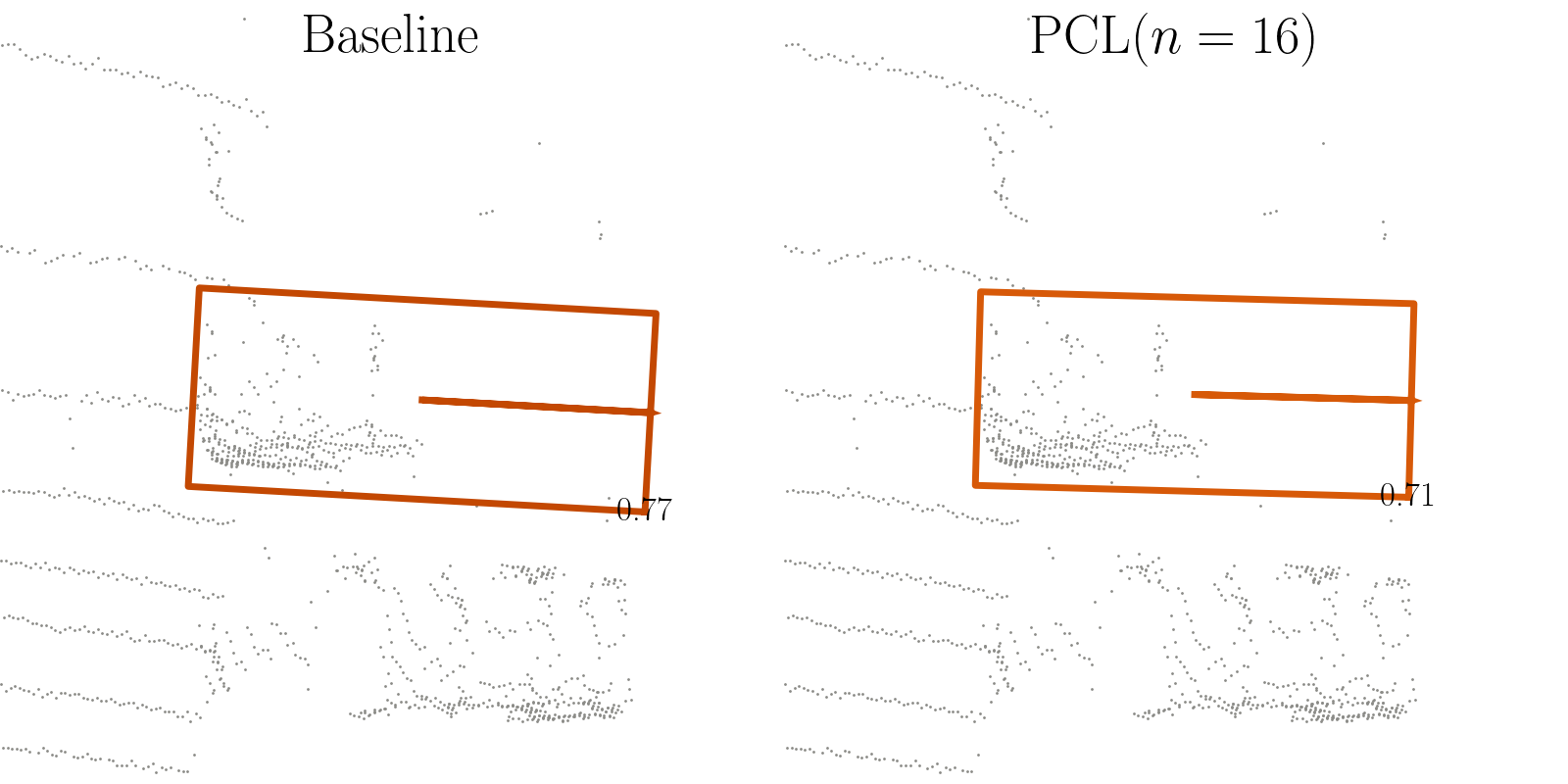}
    \includegraphics[trim={0pt 0pt 0pt 0pt},clip, width=0.23\textwidth]{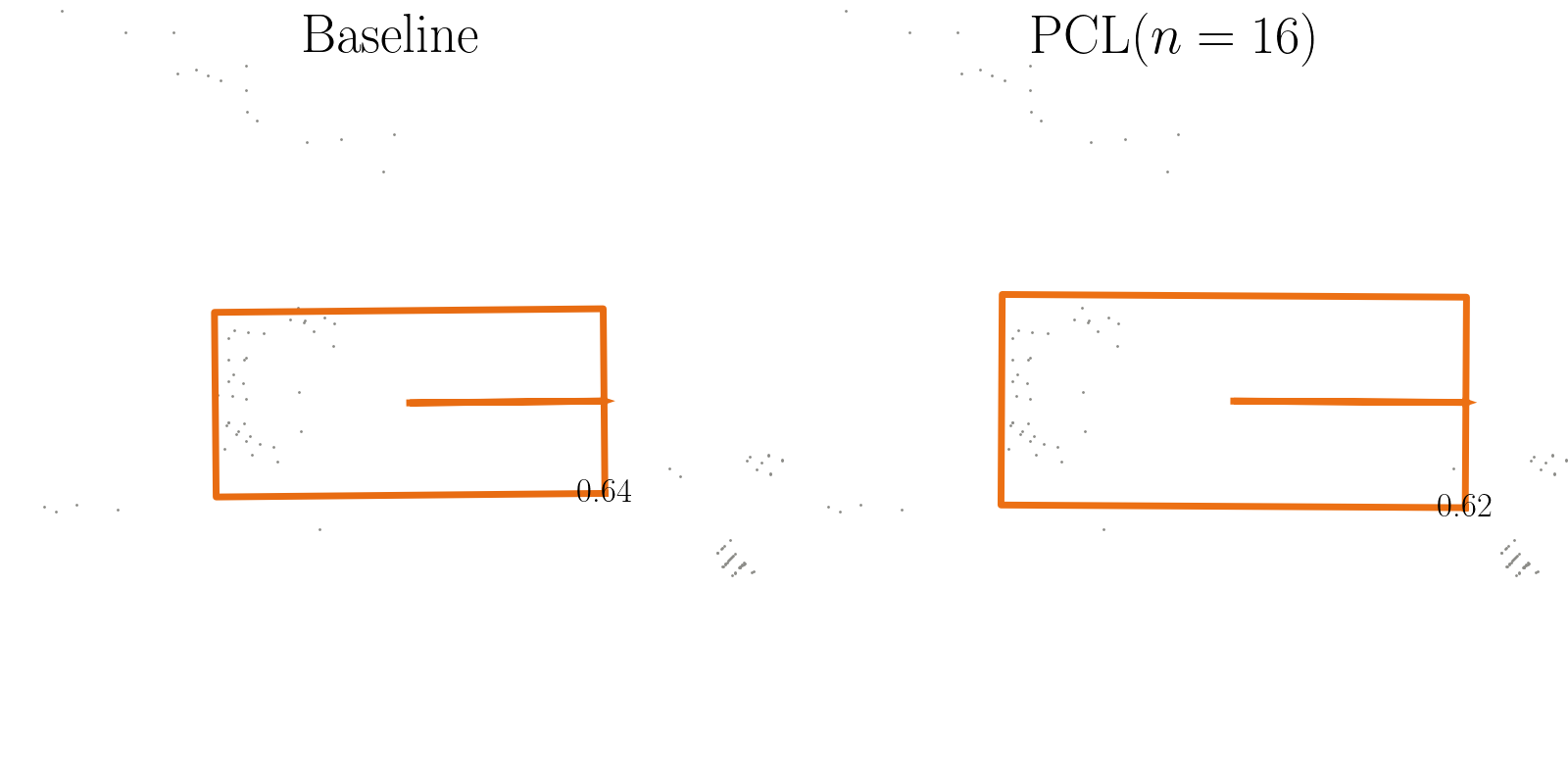}
    \includegraphics[trim={0pt 0pt 0pt 0pt},clip, width=0.23\textwidth]{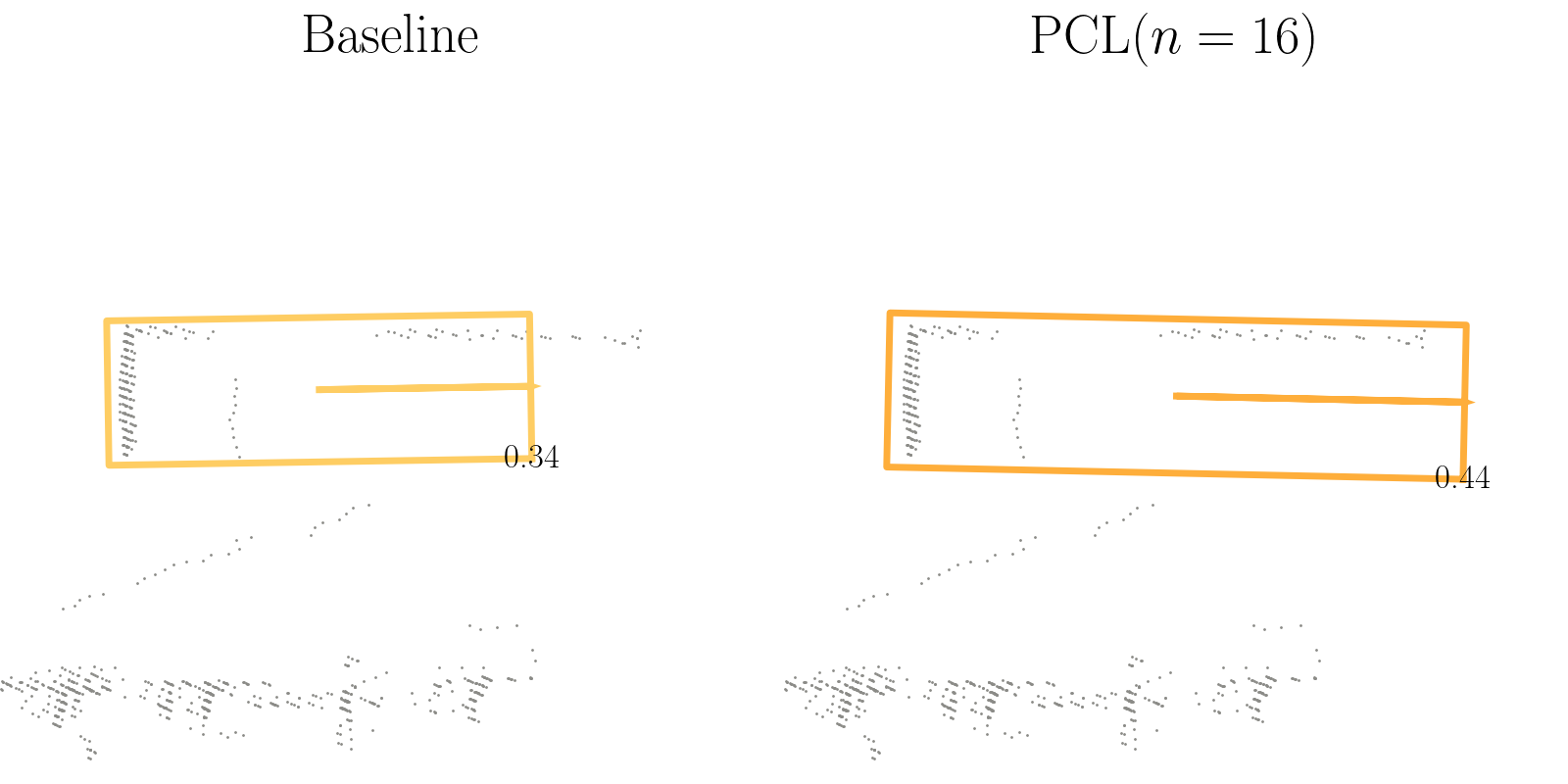}
    \includegraphics[trim={0pt 0pt 0pt 0pt},clip, width=0.23\textwidth]{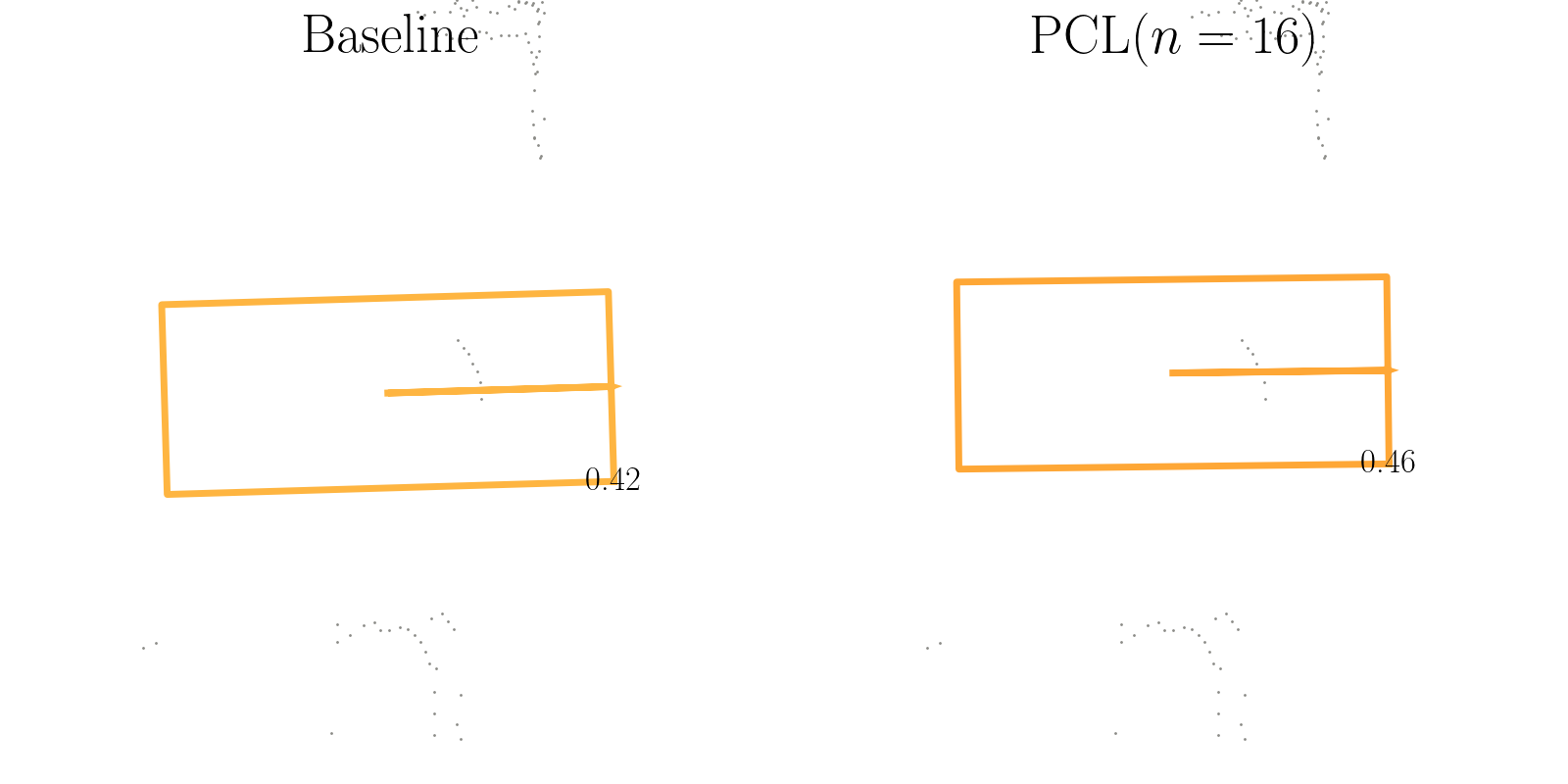}

    \includegraphics[trim={0pt 0pt 0pt 0pt},clip, width=0.23\textwidth]{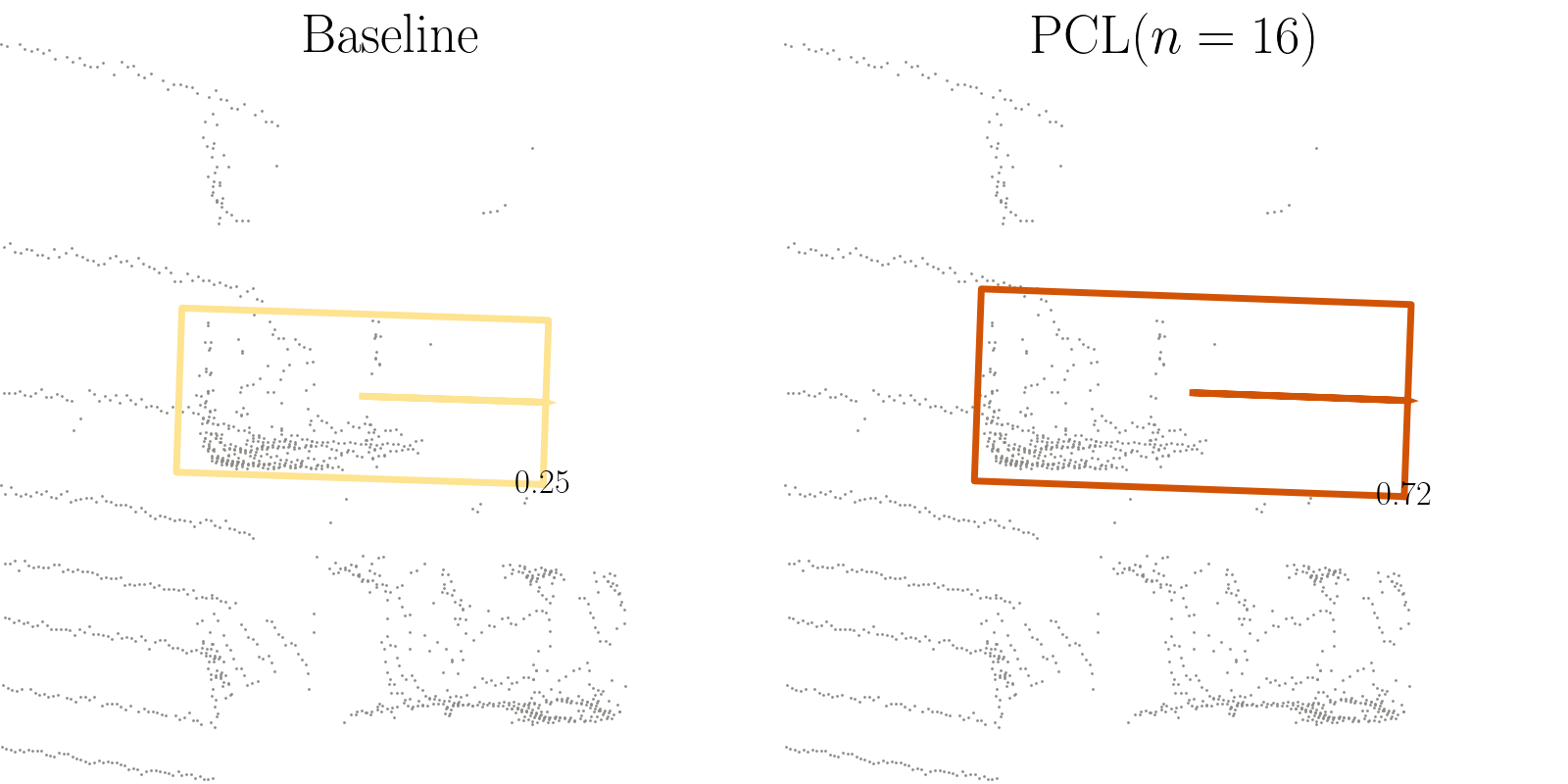}
    \includegraphics[trim={0pt 0pt 0pt 0pt},clip, width=0.23\textwidth]{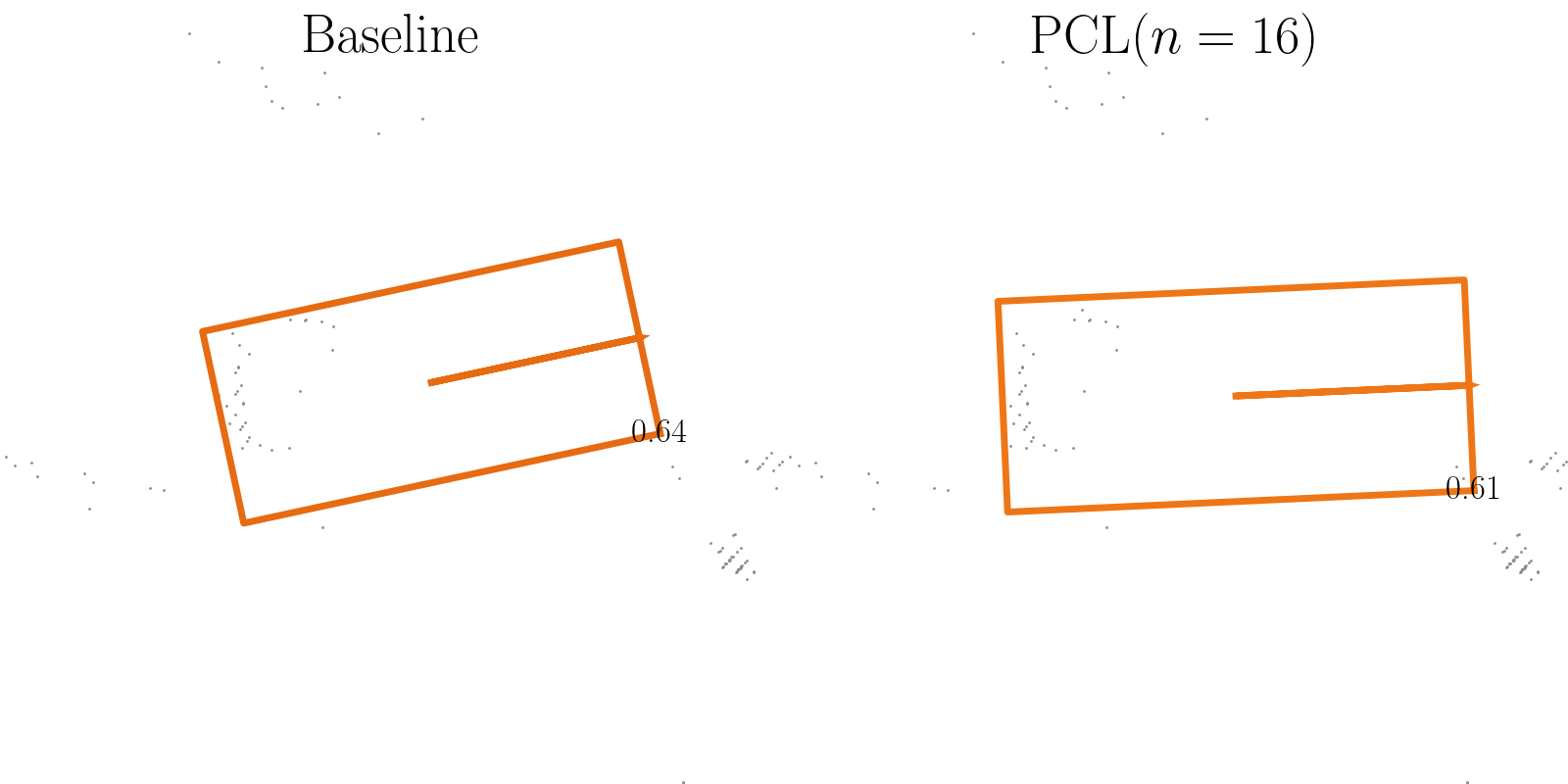}
    \includegraphics[trim={0pt 0pt 0pt 0pt},clip, width=0.23\textwidth]{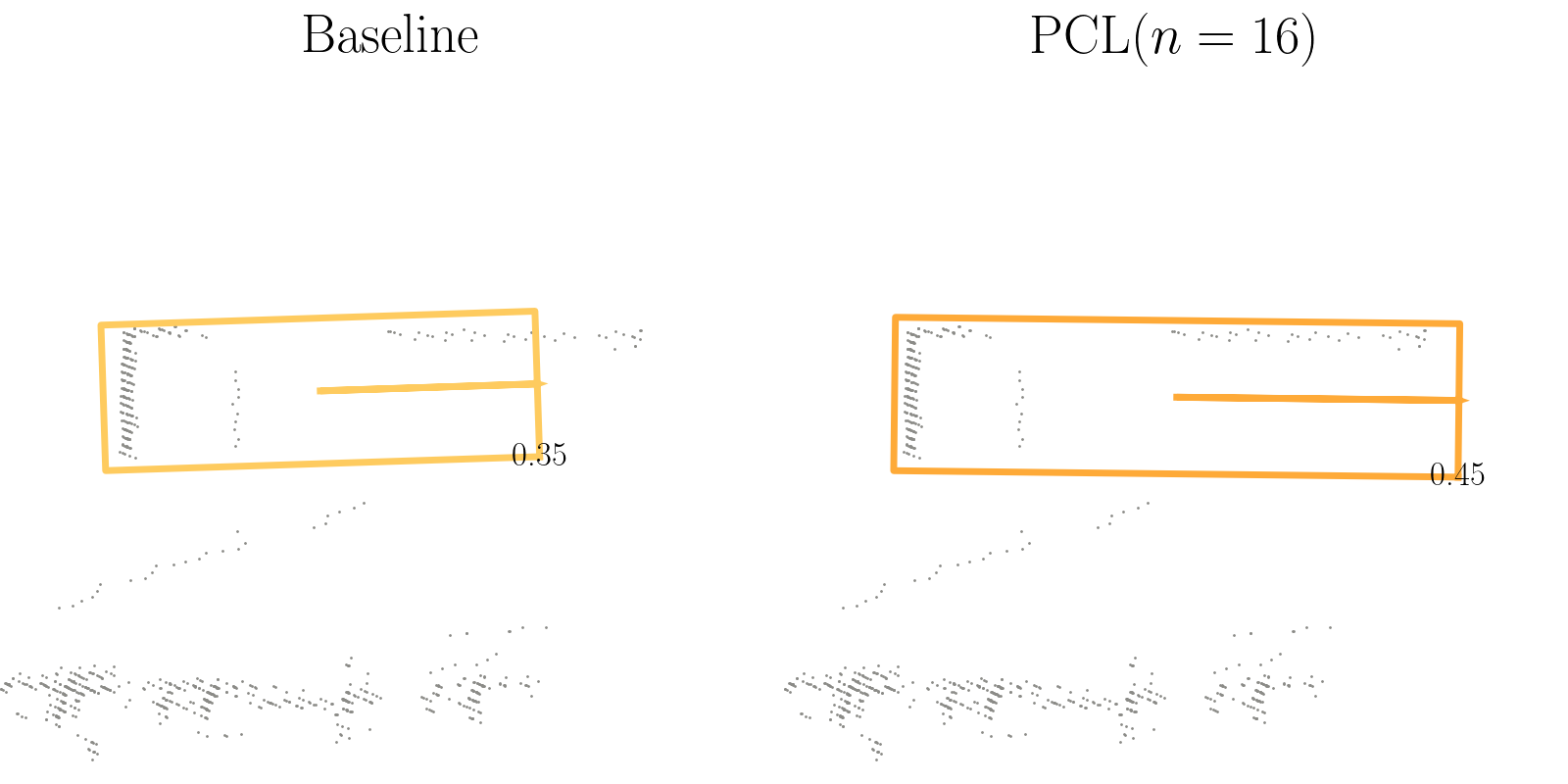}
    \includegraphics[trim={0pt 0pt 0pt 0pt},clip, width=0.23\textwidth]{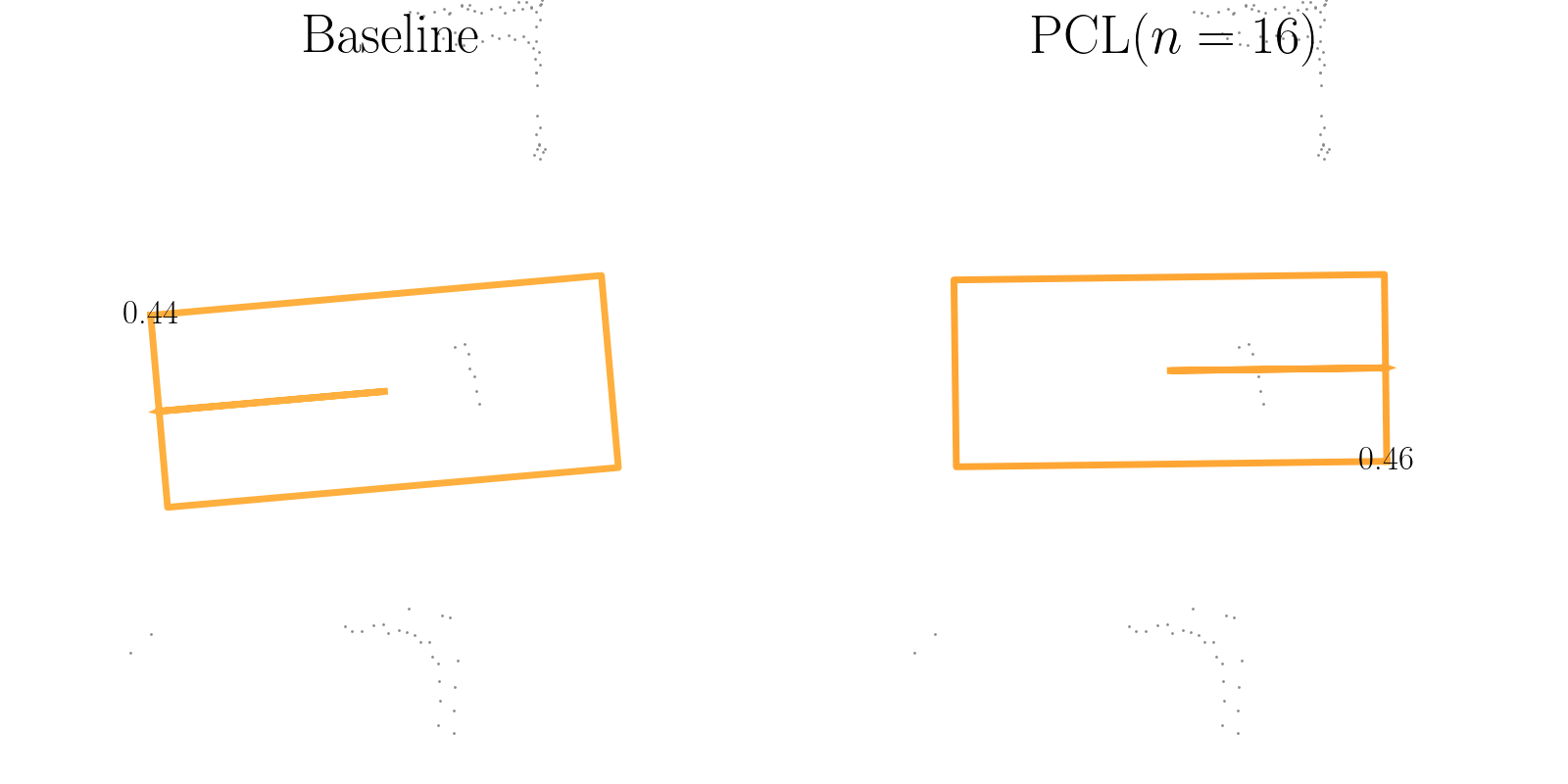}

    \includegraphics[trim={0pt 0pt 0pt 0pt},clip, width=0.23\textwidth]{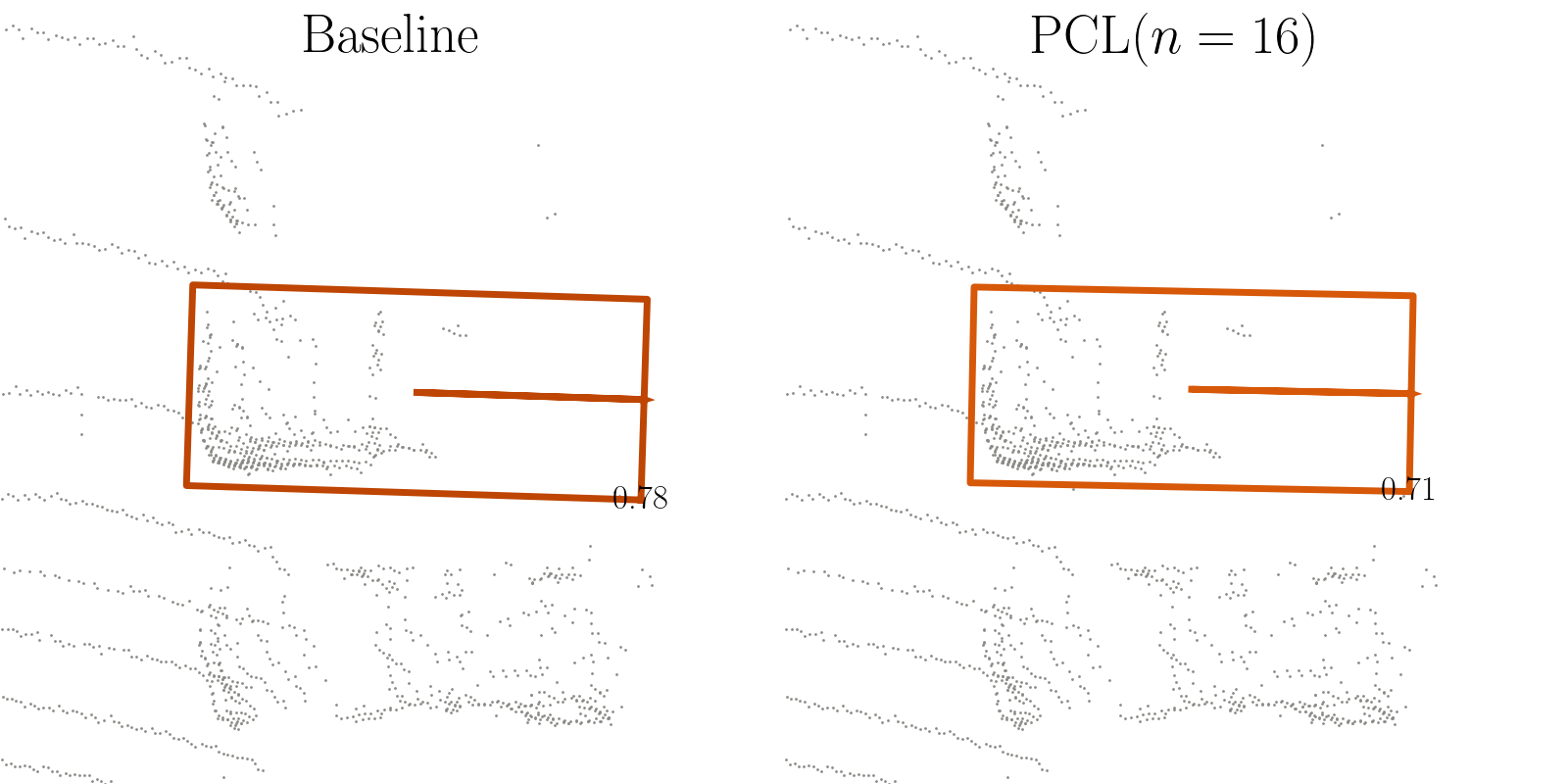}
    \includegraphics[trim={0pt 0pt 0pt 0pt},clip, width=0.23\textwidth]{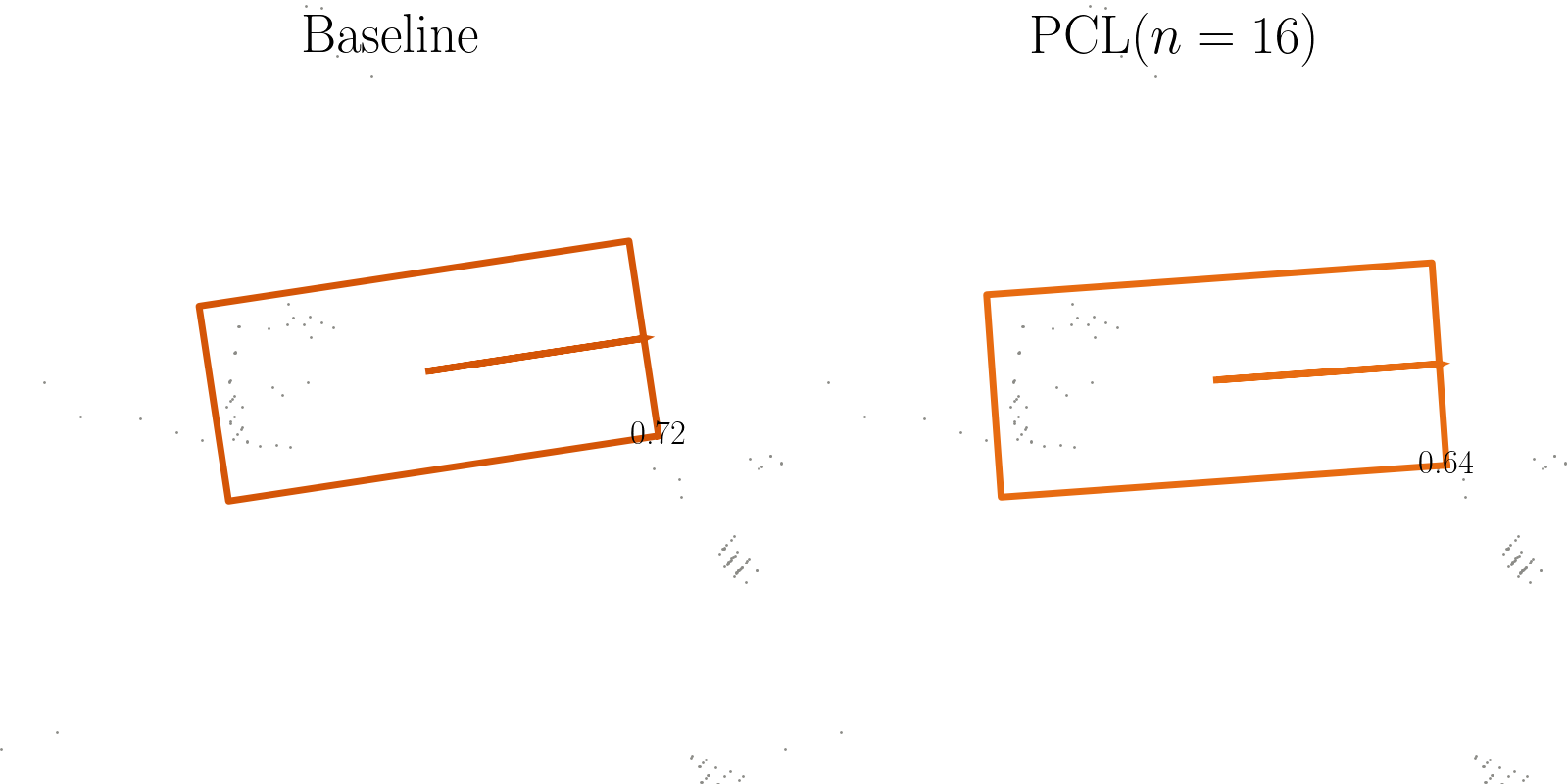}
    \includegraphics[trim={0pt 0pt 0pt 0pt},clip, width=0.23\textwidth]{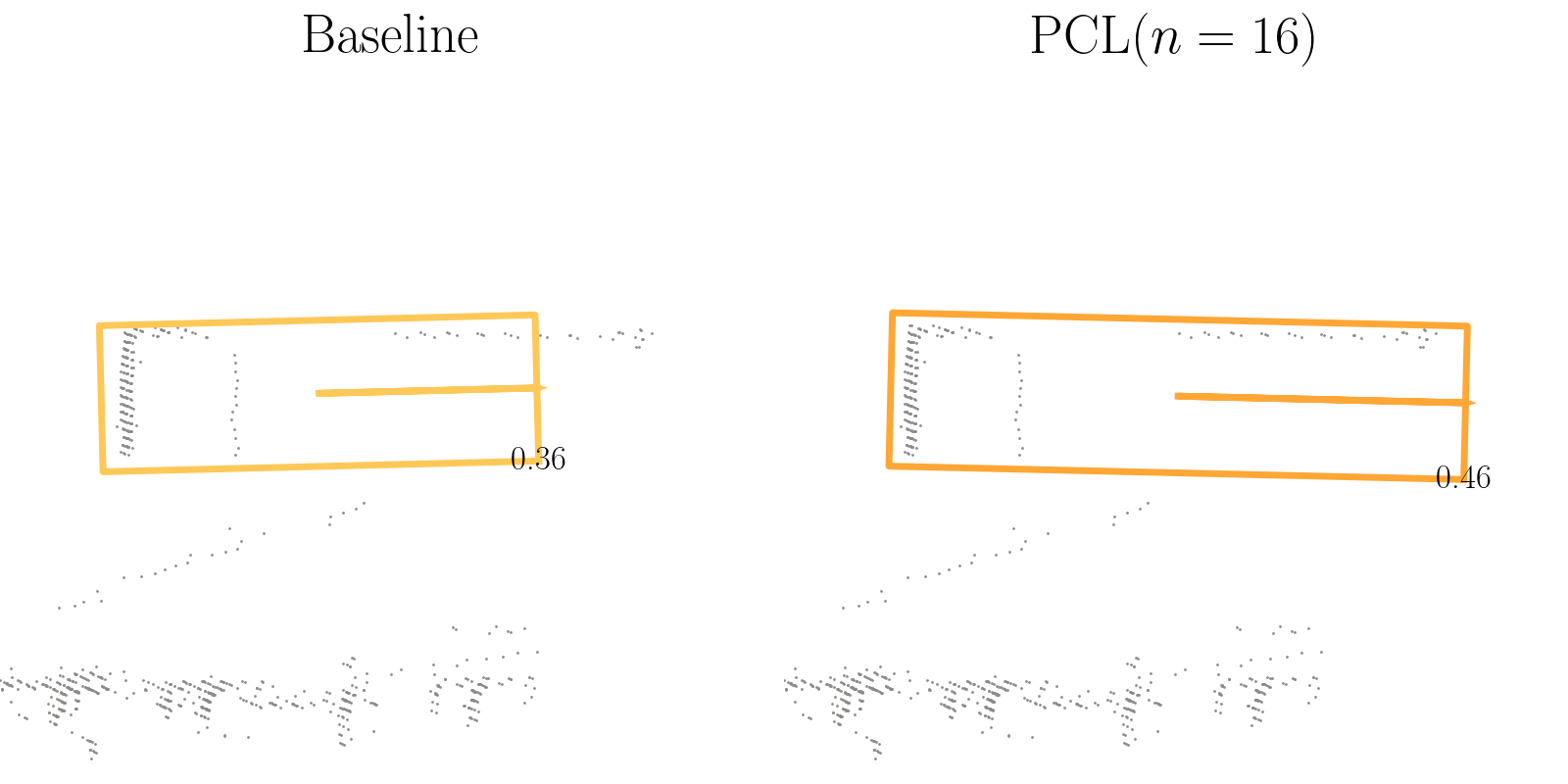}
    \includegraphics[trim={0pt 0pt 0pt 0pt},clip, width=0.23\textwidth]{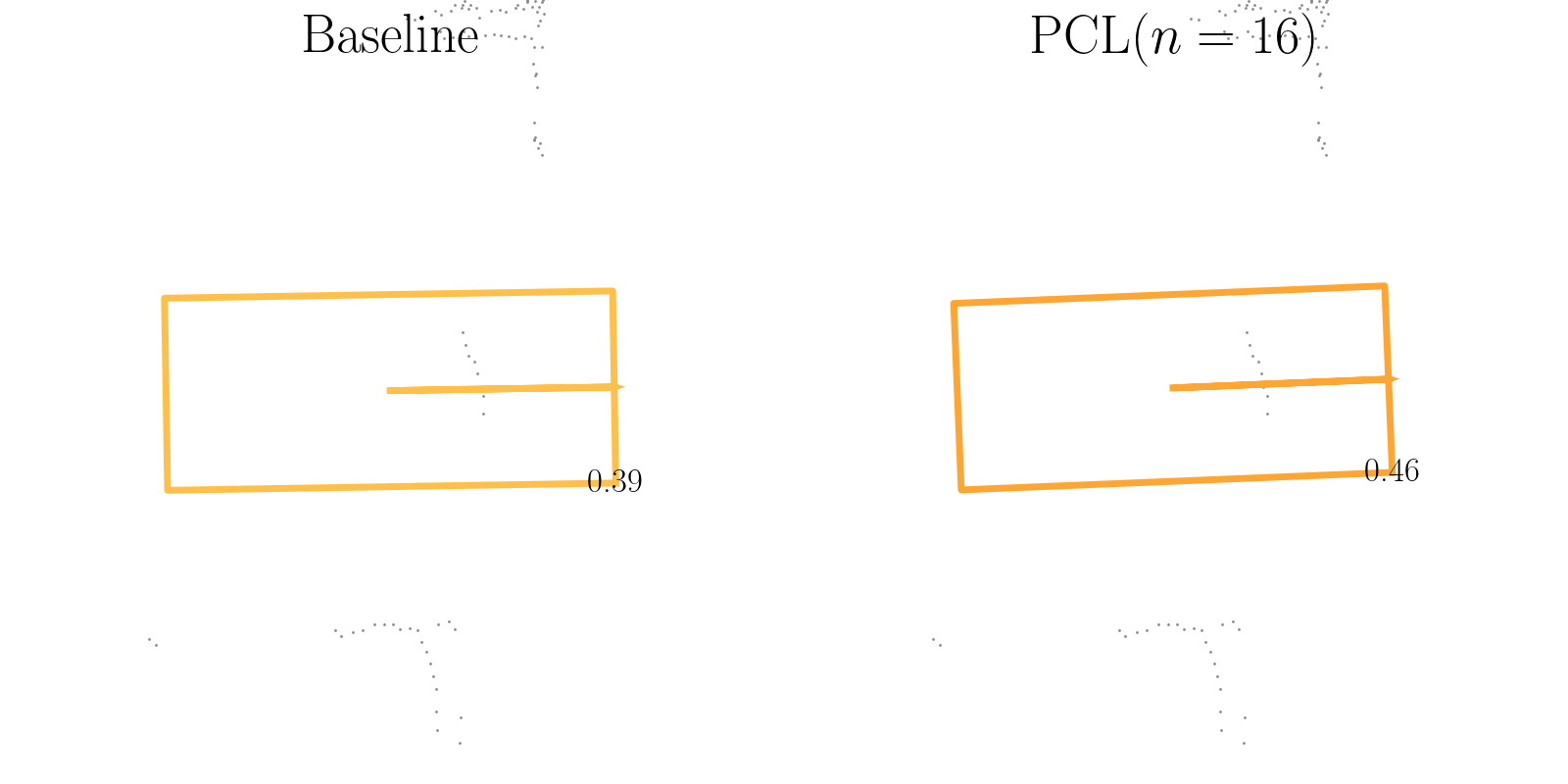}

    \includegraphics[trim={0pt 0pt 0pt 0pt},clip, width=0.23\textwidth]{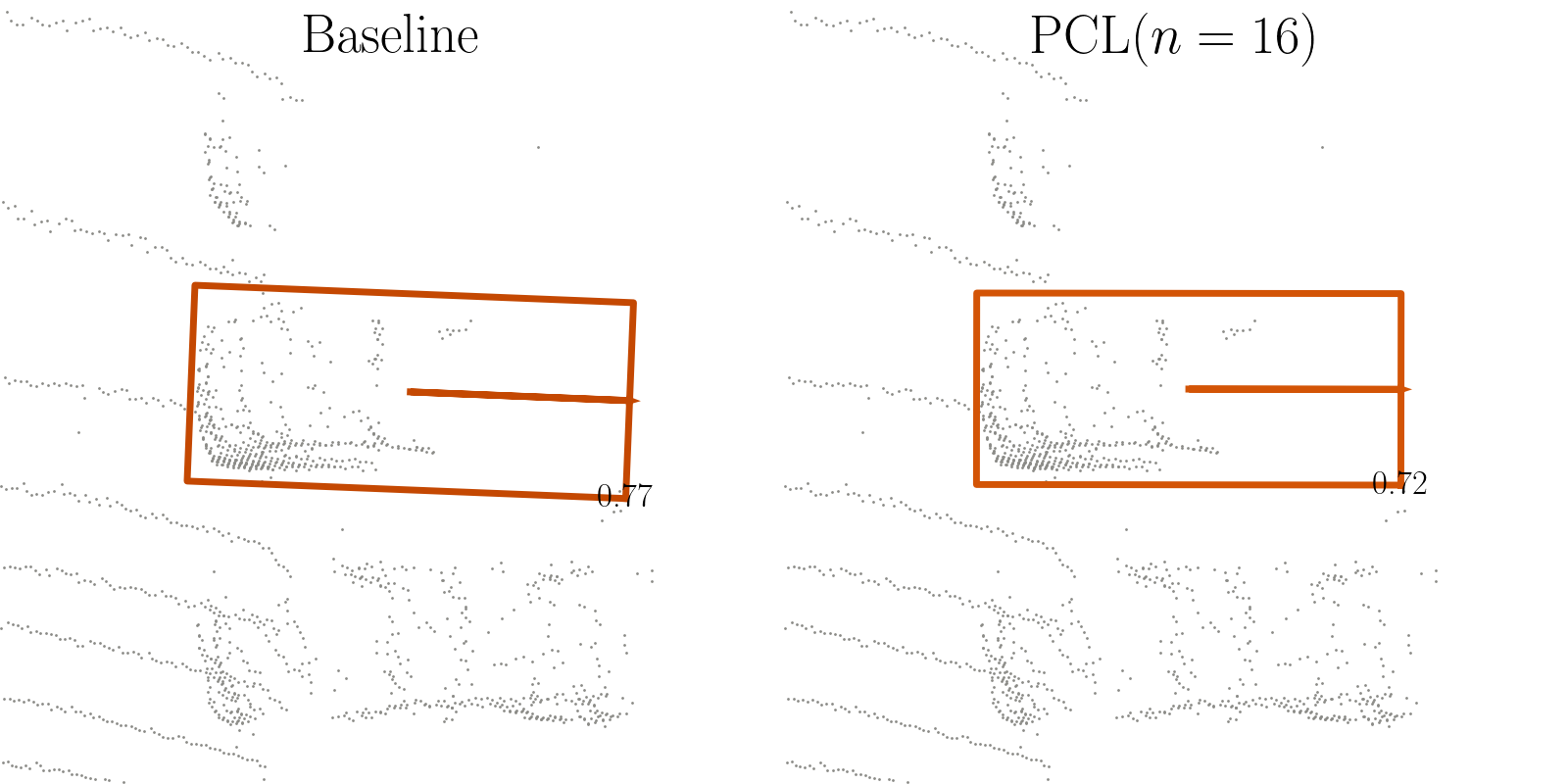}
    \includegraphics[trim={0pt 0pt 0pt 0pt},clip, width=0.23\textwidth]{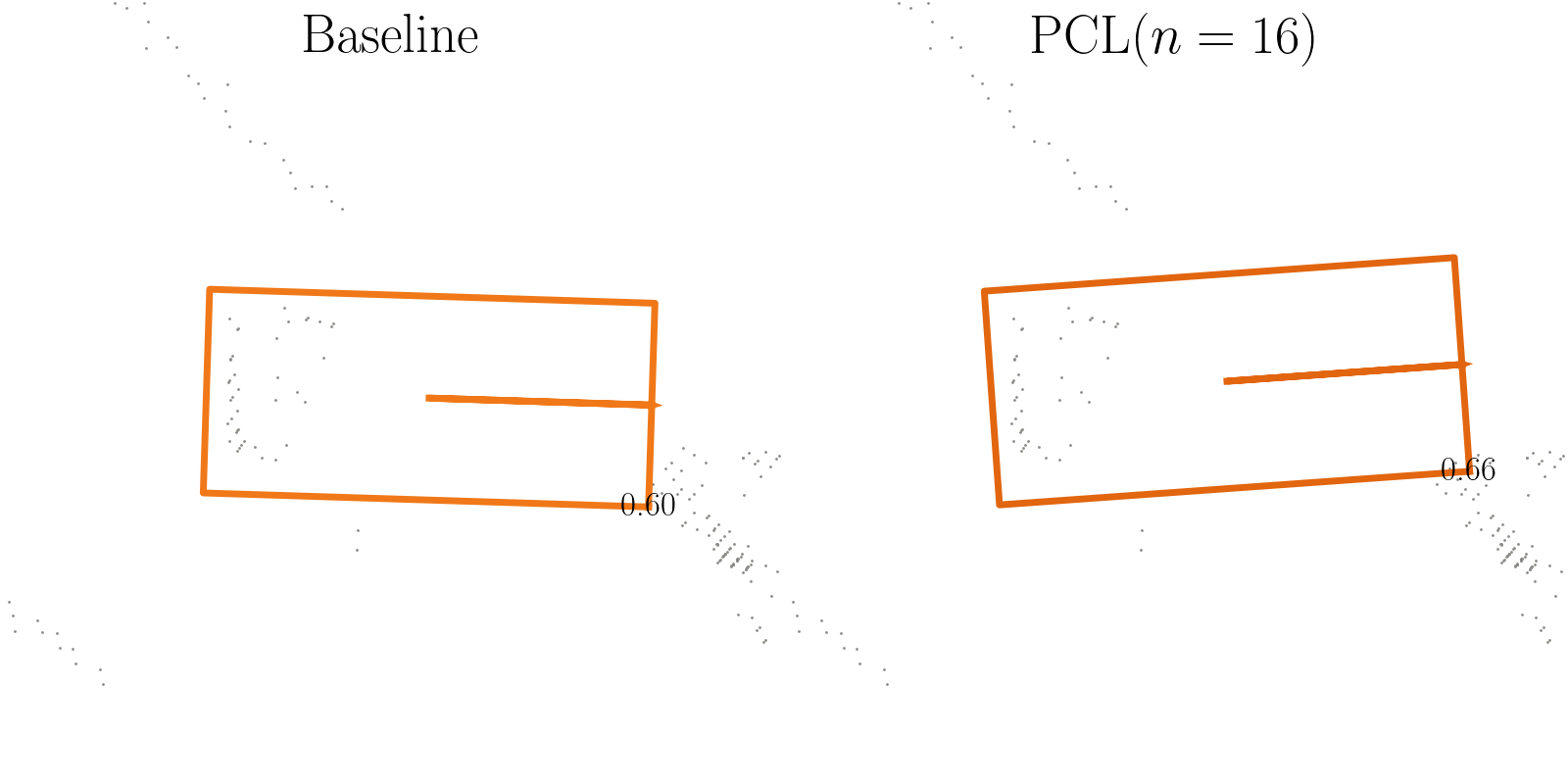}
    \includegraphics[trim={0pt 0pt 0pt 0pt},clip, width=0.23\textwidth]{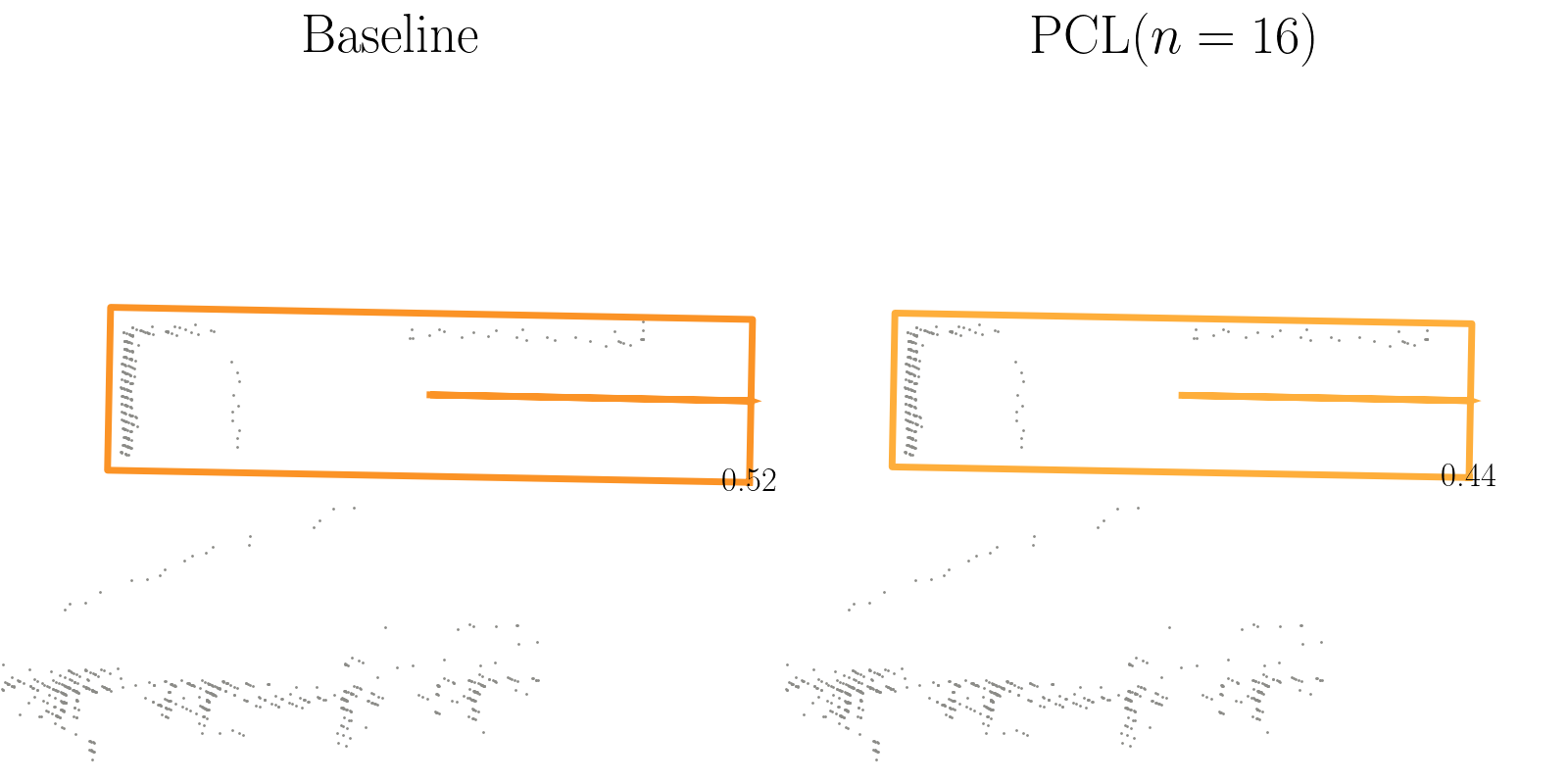}
    \includegraphics[trim={0pt 0pt 0pt 0pt},clip, width=0.23\textwidth]{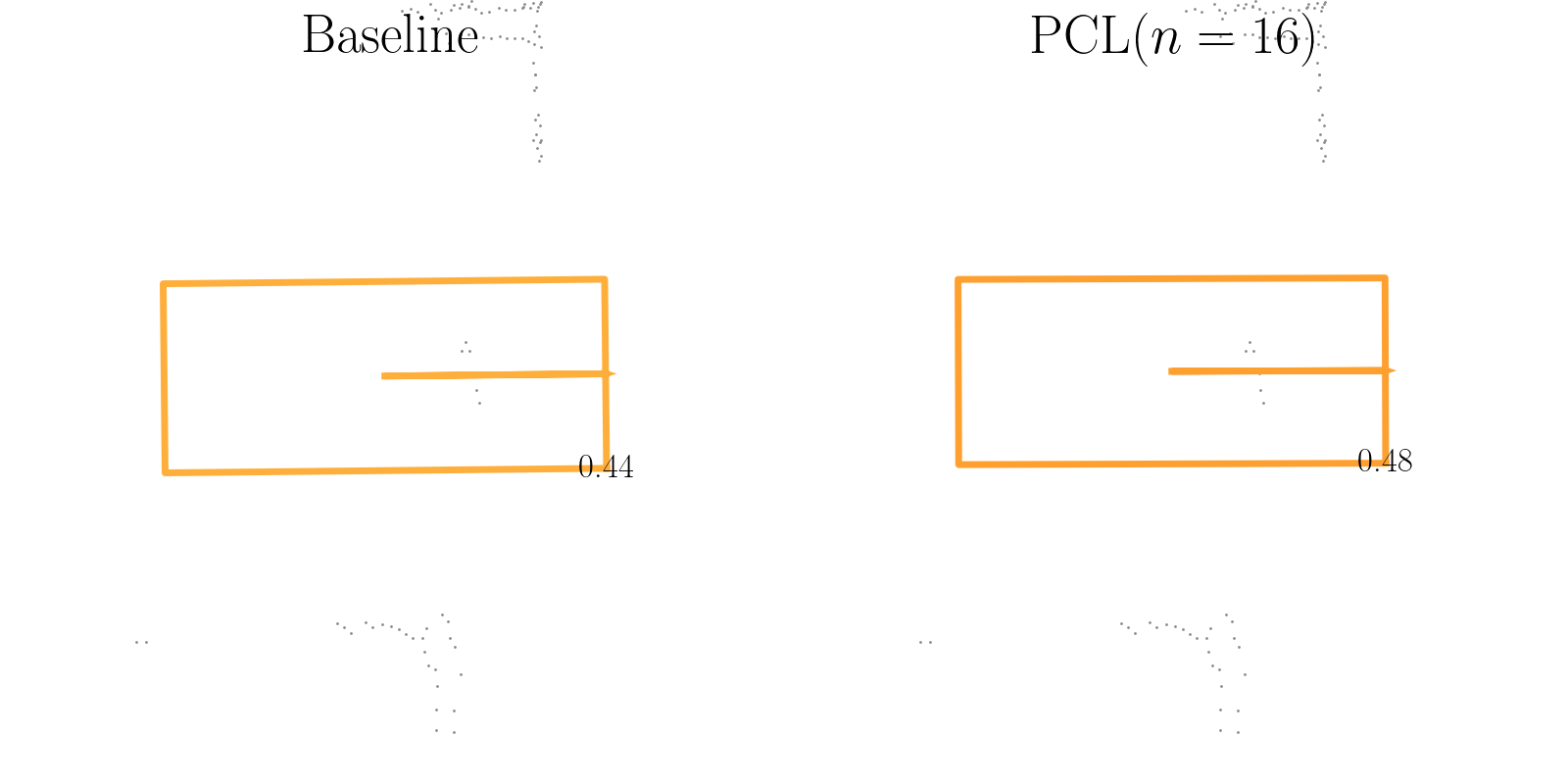}

    \includegraphics[trim={0pt 0pt 0pt 0pt},clip, width=0.23\textwidth]{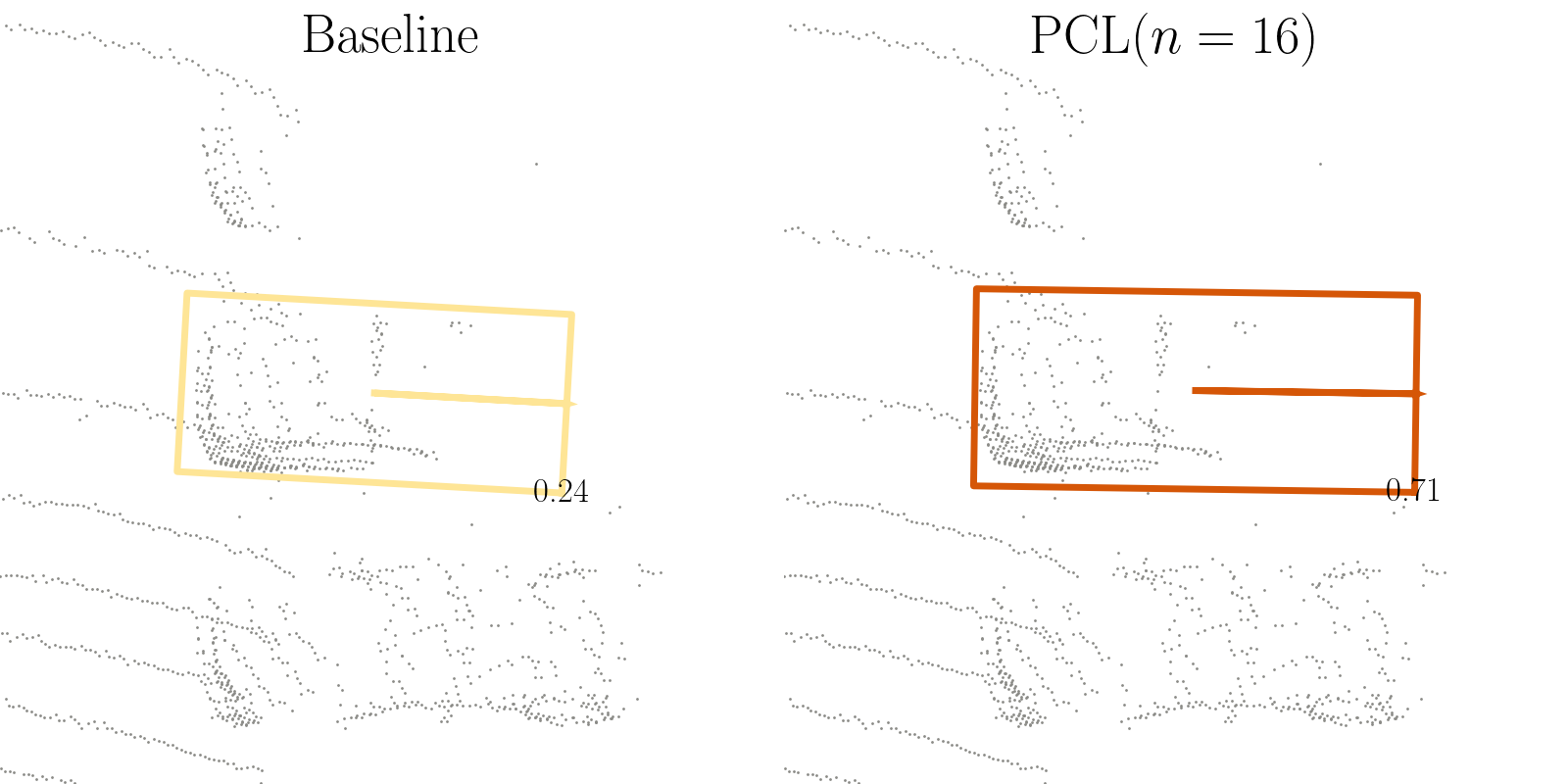}
    \includegraphics[trim={0pt 0pt 0pt 0pt},clip, width=0.23\textwidth]{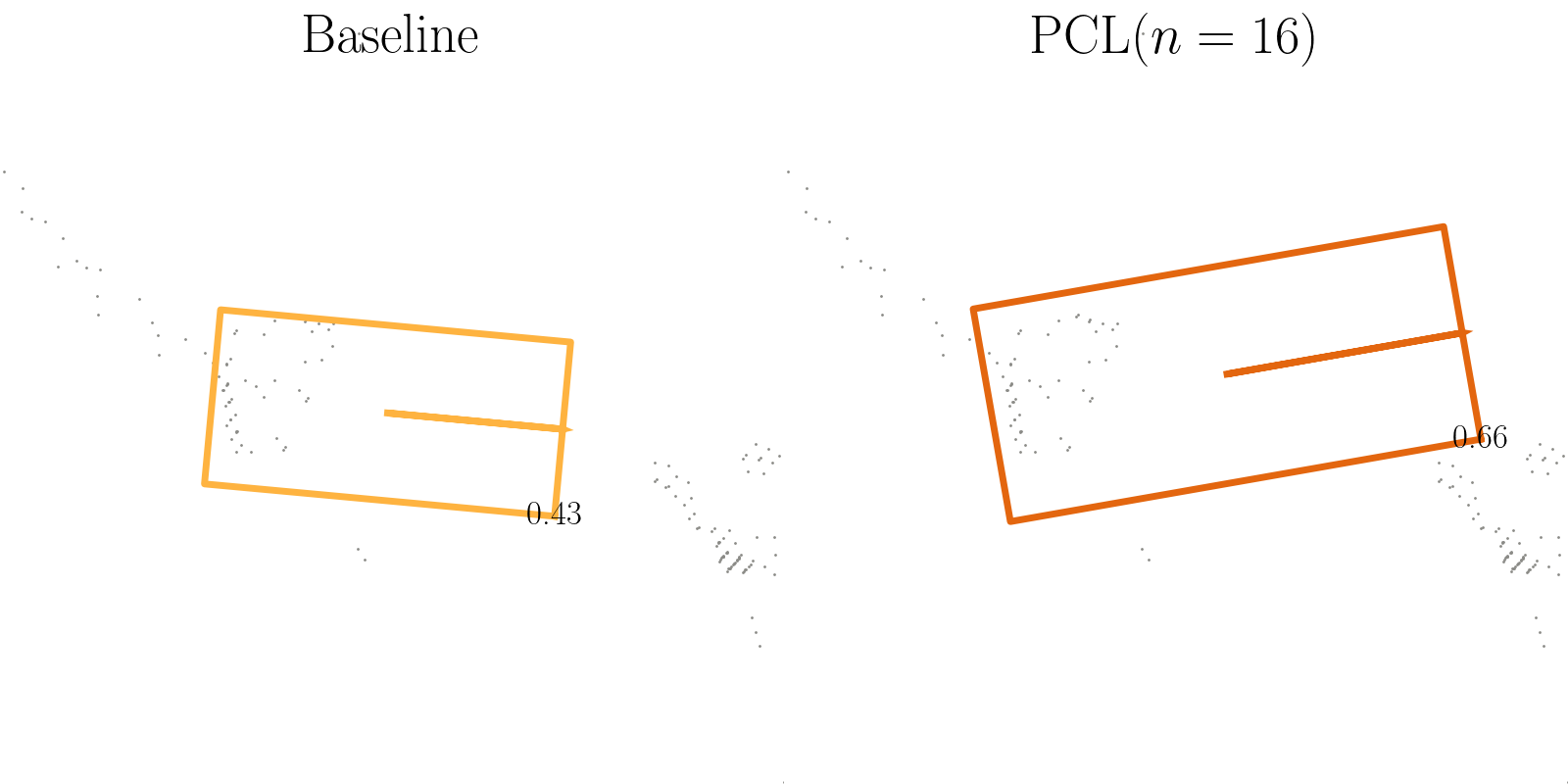}
    \includegraphics[trim={0pt 0pt 0pt 0pt},clip, width=0.23\textwidth]{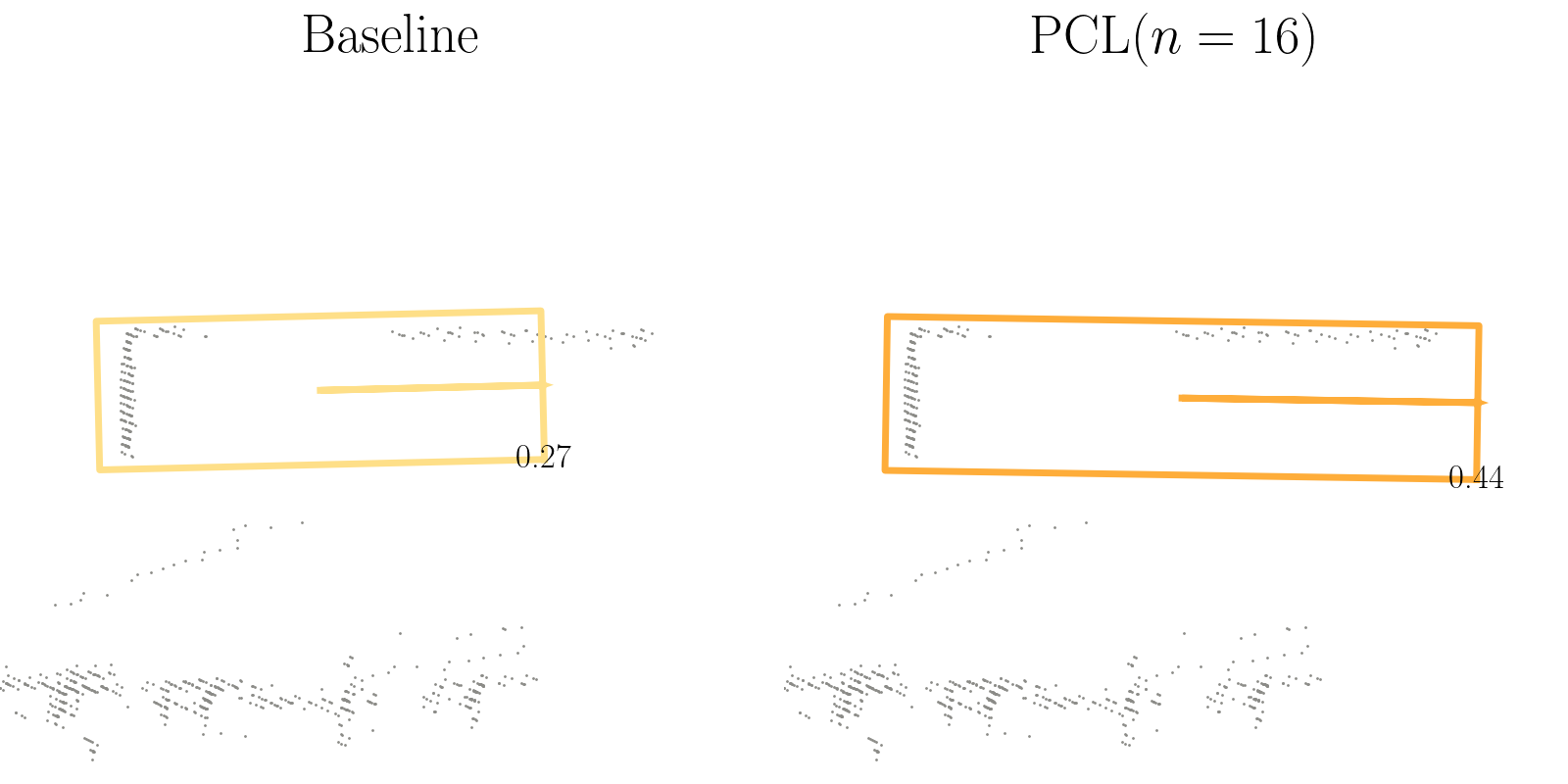}
    \includegraphics[trim={0pt 0pt 0pt 0pt},clip, width=0.23\textwidth]{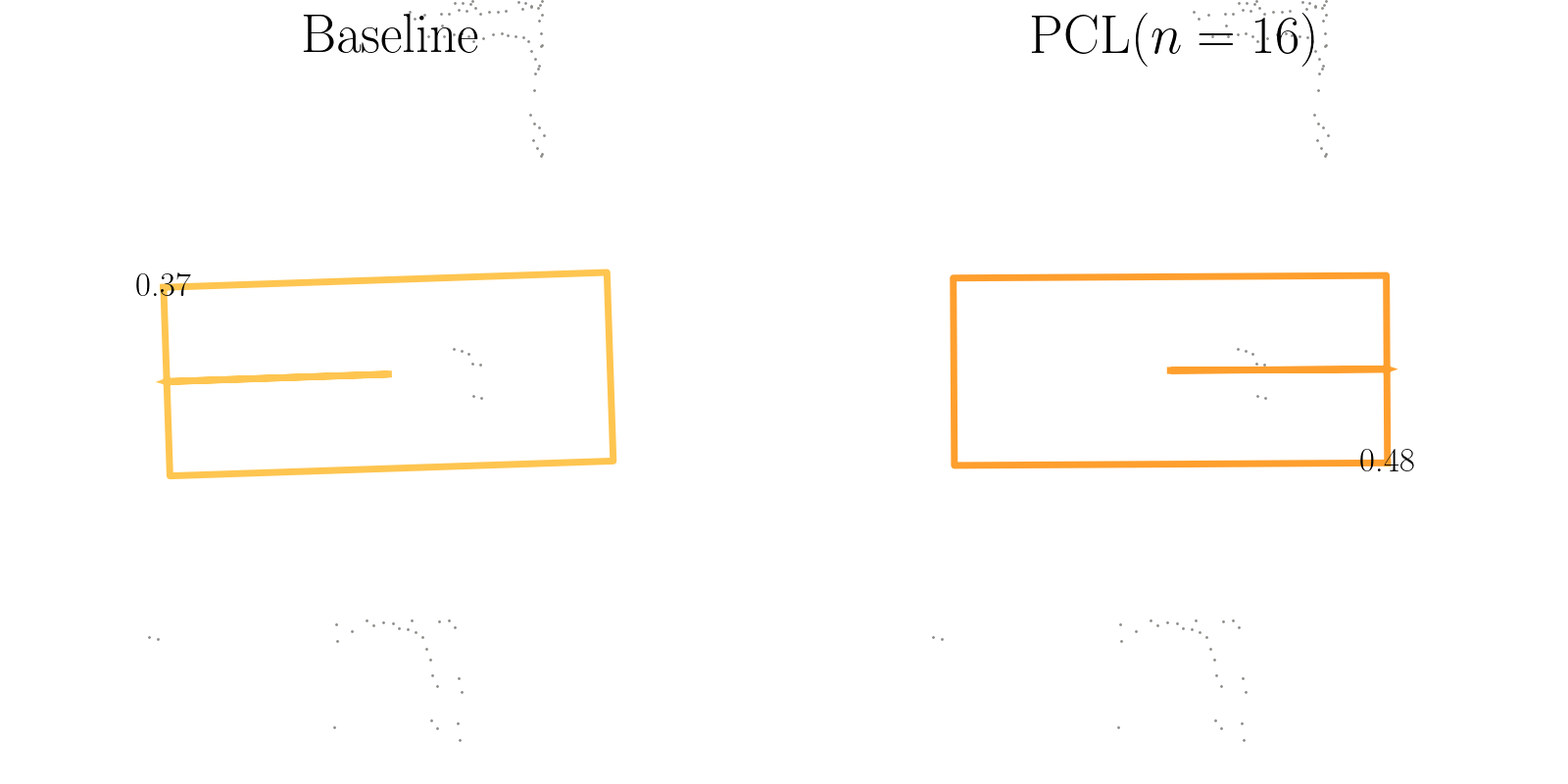}

    \includegraphics[trim={0pt 0pt 0pt 0pt},clip, width=0.23\textwidth]{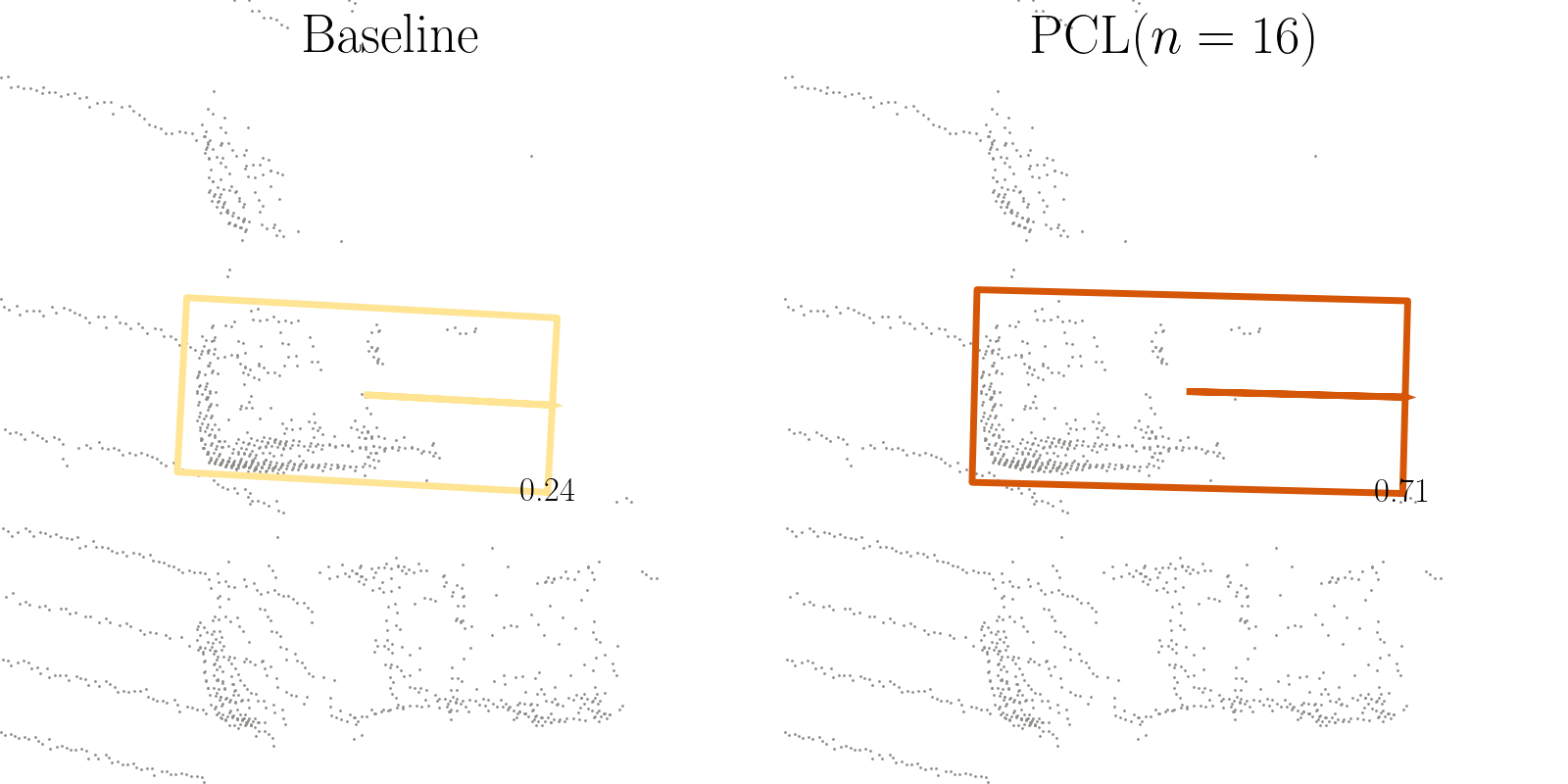}
    \includegraphics[trim={0pt 0pt 0pt 0pt},clip, width=0.23\textwidth]{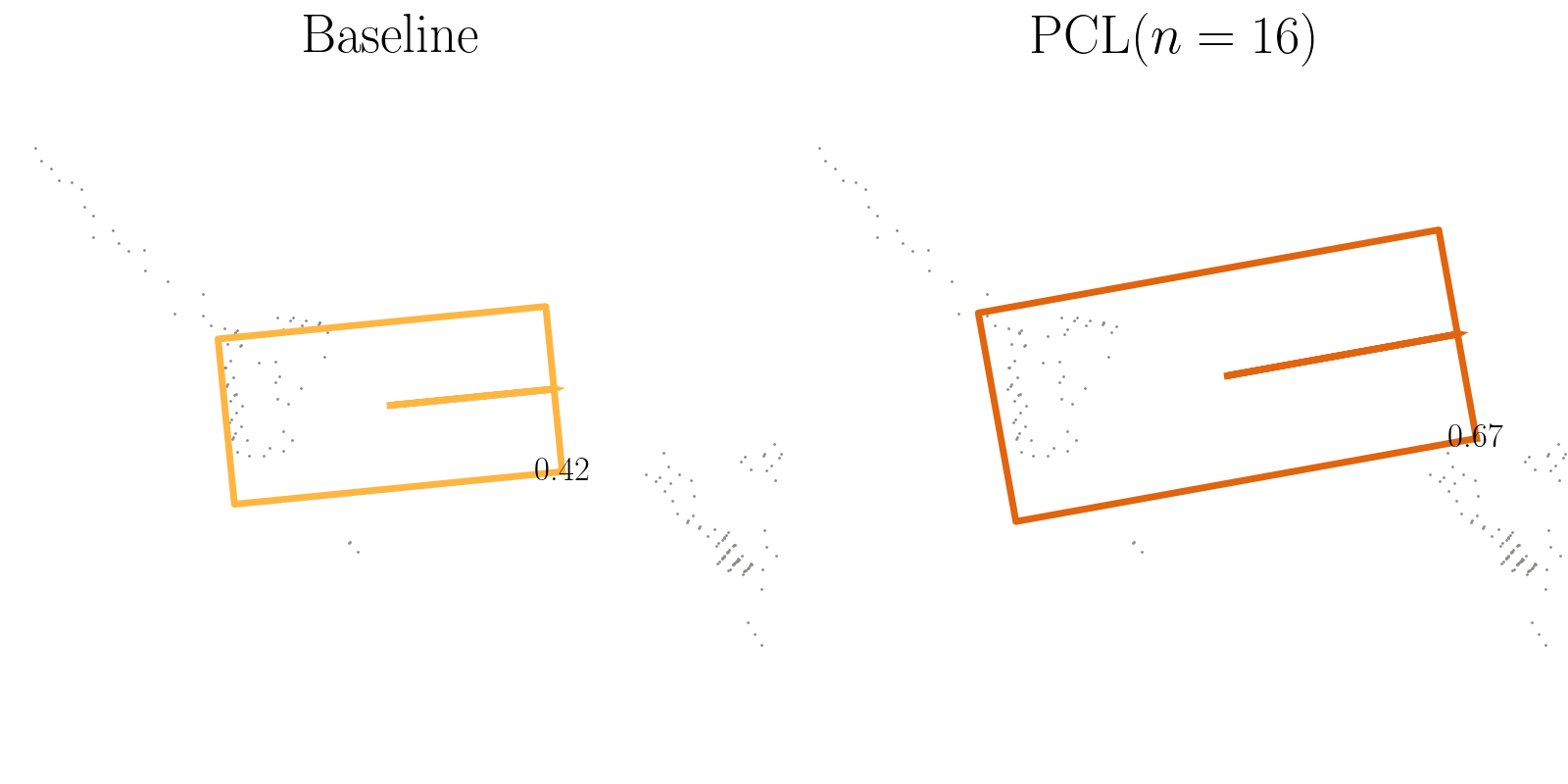}
    \includegraphics[trim={0pt 0pt 0pt 0pt},clip, width=0.23\textwidth]{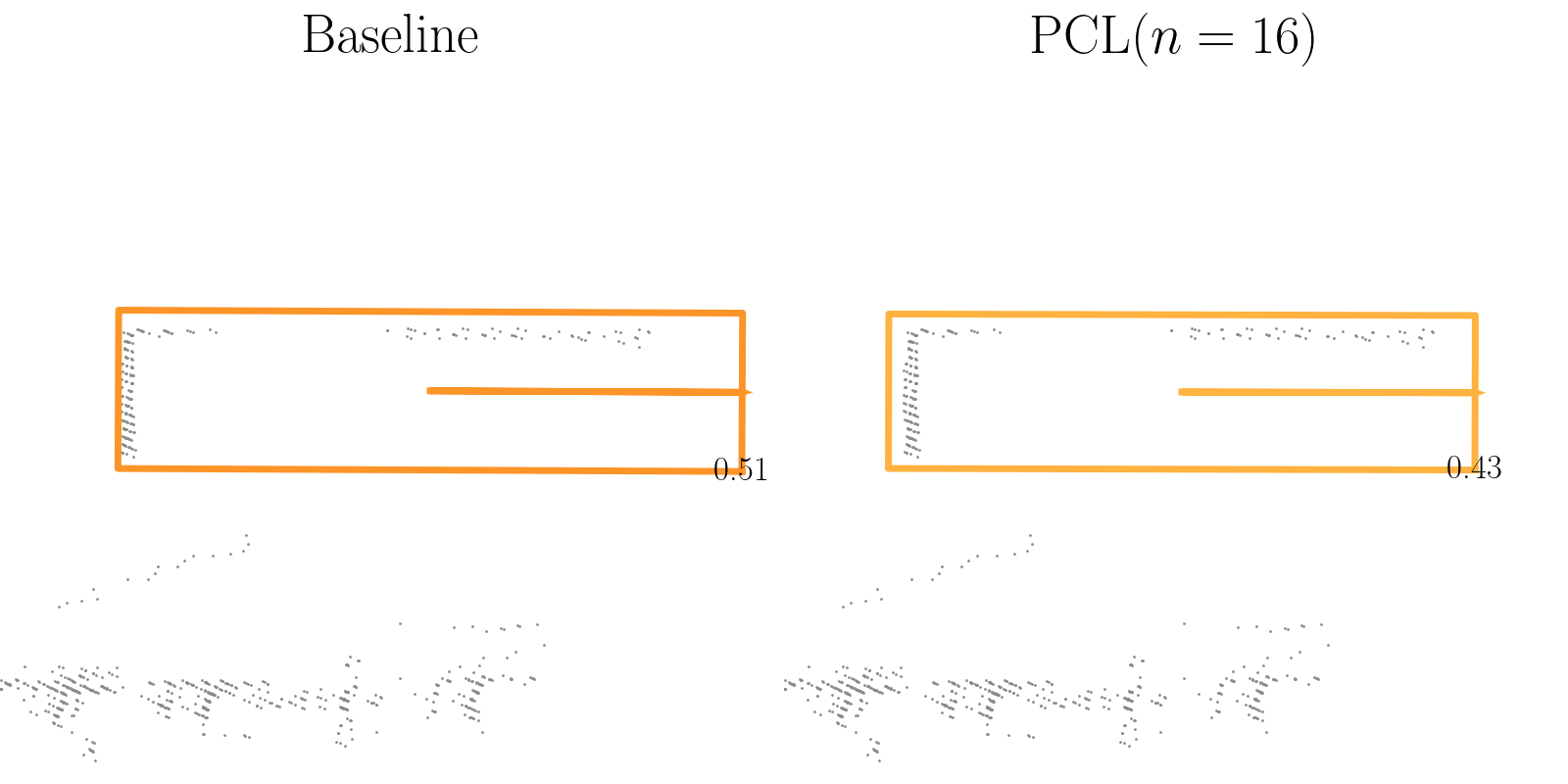}
    \includegraphics[trim={0pt 0pt 0pt 0pt},clip, width=0.23\textwidth]{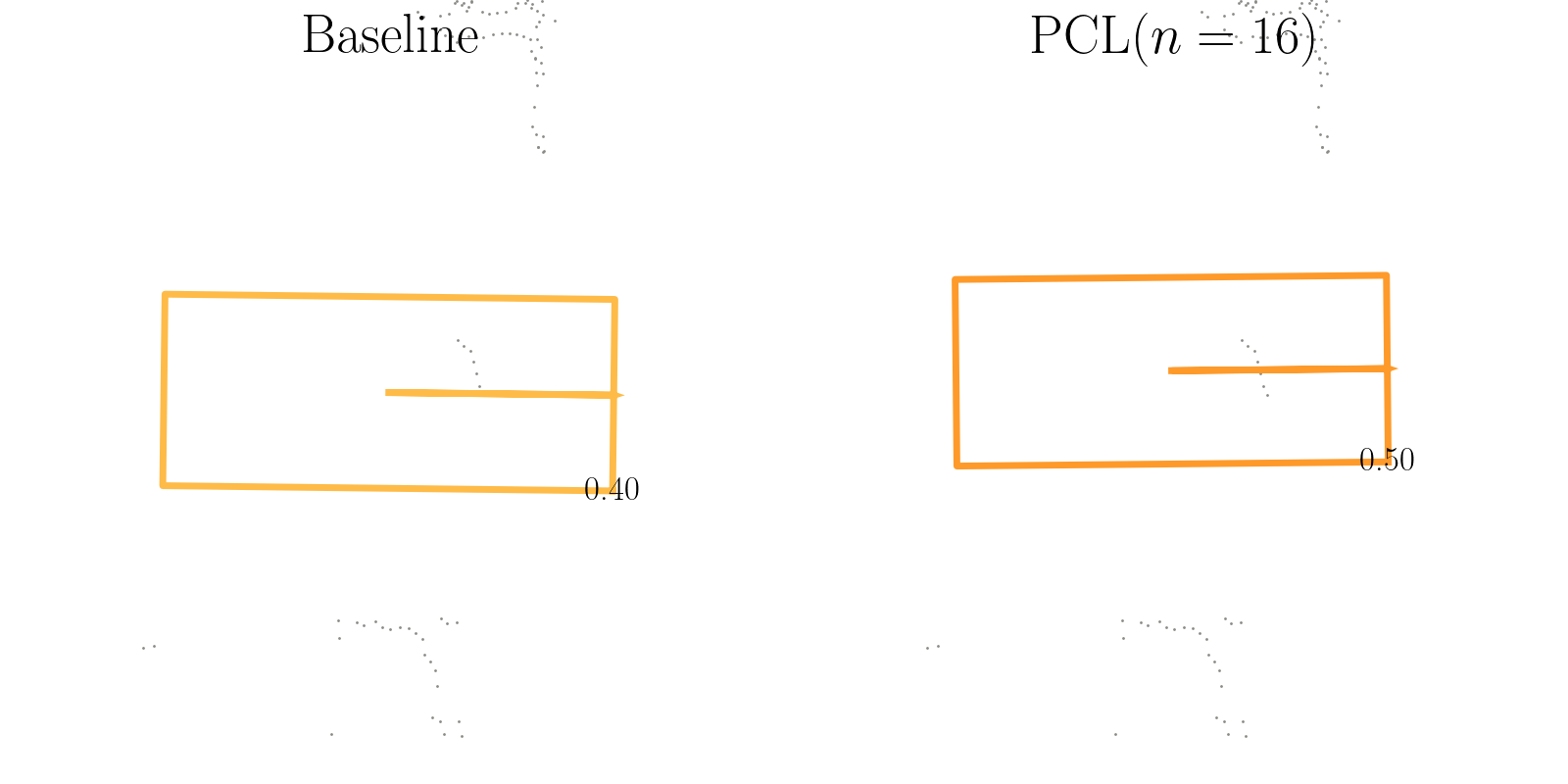}

    \includegraphics[trim={0pt 0pt 0pt 0pt},clip, width=0.23\textwidth]{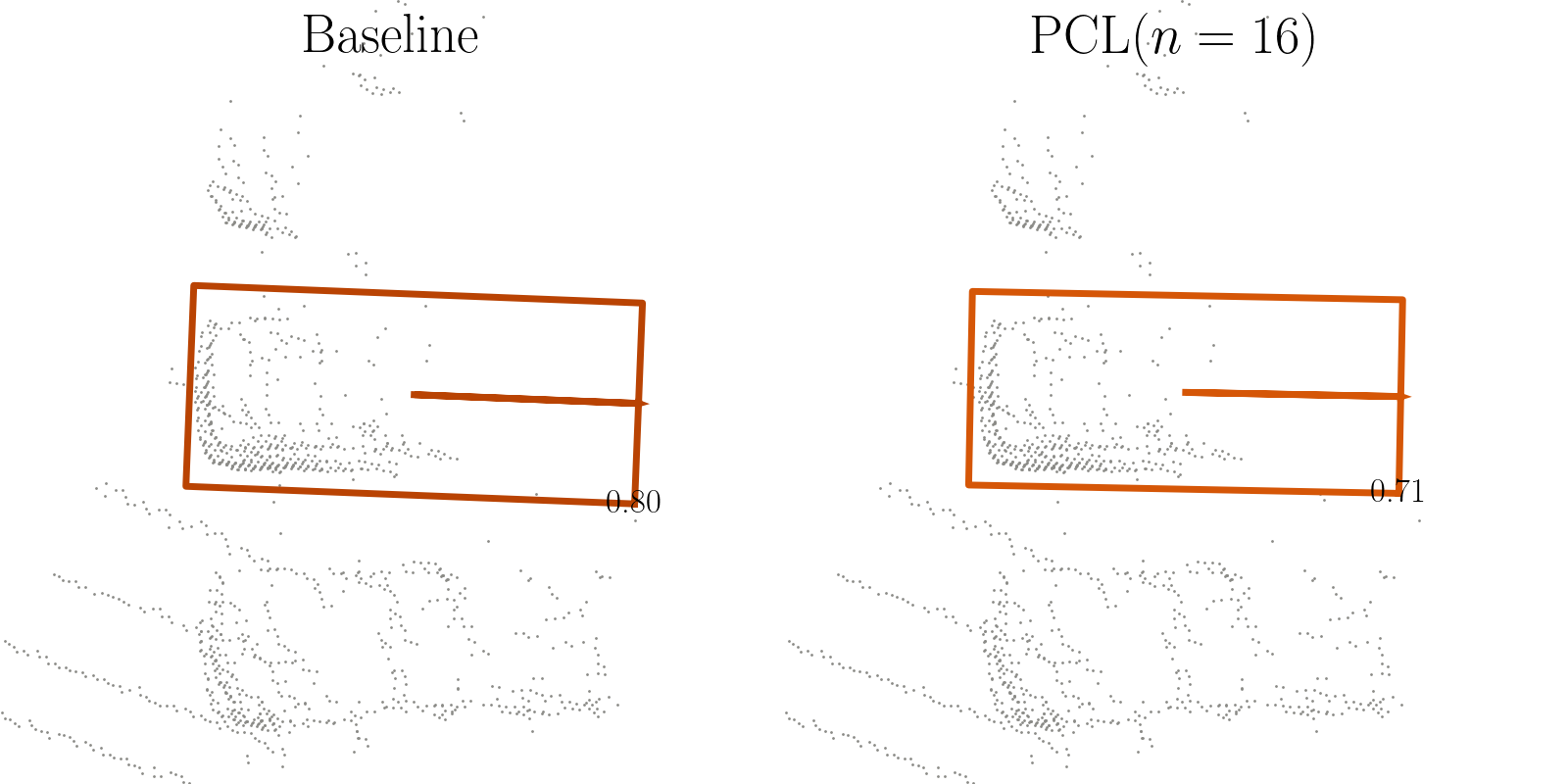}
    \includegraphics[trim={0pt 0pt 0pt 0pt},clip, width=0.23\textwidth]{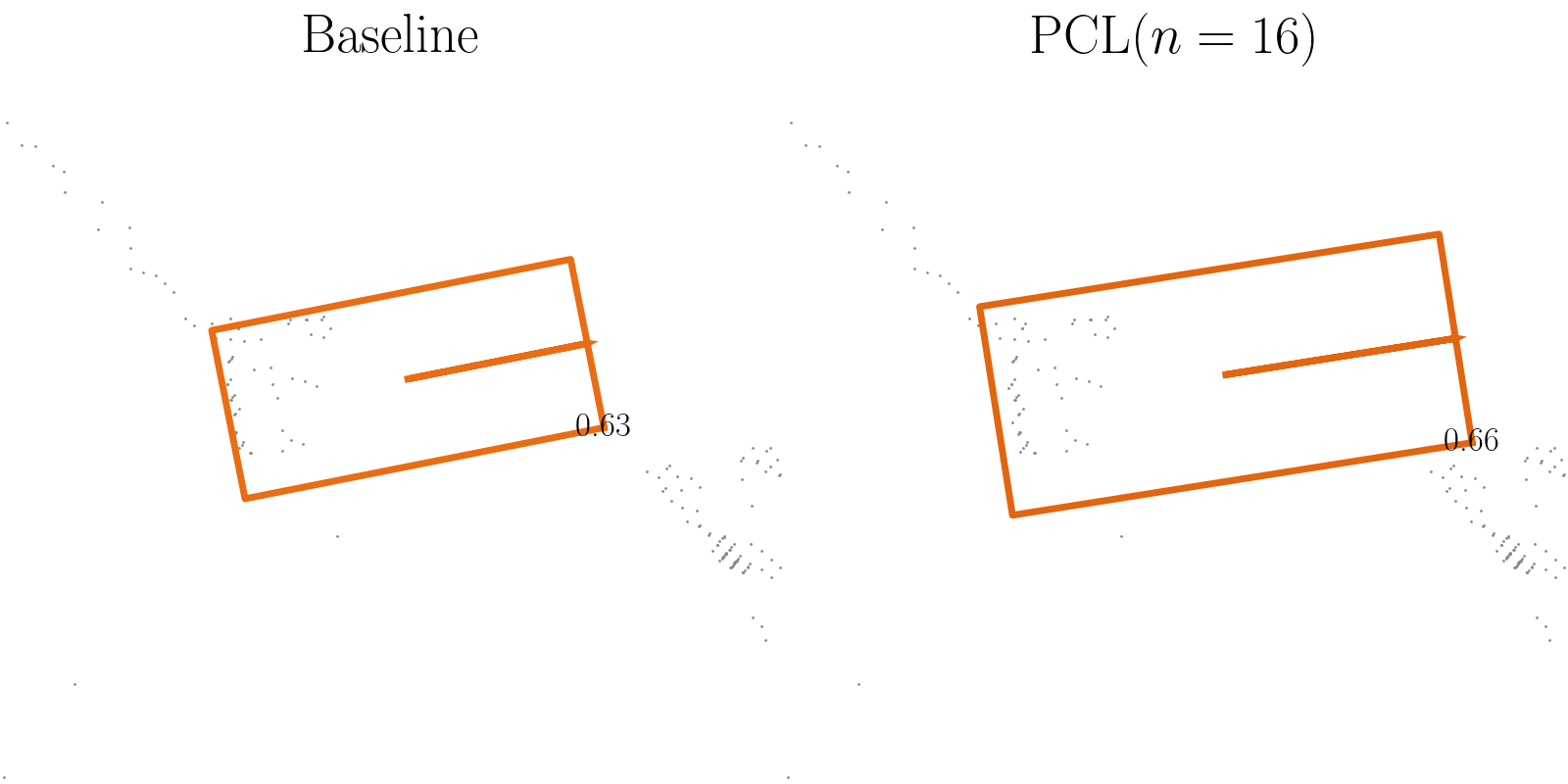}
    \includegraphics[trim={0pt 0pt 0pt 0pt},clip, width=0.23\textwidth]{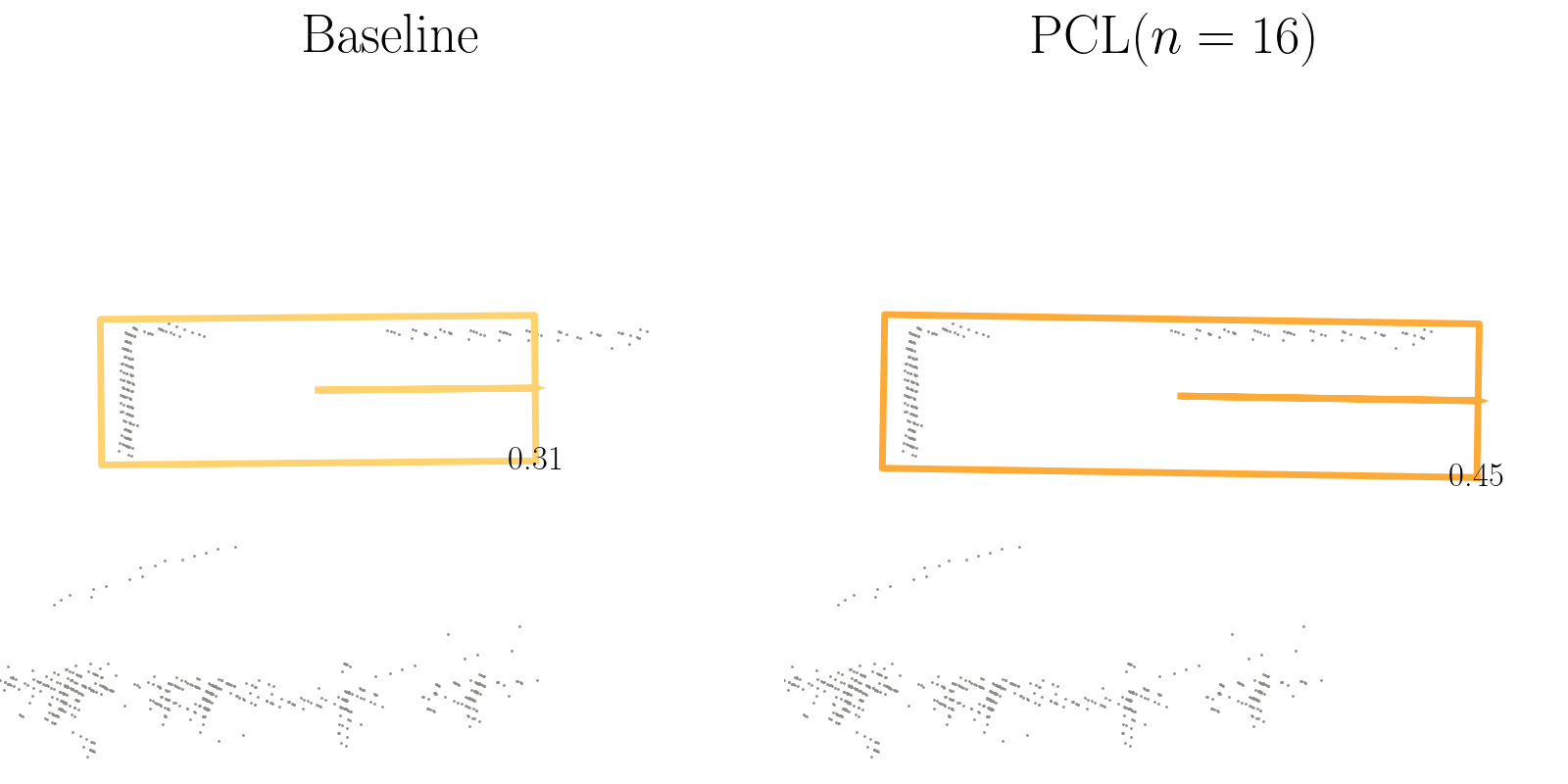}
    \includegraphics[trim={0pt 0pt 0pt 0pt},clip, width=0.23\textwidth]{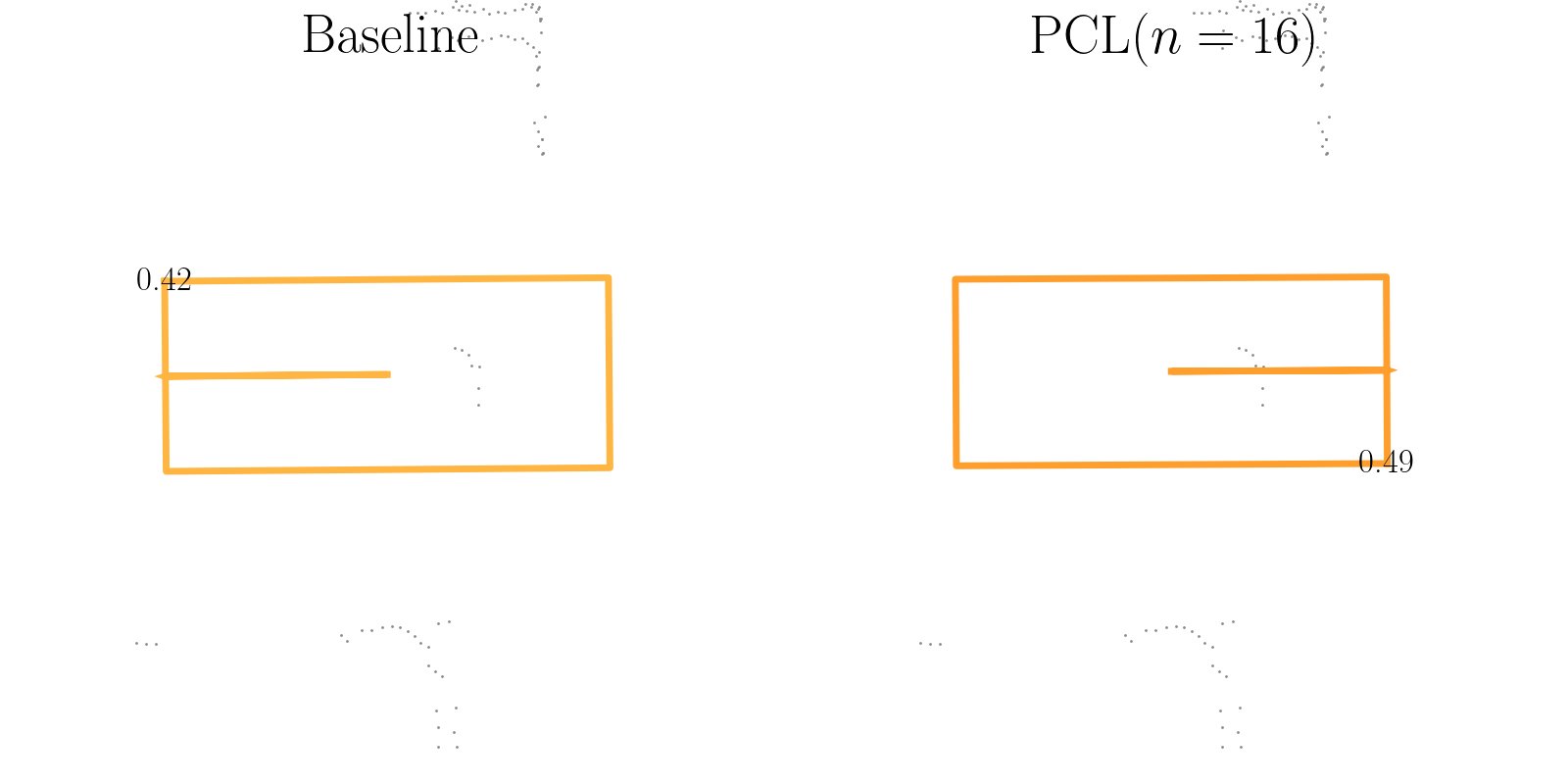}

    \includegraphics[trim={0pt 0pt 0pt 0pt},clip, width=0.23\textwidth]{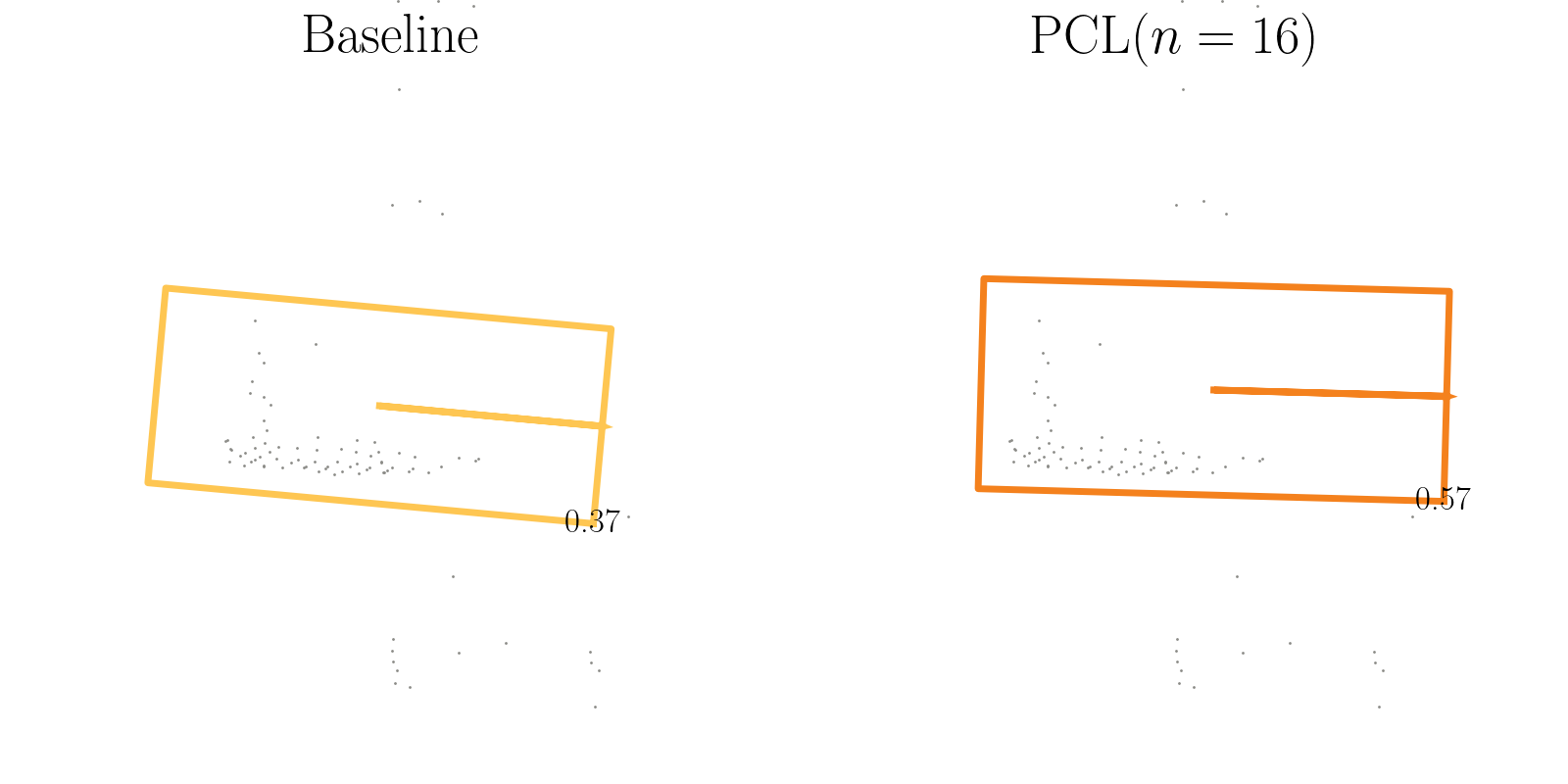}
    \includegraphics[trim={0pt 0pt 0pt 0pt},clip, width=0.23\textwidth]{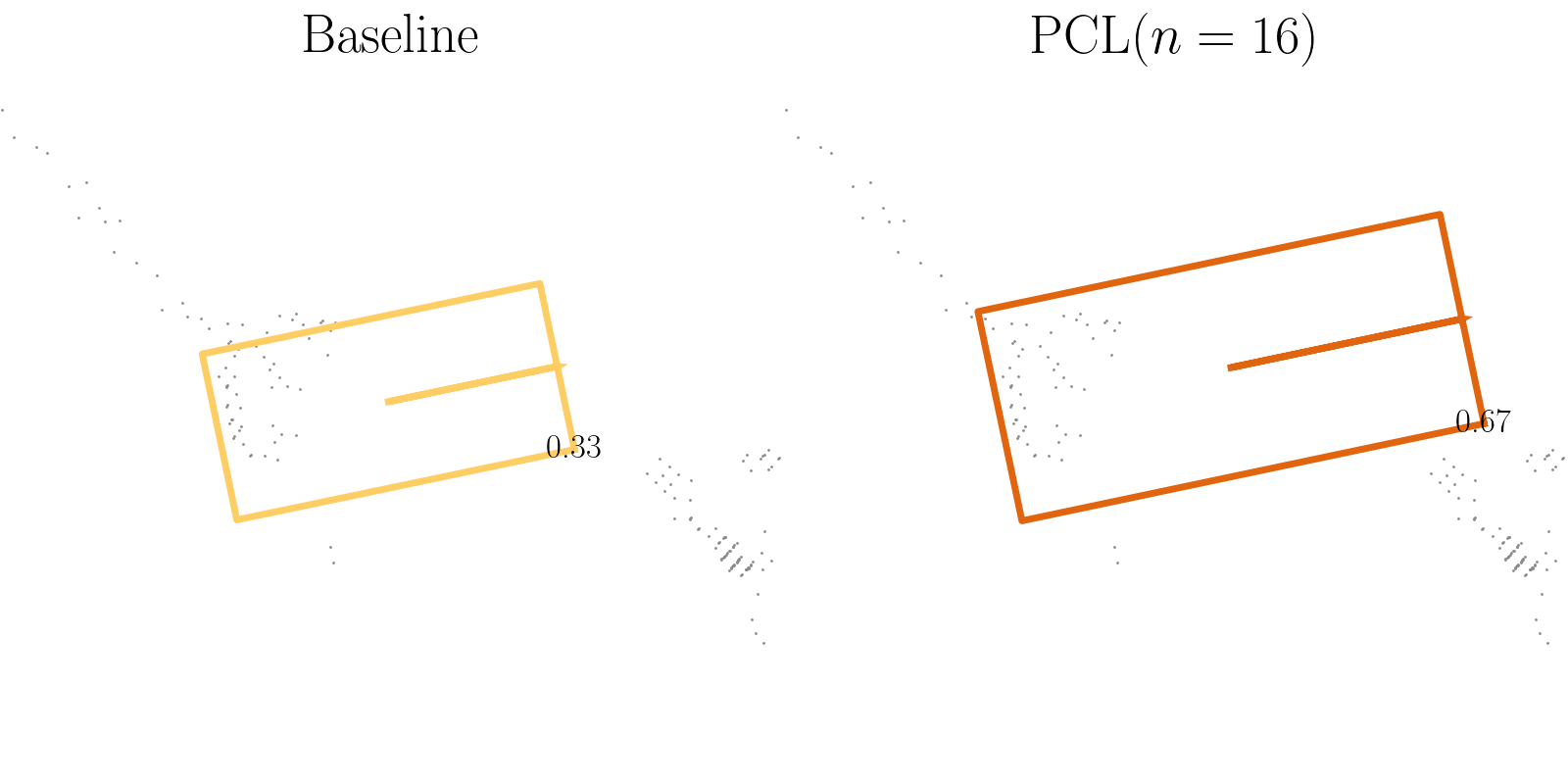}
    \includegraphics[trim={0pt 0pt 0pt 0pt},clip, width=0.23\textwidth]{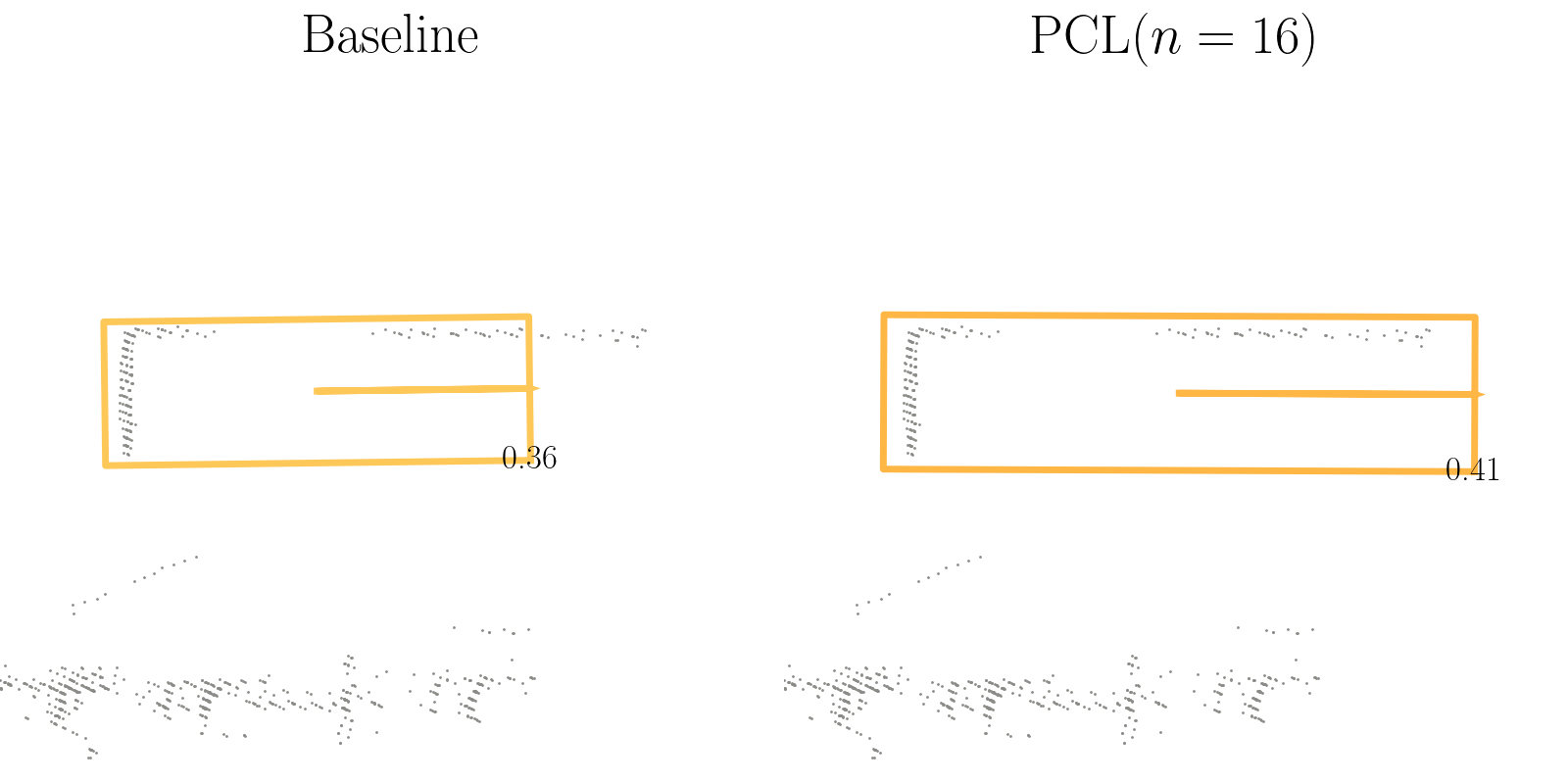}
    \includegraphics[trim={0pt 0pt 0pt 0pt},clip, width=0.23\textwidth]{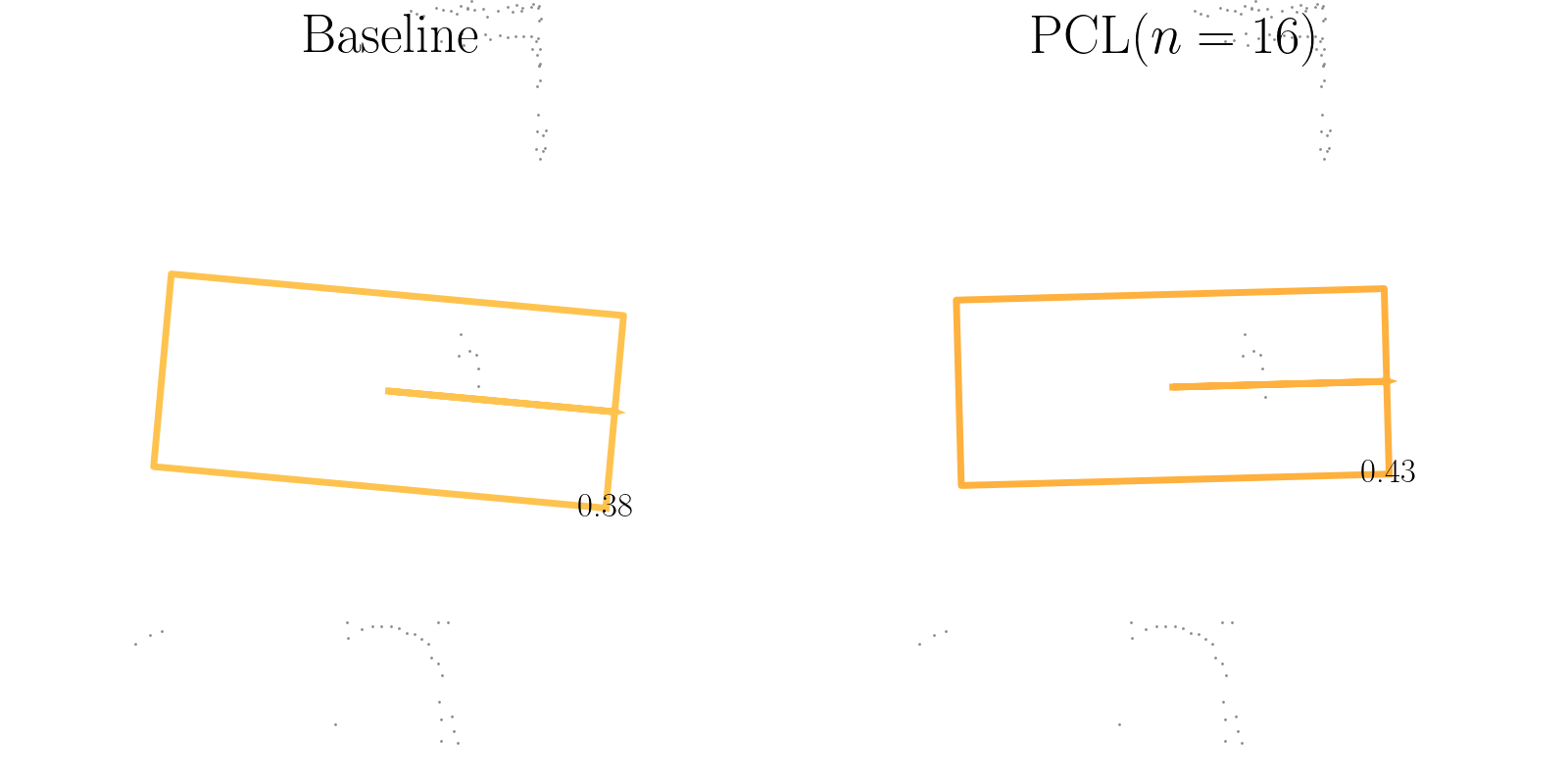}

    \caption{
        Comparisons between the detections predicted from the CetnerPoint trained with the baseline and PCL strategies.
        }
    \label{fig:comparison}
  \end{figure}
 

To compare the detection stability of the baseline and PCL models more intuitively, we visualize a series of predictions across consecutive frames in \cref{fig:comparison}.
As shown in \cref{fig:comparison}, the detections predicted from the PCL model have less fluctuation than those from the baseline model in all aspects, which further demonstrates the effectiveness of PCL in enhancing the model stability.

\clearpage  

%
%
\bibliographystyle{splncs04}
\bibliography{eccv}
\end{document}